\documentclass[letterpaper,onecolumn,draftcls]{IEEEtran}

\usepackage[T1]{fontenc}
\usepackage[utf8]{inputenc}
\usepackage{amsmath,amssymb,amsfonts,amsthm,bm}
\usepackage{algorithm}
\usepackage{algpseudocode}
\usepackage{booktabs}
\usepackage{caption}
\usepackage{cite}
\usepackage{ushort}  
\usepackage{enumitem}
\usepackage{tikz}
\usepackage{import}
\usepackage{graphicx}
\usepackage{microtype}
\usepackage{mathrsfs}
\usepackage{nicefrac}
\usepackage{subcaption}
\usepackage{url}
\usepackage{xcolor}
\usepackage{hyperref}
\usepackage{yhmath}

\usetikzlibrary{arrows.meta, positioning, shapes.geometric}

\theoremstyle{plain}
\newtheorem{theorem}{Theorem}
\newtheorem{lemma}{Lemma}

\newtheorem{corollary}{Corollary}

\theoremstyle{definition}
\newtheorem{definition}{Definition}
\newtheorem{remark}{Remark}

\newcommand{\dmsmfunc}{\mathcal{F}_{\mathrm{sm}}^{d\to m}}

\newcommand{\smop}[3]{s_{#1,#2}[#3]}

\newcommand{\ldecadv}{\ldec^{\mathrm{adv}}}
\newcommand{\ldecgen}{\ldec^{\mathrm{gen}}}
\newcommand{\objadv}{J_\textrm{adv}}
\newcommand{\gcc}{\texttt{GCC}}
\newcommand{\enc}{u_{\textrm{enc}}}
\newcommand{\dec}{u_{\textrm{dec}}}
\newcommand{\encstar}{u_{\textrm{enc}}^{*}}
\newcommand{\decstar}{u_{\textrm{dec}}^{*}}
\newcommand{\encdagger}{u_{\textrm{enc}}^{\dagger}}
\newcommand{\decclean}{u_{\textrm{dec}}^{\circ}}
\newcommand{\enclamb}{\lambda_{\textrm{e}}}
\newcommand{\declamb}{\lambda_{\textrm{d}}}
\newcommand{\lenc}{\mathcal{L}_{\mathrm{enc}}}
\newcommand{\ldec}{\mathcal{L}_{\mathrm{dec}}}
\newcommand{\Radv}{\mathcal{L}_{q}}
\newcommand{\Aadv}{\mathcal{A}_{q}}
\newcommand{\fhat}{\widehat{f}}
\newcommand{\norm}[1]{\lVert #1\rVert}
\newcommand{\abs}[1]{\lvert #1\rvert}

\newcommand{\Leb}{\operatorname{Leb}}
\newcommand{\op}{\mathrm{op}}

\newcommand\delmax[1]{\bar{\delta}_{#1}}
\newcommand\delmin[1]{\ushort{\delta}_{#1}}

\newcommand{\lp}[1]{{L}^{#1}\left(\Omega; \mathbb{R}\right)}
\newcommand{\lpm}[1]{{L}^{#1}\left(\Omega; \mathbb{R}^M\right)}
\newcommand{\lpd}[2]{{L}^{#1}\left(\Omega; \mathbb{R}^{#2}\right)}

\newcommand{\soblocm}[2]{\mathbb{W}_{\textrm{loc}}^{#1,#2}\left(\Omega; \mathbb{R}^M\right)}
\newcommand{\soblocmd}[3]{\mathbb{W}_{\textrm{loc}}^{#1,#2}\left(\Omega; \mathbb{R}^{#3}\right)}

\newcommand{\sobm}[2]{\mathbb{W}^{#1,#2}\left(\Omega; \mathbb{R}^M\right)}
\newcommand{\sobmd}[3]{\mathbb{W}^{#1,#2}\left(\Omega; \mathbb{R}^{#3}\right)}

\newcommand{\sobmeq}[2]{\widetilde{\mathbb{W}}^{#1,#2}\left(\Omega; \mathbb{R}^M\right)}

\newcommand{\hil}[1]{\mathcal{H}^{#1}\left(\Omega;\mathbb{R}\right)}
\newcommand{\hilz}[1]{\mathcal{H}_0^{#1}\left(\Omega;\mathbb{R}\right)}
\newcommand{\hilm}[1]{\mathcal{H}^{#1}\left(\Omega;\mathbb{R}^M\right)}
\newcommand{\hild}[2]{\mathcal{H}^{#1}\left(\Omega;\mathbb{R}^{#2}\right)}
\newcommand{\hilmz}[1]{\mathcal{H}_0^{#1}\left(\Omega;\mathbb{R}^M\right)}

\newcommand{\lec}[2]{\stackrel{\text{#1}}{#2}}

\begin{document}

\title{Learning-Theoretic Foundation for General Coded Computing: The Adversarial Setting}

\author{Parsa Moradi, Hanzaleh Akbarinodehi, and Mohammad Ali Maddah-Ali%
\thanks{P. Moradi, H. Akbarinodehi, and M. A. Maddah-Ali are with the University of Minnesota, Twin Cities, Minneapolis, MN, USA.}%
\thanks{The work has been partially supported by the National Science Foundation under Grant CCF-2348638.}%
\thanks{A preliminary version of this work was presented in ISIT 2025 conference.}
}

\maketitle

\begin{abstract}
Adversarial workers pose a severe challenge in distributed computing systems by returning corrupted outputs that can substantially distort the recovered computations. Coded computing addresses this threat by introducing redundancy across worker assignments. Existing exact and approximate coded-computing methods typically handle adversarial outputs by explicitly locating and correcting or removing erroneous evaluations, which adds significant computational overhead at the master node. Moreover, existing analyses do not characterize how the recovery error scales with the number of workers and the adversarial budget.

In this paper, we develop an adversarially robust extension of general coded computing (\gcc{}), a learning-theoretic approximate coded-computing framework that designs the encoder and decoder by minimizing an upper bound on the end-to-end recovery error. Rather than identifying corrupted workers, the proposed method uses smoothness regularization to limit the influence of their outputs. The resulting encoder and decoder can be implemented using smoothing splines with linear master-side complexity. We derive a non-asymptotic bound on the worst-case end-to-end approximation error by decomposing it into three components: (1) the encoder approximation error, (2) the decoder reconstruction error under fully honest results, and (3) the adversarial perturbation error. For maximum of $q\geq 1$ adversarial workers among $N$ workers results in a worst-case error of order $\mathcal{O}((q/N)^{6/5})$ with appropriate selection of smoothness parameter, demonstrating that the approximation error vanishes asymptotically whenever the number of adversaries grows sublinearly with the network size. Experiments on scalar computation and neural-network inference are consistent with the predicted convergence behavior.
\end{abstract}

\begin{IEEEkeywords}
Adversarial workers, approximate coded computing, distributed computing, general coded computing, reproducing kernel Hilbert space, smoothing spline.
\end{IEEEkeywords}

\IEEEpeerreviewmaketitle

\section{Introduction}\label{sec:introduction}

The growing scale of data-intensive applications and computationally
demanding machine-learning (ML) models has made distributed
computation central to modern computing systems~\cite{verbraeken2020survey}. Such systems
commonly operate under a \emph{master--worker architecture}, in which
a central \emph{master node} assigns computational tasks to a
collection of \emph{worker nodes}. Each worker node performs its assigned
computation and returns the result to the master node, which aggregates the
worker outputs to recover the desired function evaluations. By
executing these tasks in parallel, the system can substantially reduce
the computation time incurred at the master node.

This benefit, however, comes with a reliability
challenge: some workers may fail to return their results (known as straggler worker), while others
may return incorrect values because of faults or malicious behavior (known as adversarial worker).
These two forms of unreliability are fundamentally different. A
straggler removes an observation, whereas an adversarial worker
introduces a corrupted observation that may be coordinated with other
corruptions and deliberately chosen to distort the final estimate. The
master node must therefore reconstruct the desired function values without
knowing which worker outputs, if any, are corrupted.

Coded computing addresses worker unreliability by introducing
redundancy across the assigned computations. The master node encodes the
original inputs into some \emph{coded inputs} and send each coded inputs to one worker nodes. Next, each worker node apply the target function on its assigned coded input and return the results to the master node. Finally, the master node reconstructs the desired computations from the worker
outputs~\cite{Abraham1984, lee2017speeding, yu2019lagrange, yu2020straggler,short,  opt-recovery, jia2019capacity} even though some of them are missing or incorrect. 

For polynomial computations, Lagrange Coded Computing (LCC) achieves exact recovery while simultaneously providing resilience against stragglers and robustness against adversarial workers whenever the number of worker nodes exceeds a prescribed \emph{recovery threshold}~\cite{yu2019lagrange}. This threshold is determined by the degree of the target polynomial computation as well as the maximum numbers of straggling and adversarial nodes. However, LCC is originally formulated over finite fields and does not scale well with the input batch size due to numerical overflow and the severe ill-conditioning of high-degree polynomial interpolation~\cite{jahani2022berrut,gautschi1987lower}. To address this limitation, Analog Lagrange Coded Computing (ALCC) extends LCC to infinite fields (the analog domain), preserving straggler resilience and dataset privacy~\cite{soleymani_analog}. 

Approximate coded computing extends coded computation to a broader
class of functions by allowing approximate recovery. Berrut
approximated coded computing (\texttt{BACC}) uses barycentric rational
interpolation to approximate general nonlinear computations from the
non-straggler worker outputs~\cite{jahani2022berrut}. More recently, the
learning-theoretic general coded computing (\gcc{}) framework designs
the encoder and decoder by deriving and minimizing an upper bound on
the average end-to-end approximation error~\cite{moradi2026learning}.

Although \gcc{} provides an efficient scheme that outperforms \texttt{BACC} with a faster convergence rate and provably mitigates the effect of stragglers, its robustness against adversarial workers remains unexplored. Conversely, adversarial robustness has been studied within the Berrut framework. ApproxIFER combines rational interpolation with an algebraic error-location algorithm inspired by Berlekamp--Welch decoding to provide Byzantine-robust distributed inference~\cite{soleymani2022approxifer}. 
However, these approaches address adversarial corruption through explicit error-location and correction procedures \emph{prior} to reconstructing the desired outputs, which imposes substantial computational overhead at the master node. Furthermore, none of these works investigates how the approximation error behaves as the number of workers increases for a given adversarial budget, nor do they establish fundamental limits on the adversarial budget required for the approximation error to vanish asymptotically.

To address these gaps, we investigate the adversarial robustness of the \gcc{} framework from a complementary perspective: rather than explicitly locating and discarding adversarial workers, we mitigate their influence by inducing \emph{smoothness} in the encoding and decoding procedures. Building upon the \gcc{} architecture, the proposed method utilizes smooth functions to distribute computations and reconstruct the desired outputs. The decoder aggregates all returned outputs and bounds the impact of corrupted values via smoothness regularization. While the regularization parameters can be calibrated to a known adversarial budget, the framework operates without prior knowledge of either the adversaries' identities or their specific corruptions.

Crucially, we establish that with an appropriately tuned smoothness regularization, the end-to-end approximation error converges to zero as the total number of workers increases, provided the number of adversarial workers grows sub-linearly with the network size.

Our main contributions are summarized as follows:

\begin{enumerate}
\item \textbf{Adversarial \gcc{} Framework:}
We propose an adversarially robust \gcc{} scheme for general high-dimensional computations that operates reliably in the presence of adversarial workers. We derive an upper bound on the end-to-end approximation error and decompose it into three components: (i) the encoder approximation error, (ii) the decoder reconstruction error in a fully honest setting (referred to as the \emph{clean approximation error}), and (iii) the perturbation caused by adversarial workers. This decomposition explicitly reveals how the smoothness of the encoder and decoder governs the robustness of the overall scheme.

\item \textbf{Finite-Sample Guarantees and Convergence Rates:} 
We derive a non-asymptotic upper bound on the worst-case mean-squared approximation error. This bound captures the fundamental trade-off introduced by decoder regularization: stronger smoothing reduces the influence of adversarial outputs but increases the clean approximation error. Balancing these two effects reveals that for a maximum of $q\geq1$ adversarial workers out of $N$ total workers, the decoder smoothing parameter should scale as $(q/N)^{8/5}$. With this tuning, the proposed scheme achieves an end-to-end error rate of $\mathcal{O}((q/N)^{6/5})$, demonstrating that the worst-case approximation error converges to zero whenever the number of adversarial workers grows sub-linearly with the total number of workers.

\item \textbf{Efficient Implementation:} 
We show that both the encoder and decoder can be implemented using vector-valued cubic smoothing splines. Because their coefficients can be computed by solving banded linear systems, the master-side encoding and decoding costs scale linearly with both the input batch size and the total number of workers. Notably, this computational efficiency is achieved without requiring a separate adversarial detection or error-location subroutine.

\item \textbf{Empirical Evaluation:} 
We evaluate the proposed scheme on both scalar nonlinear computations and high-dimensional neural network inference. The experiments illustrate the predicted convergence behavior under adversarial attacks and confirm that the observed error decay aligns with our theoretical guarantees.
\end{enumerate}

The remainder of the paper is organized as follows. Section~\ref{sec:problem} introduces the problem formulation. Section~\ref{sec:prelim} reviews the necessary mathematical preliminaries regarding Sobolev spaces, reproducing kernels, and smoothing splines. Section~\ref{sec:nonasymptotic} develops the proposed encoding and decoding procedures, establishes their error and convergence guarantees, analyzes the computational complexity. The proofs of the main theorems and corollaries are collected in Section~\ref{sec:proofs}, and Section~\ref{sec:experiments} presents the experiments result.

\subsection{Notation}

For $n\in\mathbb{N}$, let
$
[n]:=\{1,\ldots,n\}.
$
The cardinality of a set $\mathcal{S}$ is denoted by $|\mathcal{S}|$. Scalars are denoted by lowercase letters, such as $v$, vectors by bold lowercase letters, such as $\mathbf{v}$, and matrices by bold uppercase letters, such as $\mathbf{V}$. The Euclidean norm is denoted by $\norm{\cdot}_2$. For a vector $\mathbf{v}\in\mathbb{R}^m$, its coordinatewise maximum norm is defined as
$
\norm{\mathbf{v}}_{\infty}
:=
\max_{j\in[m]}\abs{v_j}.
$
For a matrix $\mathbf{A}$, its induced operator norm is defined as
$
\norm{\mathbf{A}}_{\op}
:=
\sup_{\norm{\mathbf{v}}_2=1}
\norm{\mathbf{A}\mathbf{v}}_2
$. We write $a \asymp b$ to indicate that there exist positive constants $c_1, c_2 > 0$ such that $c_1 a \leq b \leq c_2 a$.

For a twice differentiable scalar-valued function
$g:\mathbb{R}^d\to\mathbb{R}$,
$\nabla g(\mathbf{x})$ and $D^2g(\mathbf{x})$ denote its gradient and Hessian, respectively. For a twice differentiable vector-valued function
$g=[g_1,\ldots,g_m]^T$,
$\nabla g_j(\mathbf{x})$ and $D^2g_j(\mathbf{x})$ denote the gradient and Hessian of its $j$th component, respectively.

For a measurable function $u:\Omega\to\mathbb{R}^M$, $u'$ and $u''$ denote its first and second weak derivatives, taken componentwise with respect to its scalar argument. The formal definition of a weak derivative is provided in Definition~\ref{def:weak_deriv} in Appendix~\ref{app:functional_details}. The Lebesgue measure of a measurable set $\mathcal{E}\subseteq\mathbb{R}$ is denoted by $\Leb(\mathcal{E})$. For $1\leq p\leq\infty$, $\norm{u}_{\lpm{p}}$ denotes its $L^p$-norm over $\Omega$, as formally defined in Definition~\ref{def:lp_norm} in Appendix~\ref{app:functional_details}.

\section{Problem Formulation}\label{sec:problem}

We consider a master-worker distributed computing system consisting of
one master node and $N$ worker nodes. The master node aims to evaluate
a prescribed computing function
\(
    f:\mathbb{R}^d\to\mathbb{R}^m
\)
on a collection of $K$ input data points
$\{\mathbf{x}_k\}_{k=1}^K$, using the worker nodes, where
$N$, $K$, $d$, and $m$ are positive integers. 

The set of worker nodes is partitioned into two disjoint subsets:
the set of honest workers, denoted by $\mathcal{D}$, and the set of
adversarial workers, denoted by $\mathcal{B}$, where
\(
    \mathcal{D}\cup\mathcal{B}
    =
    [N],
    \mathcal{D}\cap\mathcal{B}
    =
    \varnothing.
\)
Each honest worker evaluates the prescribed function at its assigned
input and returns the resulting output without modification. In
contrast, an adversarial worker may return an arbitrary vector in the
bounded adversarial-output region
\begin{align}
    \mathcal{Y}_{\textrm{adv}}
    :=
    [-T_a,T_a]^m,
\end{align}
for some $M>0$, regardless of the computation assigned to that worker.
We assume that
\begin{align}\label{eq:n_adv_bound}
    |\mathcal{B}|
    \leq
    q,
\end{align}
where $q\in\{0,\ldots,N\}$ is the maximum number of adversarial
workers.

All adversarial workers are controlled by a single entity, referred
to as the \emph{adversary}, and may coordinate their returned
outputs. An attack is specified by the set of corrupted workers and
their returned vectors. Accordingly, define
\begin{align}\label{eq:attack_class}
    \Aadv
    :=
    \bigl\{
        A
        =
        \bigl(
            \mathcal{B},
            \{\mathbf{a}_j\}_{j\in\mathcal{B}}
        \bigr)
        :
        \mathcal{B}\subseteq[N],\,
        |\mathcal{B}|\leq q,\,
        \mathbf{a}_j\in\mathcal{Y}_{\textrm{adv}}
    \bigr\}.
\end{align}
The adversary has full knowledge of the input batch
$\{\mathbf{x}_k\}_{k=1}^K$, the computing function $f$, and the complete encoding and decoding
procedures. Thus, the adversary may choose the corrupted worker set and
all corrupted outputs jointly.

\subsection{General Coded Computing in the Adversarial Setting}
\label{sec:adv_gcc_model}

Similar to \gcc{} in the straggler setting~\cite{moradi2026learning}, the
proposed framework consists of three stages all depicted in Figure~\ref{fig:framework}: (i) encoding at the master
node, (ii) computation at the worker nodes, and (iii) decoding at the master
node.
\begin{enumerate}[label={\rm(\arabic*)},leftmargin=1.8em]

\item \emph{Encoding:}
The master node selects $K$ fixed scalar points
\(
    \alpha_1
    <
    \cdots
    <
    \alpha_K
\)
with
\(\alpha_k\in\Omega,\) for $k\in[K]$
referred to as the \emph{encoder design points}, and constructs an
encoder function
\(
    \enc:\Omega\to\mathbb{R}^d
\)
such that
\begin{align}\label{eq:enc_approx}
    \enc(\alpha_k)
    \approx
    \mathbf{x}_k,
    \qquad
    k\in[K].
\end{align}
The master node also selects $N$ fixed scalar points
\(
    \beta_1
    <
    \cdots
    <
    \beta_N,
\)
with
\(
\beta_j\in\Omega,
\) for $j\in[N]$
referred to as the \emph{decoder design points}. It forms the coded
inputs by evaluating the encoder at these points:
\begin{align}\label{eq:coded_inputs}
    \widetilde{\mathbf{x}}_j
    :=
    \enc(\beta_j),
    \qquad
    j\in[N].
\end{align}
The master node then assigns
$\widetilde{\mathbf{x}}_j$ to worker $j$.

\item \emph{Computation:}
Every honest worker $j\in\mathcal{D}$ evaluates the prescribed
computing function at its assigned coded input and returns
\(
    f(\widetilde{\mathbf{x}}_j)
    =
    f(\enc(\beta_j)).
\)
In contrast, an adversarial worker $j\in\mathcal{B}$ may replace this
value by an arbitrary vector
$\mathbf{a}_j\in\mathcal{Y}_{\textrm{adv}}$. Therefore, under an attack
$A\in\Aadv$, the output received by the master node from worker $j$ is
\begin{align}\label{eq:received_output}
    \overline{\mathbf{y}}_j^{A}
    =
    \begin{cases}
        f(\enc(\beta_j)),
        &
        j\in\mathcal{D},
        \\[1mm]
        \mathbf{a}_j\in\mathcal{Y}_{\textrm{adv}},
        &
        j\in\mathcal{B}.
    \end{cases}
\end{align}

\item \emph{Decoding:}
Using the returned outputs
$\{\overline{\mathbf{y}}_j^{A}\}_{j=1}^N$ and the corresponding decoder
design points $\{\beta_j\}_{j=1}^N$, the master node constructs a
decoder function
\(
    \dec:\Omega\to\mathbb{R}^m.
\)
The decoder function is designed to suppress the effect of the adversarial
outputs and accurately approximate the composed function
$f\circ\enc$ in the presence of up to $q$ corrupted worker outputs. In particular, the objective is to have a decoder function
$\dec(\cdot)$ such that
\begin{align}\label{eq:dec_approx}
    \dec(z)
    \approx
    f(\enc(z)),
    \qquad
    z\in\Omega.
\end{align}

The master node then estimates the desired computation at each
original input point by evaluating the decoder at the corresponding
encoder design point:
\begin{align}\label{eq:gcc_estimate}
    \fhat_q(\mathbf{x}_k)
    &:=
    \dec(\alpha_k)
    \lec{(a)}{\approx}
    f(\enc(\alpha_k))
    \lec{(b)}{\approx}
    f(\mathbf{x}_k),
    \qquad
    k\in[K],
\end{align}
where approximation (a) follows from the decoder approximation in
\eqref{eq:dec_approx}, while approximation (b) follows from the
encoder approximation in \eqref{eq:enc_approx} and the regularity of
$f$. These two sources of error will be formally quantified in the Section~\ref{sec:code_design}.
\end{enumerate}

For a fixed computing function $f$, prescribed encoder design points
$\boldsymbol{\alpha}:=\{\alpha_k\}_{k=1}^K$, prescribed decoder
design points
$\boldsymbol{\beta}:=\{\beta_j\}_{j=1}^N$, and an adversarial budget
$q$, we measure the performance of the \gcc{} scheme by the
worst-case mean squared error (MSE)
\begin{align}\label{eq:main_obj}
    \Radv(
        \enc,\dec;
        f,\boldsymbol{\alpha},\boldsymbol{\beta}
    )
    :=
    \sup_{A\in\Aadv}
    \frac{1}{K}
    \sum_{k=1}^K
    \norm{
        \fhat_q(\mathbf{x}_k)
        -
        f(\mathbf{x}_k)
    }_2^2.
\end{align}
Throughout our analysis,
$\boldsymbol{\alpha}$, and $\boldsymbol{\beta}$ are fixed. We
therefore suppress their dependence and write
\begin{align}
    \Radv(\enc,\dec;f)
    :=
    \Radv(
        \enc,\dec;
        f,\boldsymbol{\alpha},\boldsymbol{\beta}
    ).
\end{align}
The supremum in \eqref{eq:main_obj} is taken over all admissible
attacks that corrupt the outputs of at most $q$ workers. The encoder
and decoder may be designed using the known budget $q$, but they do
not know the realized set of adversarial workers or the corrupted
values returned by them.

Accordingly, the objective of \gcc{} in adversarial environment is to design the
encoder and decoder functions to minimize the worst-case error in
\eqref{eq:main_obj}:
\begin{align}\label{eq:obj_adv_gcc}
    \inf_{\substack{
        \enc,
        \dec
    }}
    \Radv(\enc,\dec;f).
\end{align}


\begin{figure}[!t]
\centering
\resizebox{\columnwidth}{!}{%
\begin{tikzpicture}[
    >=Stealth,
    worker/.style={
        rectangle,
        draw,
        thick,
        fill=gray!5,
        minimum width=2.4cm,
        minimum height=0.75cm,
        align=center,
        font=\small
    },
    adversarial/.style={
        worker,
        draw=red!70!black,
    },
    link/.style={
        ->,
        thick
    },
    corrupted/.style={
        link,
        dashed,
        red!70!black
    }
]

\draw[thick, fill=gray!5]
    (0,0)
    -- (2.5,1.5)
    -- (2.5,-4.6)
    -- (0,-3.1)
    -- cycle;

\node[font=\bfseries] at (1.25,-0.9)
    {Encoder};

\node[align=center, font=\small] at (1.25,-2.1)
    {$\widetilde{\mathbf{x}}_j=\enc(\beta_j)$};

\draw[thick, fill=gray!5]
    (8.5,1.5)
    -- (11,0)
    -- (11,-3.1)
    -- (8.5,-4.6)
    -- cycle;

\node[font=\bfseries] at (9.75,-0.9)
    {Decoder};

\node[align=center, font=\small] at (9.75,-2.1)
    {$\fhat(\mathbf{x}_k)=\dec(\alpha_k)$};

\node[font=\bfseries] at (5.5,1.95)
    {Worker nodes};

\foreach \name/\idx/\yy in {
    w1/1/1.2,
    w2/2/0.1,
    wn/n/-2.1,
    wN/N/-4.3
}{
    \node[worker] (\name) at (5.5,\yy)
        {$f(\cdot)$};

    \draw[link]
        (2.5,\yy)
        -- node[above, font=\small] (input-\idx)
           {$\widetilde{\mathbf{x}}_{\idx}$}
        (\name.west);

    \draw[link]
        (\name.east)
        -- node[above, font=\small]
           {$f(\widetilde{\mathbf{x}}_{\idx})$}
        (8.5,\yy);
}

\foreach \name/\idx/\yy in {
    bi/i/-1.0,
    bj/j/-3.2
}{
    \node[adversarial] (\name) at (5.5,\yy)
        {\includegraphics[height=0.50cm]{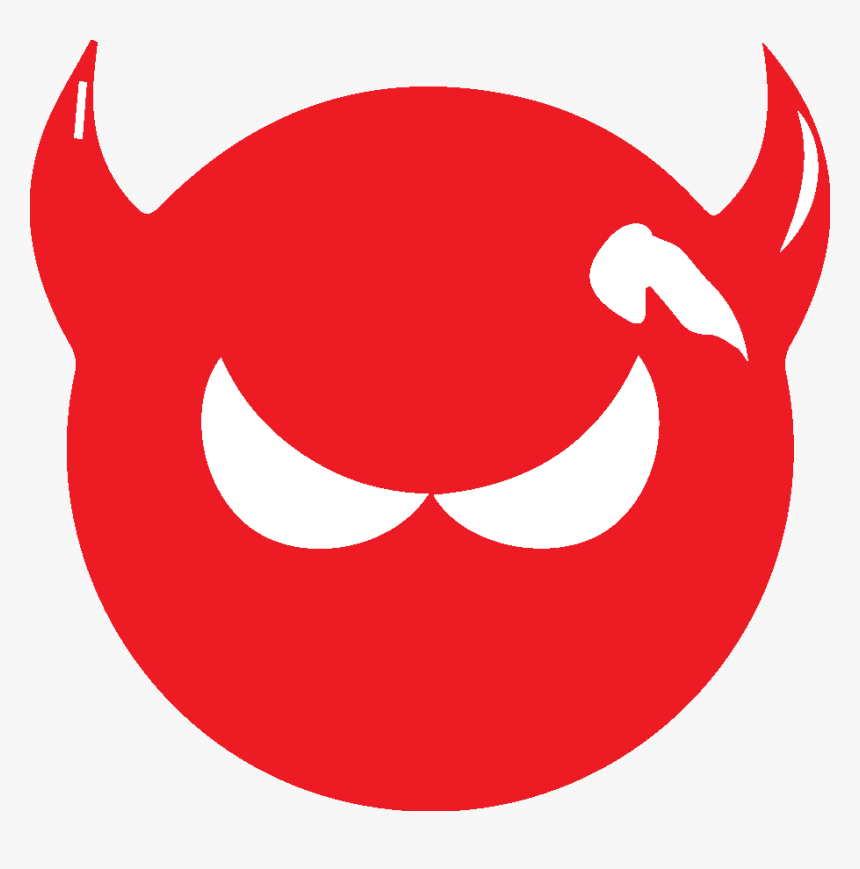}};

    \draw[link]
        (2.5,\yy)
        -- node[above, font=\small] (input-\idx)
           {$\widetilde{\mathbf{x}}_{\idx}$}
        (\name.west);

    \draw[corrupted]
        (\name.east)
        -- node[above, font=\small](output-\idx)
           {$\mathbf{a}_{\idx}$}
        (8.5,\yy);
}


\foreach \idx/\yy in {
    1/-0.5,
    2/-1.2,
    K/-2.6
}{
    \draw[link]
        (-0.8,\yy)
        node[left] {$\mathbf{x}_{\idx}$}
        -- (0,\yy);
}

\node at (-1.05,-1.9) {$\vdots$};

\foreach \idx/\yy in {
    1/-0.5,
    2/-1.2,
    K/-2.6
}{
    \draw[link]
        (11,\yy)
        -- (11.8,\yy)
        node[right] {$\fhat(\mathbf{x}_{\idx})$};
}

\node at (12.35,-1.9) {$\vdots$};

\end{tikzpicture}%
}

\caption{General coded computing in adversarial setting.
Honest workers return $f(\widetilde{\mathbf{x}}_j)$, whereas
adversarial workers may return arbitrary vectors
$\mathbf{a}_j\in\mathcal{Y}_{\mathrm{adv}}$.
The adversary icons and dashed red arrows illustrate corrupted
outputs. The adversarial workers' identities are unknown to the
master node.}
\label{fig:framework}
\end{figure}

\section{Preliminaries}\label{sec:prelim}

In this section, we introduce the notation and mathematical foundations required for the design and analysis of the proposed encoder and decoder. 

Let $\Omega = [a, b] \subset \mathbb{R}$ denote a bounded closed interval. For $1 \leq p \leq \infty$, let $\lpm{p}$ denote the space of measurable functions on $\Omega$ with finite $L^p$-norm.

\subsection{Sobolev Spaces}

For an integer $r \geq 1$ and $1 \leq p \leq \infty$, the Sobolev space $\sobm{r}{p}$ consists of functions $g \in \lpm{p}$ whose weak derivatives up to order $r$, denoted $g^{(i)}$ for $i = 1, \ldots, r$, also belong to $\lpm{p}$. Setting $g^{(0)} := g$, the norm on $\sobm{r}{p}$ is defined by
\begin{align}\label{eq:sob_norm1}
    \norm{g}_{\sobm{r}{p}}
    :=
    \bigg(
        \sum_{i=0}^r
        \norm{g^{(i)}}_{\lpm{p}}^p
    \bigg)^{1/p},
    \qquad 1 \leq p < \infty,
\end{align}
and for $p = \infty$,
\begin{align}\label{eq:sob_norm2}
    \norm{g}_{\sobm{r}{\infty}}
    :=
    \max_{0 \leq i \leq r}
    \norm{g^{(i)}}_{\lpm{\infty}}.
\end{align}
When $p = 2$, $\sobm{r}{2}$ forms a Reproducing Kernel Hilbert Space (RKHS)~\cite{berlinet2011reproducing, wahba1990spline}, which we denote by
\begin{align}
    \hilm{r} := \sobm{r}{2}.
\end{align}
Throughout the paper, $\hilm{r}$ denote the $r$-th order Sobolev space over interval $\Omega$.

\subsubsection{Equivalent Norms}\label{sec:equiv_sob_norm}

A Sobolev space can be equipped with several equivalent
norms~\cite{adams2003sobolev,berlinet2011reproducing,wahba1990spline}.
Two norms $\norm{\cdot}_{\mathbb{W}_1}$ and
$\norm{\cdot}_{\mathbb{W}_2}$ on the same space are equivalent if
there exist constants $\theta_1,\theta_2>0$ such that
\(
    \theta_1\norm{g}_{\mathbb{W}_1}
    \leq
    \norm{g}_{\mathbb{W}_2}
    \leq
    \theta_2\norm{g}_{\mathbb{W}_1}
\)
for every function $g$ in that space. 

The equivalent norm used in this paper is the anchored Sobolev norm
introduced in~\cite{kimeldorf1971some}. Let $\Omega=[a,b]$ and
$1\leq p<\infty$. For every $g\in\sobm{r}{p}$, define
\begin{align}\label{eq:sob_norm_eq}
    \norm{g}_{\sobmeq{r}{p}}
    :=
    \left(
        \sum_{i=0}^{r-1}
        \norm{g^{(i)}(a)}_2^p
        +
        \norm{g^{(r)}}_{\lpm{p}}^p
    \right)^{1/p}.
\end{align}
This norm is equivalent to the standard Sobolev norm in
\eqref{eq:sob_norm1}~\cite{moradicoded}. In particular, since
$\hilm{r}=\sobm{r}{2}$, throughout the remainder of the paper we use
the convention
\begin{align}\label{eq:hilbert_equivalent_norm}
    \norm{g}_{\hilm{r}}
    :=
    \norm{g}_{\sobmeq{r}{2}},
    \qquad
    g\in\hilm{r}.
\end{align}

\begin{remark}
    We define $\Omega = [a, b]$ as a closed interval; however, the underlying $L^p$ and Sobolev norms are identical whether defined over $[a, b]$ or the open interval $(a, b)$, as the boundary set $\{a, b\}$ has Lebesgue measure zero. Thus, we have:
    \begin{align}
        \mathcal{H}^r([a,b], \mathbb{R}^M) &= \mathcal{H}^r((a,b), \mathbb{R}^M) \nonumber \\ 
        L^r([a,b], \mathbb{R}^M) &= L^r((a,b), \mathbb{R}^M)
    \end{align}
\end{remark}

\subsection{Representer Theorem}

The representer theorem states that the solution to a broad class of regularized empirical risk minimization problems over an RKHS admits a finite-dimensional representation, expressed as a linear combination of kernel evaluations at the sampling points.

\begin{theorem}[Representer Theorem~\cite{scholkopf2001generalized, micchelli2005learning}]
\label{thm:representer}
Let $\mathcal{H}$ be a vector-valued RKHS of functions from $\Omega$ to $\mathbb{R}^d$, associated with a matrix-valued reproducing kernel $\Phi: \Omega \times \Omega \to \mathbb{R}^{d \times d}$. Let $\{t_i\}_{i=1}^n \subset \Omega$, $\lambda > 0$, and let $\Theta: [0, \infty) \to \mathbb{R}$ be a strictly increasing function. Consider the regularized optimization problem
\begin{align}\label{eq:representer_objective}
    \min_{g \in \mathcal{H}} \;
    \Psi\big(g(t_1), \ldots, g(t_n)\big)
    +
    \lambda \Theta\big(\norm{g}_{\mathcal{H}}\big),
\end{align}
where $\Psi: (\mathbb{R}^d)^n \to \mathbb{R} \cup \{+\infty\}$ is an arbitrary loss function. If a minimizer of~\eqref{eq:representer_objective} exists, it admits the representation
\begin{align}\label{eq:representer}
    g^*(t)
    =
    \sum_{i=1}^n
    \Phi(t, t_i) \mathbf{w}_i,
\end{align}
for some coefficient vectors $\mathbf{w}_1, \ldots, \mathbf{w}_n \in \mathbb{R}^d$.
\end{theorem}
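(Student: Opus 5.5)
The plan is the classical orthogonal-decomposition argument, carried out in the vector-valued setting. Let
\[
\mathcal{H}_0 := \Bigl\{ \textstyle\sum_{i=1}^n \Phi(\cdot,t_i)\mathbf{w}_i : \mathbf{w}_i\in\mathbb{R}^d \Bigr\}
\]
be the finite-dimensional subspace spanned by the kernel sections at the sample points. Being finite-dimensional, $\mathcal{H}_0$ is closed, so every $g\in\mathcal{H}$ splits uniquely as $g=g_0+g_\perp$ with $g_0\in\mathcal{H}_0$ and $g_\perp\perp\mathcal{H}_0$.

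\textbf{Step 1: the orthogonal part is invisible at the samples.} The defining reproducing property of the matrix-valued kernel is
\[
\langle g(t),\mathbf{c}\rangle_{\mathbb{R}^d} = \langle g,\Phi(\cdot,t)\mathbf{c}\rangle_{\mathcal{H}}
\qquad\text{for all } t\in\Omega,\ \mathbf{c}\in\mathbb{R}^d.
\]
Apply it with $t=t_i$ and $\mathbf{c}$ ranging over the standard basis vectors $\mathbf{e}_1,\dots,\mathbf{e}_d$. Since $\Phi(\cdot,t_i)\mathbf{e}_\ell\in\mathcal{H}_0$, every coordinate of $g_\perp(t_i)$ vanishes. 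Hence $g(t_i)=g_0(t_i)$ for all $i\in[n]$, and the loss term satisfies $\Psi(g(t_1),\dots,g(t_n))=\Psi(g_0(t_1),\dots,g_0(t_n))$.

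\textbf{Step 2: the orthogonal part only increases the penalty.} By Pythagoras, $\norm{g}_{\mathcal{H}}^2=\norm{g_0}_{\mathcal{H}}^2+\norm{g_\perp}_{\mathcal{H}}^2\ge\norm{g_0}_{\mathcal{H}}^2$, with equality iff $g_\perp=0$. Because $\Theta$ is strictly increasing, it follows that $\Theta(\norm{g}_{\mathcal{H}})>\Theta(\norm{g_0}_{\mathcal{H}})$ whenever $g_\perp\neq 0$.

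\textbf{Step 3: conclude for a minimizer.} Let $g^*$ be a minimizer and suppose $g^*_\perp\neq0$. By Steps 1 and 2, $g^*_0$ has the same loss value as $g^*$ and a strictly smaller penalty (recall $\lambda>0$). The only issue is the degenerate case where the objective equals $+\infty$. That case is excluded by minimality, provided the infimum is finite; if it is $+\infty$, every $g$ is a minimizer, and in particular $g^*_0$ is one, so the representation still holds for that choice. Otherwise $g^*_0$ strictly improves on $g^*$, contradicting minimality. Therefore $g^*=g^*_0\in\mathcal{H}_0$, which is exactly the representation \eqref{eq:representer}.

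The main point needing care is Step 1. It requires the reproducing property stated for matrix-valued kernels and the fact that the sections $\Phi(\cdot,t_i)\mathbf{c}$ lie in $\mathcal{H}$. These are standard consequences of the definition of a vector-valued RKHS, as in~\cite{micchelli2005learning}, and I would invoke them directly.
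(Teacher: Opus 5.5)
Your argument is correct and is the classical orthogonal-decomposition proof from the works the paper cites; the paper itself states the theorem without proof, and in the vector-valued case the reproducing property $\langle g(t),\mathbf{c}\rangle=\langle g,\Phi(\cdot,t)\mathbf{c}\rangle_{\mathcal{H}}$ carries Step~1 exactly as you use it. Your treatment of the $+\infty$ case is a refinement rather than a gap: when the objective is identically $+\infty$, every $g\in\mathcal{H}$ is a minimizer, so the literal claim that \emph{every} minimizer has the form \eqref{eq:representer} fails unless your subspace $\mathcal{H}_0$ equals $\mathcal{H}$, and the right conclusion in that case is the one you prove---some minimizer (namely $g^*_0$) does.
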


When $\mathcal{H} = \hild{r}{d}$, the reproducing kernel decouples as $\Phi(t, s) = \phi(t, s) I_d$, where $\phi: \Omega \times \Omega \to \mathbb{R}$ denotes the scalar reproducing kernel of $\hild{r}{}$, and $I_d$ is the $d \times d$ identity matrix~\cite{aubin2000applied,li2024towards}. Under this setting, the representation~\eqref{eq:representer} simplifies to
\begin{align}
    g^*(t) = \sum_{i=1}^n \phi(t, t_i) \mathbf{w}_i.
\end{align}

\subsection{Second-Order Smoothing Splines}\label{sec:smooth_spline}

Let $a < t_1 < \cdots < t_n < b$, with $n \geq 2$, denote an ordered set of sampling points, and let
$\mathbf{y}
:=
(\mathbf{y}_1,\ldots,\mathbf{y}_n)
\in(\mathbb{R}^d)^n$ be the corresponding vector-valued observations. For a regularization parameter $\lambda > 0$ (known as \emph{smoothing parameter}), consider the penalized least-squares problem over second-order Sobolev space:
\begin{align}\label{eq:smoothing_spline}
    \smop{n}{\lambda}{\mathbf{y}}
    :=
    \underset{g \in \hild{2}{d}}{\operatorname{argmin}}
    \bigg\{
        \frac{1}{n} \sum_{i=1}^n
        \norm{g(t_i) - \mathbf{y}_i}_2^2
        +
        \lambda \norm{g''}_{\lpd{2}{d}}^2
    \bigg\}.
\end{align}
Because both the quadratic empirical loss and the derivative penalty decouple across the $d$ coordinates, the minimization in~\eqref{eq:smoothing_spline} decomposes into $d$ independent scalar problems. For each coordinate, the objective admits a unique minimizer given by a natural cubic smoothing spline (also referred to as a second-order smoothing spline)~\cite{wahba1990spline}. Consequently, \eqref{eq:smoothing_spline} possesses a unique vector-valued solution $\smop{n}{\lambda}{g}$.

Moreover, the solution operator is linear in the observations and admits the equivalent kernel representation~\cite{wahba1975smoothing}
\begin{align}\label{eq:smoothing_kernel}
    \smop{n}{\lambda}{\mathbf{y}}(t)
    =
    \frac{1}{n} \sum_{i=1}^n
    G_{n,\lambda}(t, t_i) \mathbf{y}_i.
\end{align}
The equivalent kernel (or weight function) $G_{n,\lambda}$ depends solely on the design points $\{t_i\}_{i=1}^n$ and the regularization parameter $\lambda$, and is identical across all coordinates.

When the observations are noiseless samples from a function
$h:[a,b]\to\mathbb{R}^d$, let
\(
\mathbf{h}:=[h(t_1),\ldots,h(t_n)]
\)
denote the corresponding observation vector. We then use the shorthand
\(
\smop{n}{\lambda}{h}
:=
\smop{n}{\lambda}{\mathbf{h}}.
\)

\section{\gcc{} in Adversarial Environments}\label{sec:nonasymptotic}

In this section, we formulate the \gcc{} framework under an adversarial setting in which, for a given $q \in \mathbb{N}$, at most $q$ outputs returned by worker nodes are arbitrarily corrupted. We begin by characterizing the spacing of the design points, which plays a central role in our theoretical guarantees.

\begin{definition}[Design Spacing Parameters]\label{def:spacing}
Let $\mathcal{T} := \{t_1, \ldots, t_n\} \subset \Omega$ be an ordered set of design points satisfying $a \leq t_1 < \cdots < t_n \leq b$. The maximum and minimum consecutive spacing of $\mathcal{T}$ are defined, respectively, by
\begin{align}
    \delmax{\mathcal{T}}
    &:=
    \max_{1\leq j < n}
    (t_{j+1} - t_j), \\
    \delmin{\mathcal{T}}
    &:=
    \min_{1\leq j < n}
    (t_{j+1} - t_j).
\end{align}
\end{definition}

For the remainder of this paper, we assume that both the encoder and decoder belong to the second-order vector-valued Sobolev space; that is,
\begin{align}
    \enc \in \hild{2}{d}, \qquad \dec \in \hild{2}{m}.
\end{align}
This regularity assumption imposes only mild smoothness constraints while providing the analytical tractability needed to leverage reproducing kernel techniques. In addition, without loss of generality, we set $\Omega = [0, 1]$.\footnote{Any non-empty bounded interval $[a, b]$ can be mapped to $[0, 1]$ via an affine transformation, which affects our theoretical constants only up to scale factors.}

Next, we define the class of computing functions considered throughout this paper.

\begin{definition}[Admissible Computing Function Class]\label{def:function_class}
We define $\dmsmfunc$ as the class of twice continuously differentiable functions $f = [f_1, \ldots, f_m]^\top : \mathbb{R}^d \to \mathbb{R}^m$ with bounded outputs, gradients, and Hessians:
\begin{align}\label{eq:converse_function_class}
    \dmsmfunc
    :=
    \biggl\{
        f=[f_1,\ldots,f_m]^\top \in C^2(\mathbb{R}^d; \mathbb{R}^m) :
        &\
        \sup_{\mathbf{x}\in\mathbb{R}^d}
        \norm{f(\mathbf{x})}_{\infty}
        < \infty,
        \nonumber\\
        &\
        \sup_{\mathbf{x}\in\mathbb{R}^d}
        \norm{\nabla f_j(\mathbf{x})}_2
        <\infty,
        \quad j\in[m],
        \nonumber\\
        &\
        \sup_{\mathbf{x}\in\mathbb{R}^d}
        \norm{D^2f_j(\mathbf{x})}_{\op}
        <\infty,
        \quad j\in[m]
    \biggr\}.
\end{align}
\end{definition}

The function class $\dmsmfunc$ is highly general and encompasses a wide variety of non-linear functions. For a given computing function $f \in \dmsmfunc$, our objective is to design the encoder and decoder functions to solve the robust risk minimization problem
\begin{align}\label{eq:main_adv_obj}
    \objadv
    :=
    \inf_{\substack{
        \enc \in \hild{2}{d}, \\
        \dec \in \hild{2}{m}
    }}
    \Radv(\enc, \dec;f).
\end{align}

The remainder of this section is organized as follows. In Section~\ref{sec:code_design}, we present the encoder and decoder constructions and establish both non-asymptotic and asymptotic upper bounds on $\objadv$. In Section~\ref{sec:comp_complex}, we analyze the computational complexity of the proposed scheme and compare it against existing baselines. Finally, in Section~\ref{sec:spline_adversarial_tolerance}, we establish a complementary lower bound on the worst-case adversarial error of the proposed encoding and decoding design.

\subsection{Code Design}\label{sec:code_design}
In this subsection, we develop the encoder and decoder constructions
for the adversarial optimization problem in
\eqref{eq:main_adv_obj}. To state the proposed design and the
performance guarantees, we impose the following assumptions
throughout this section:

\begin{enumerate}[
    label=\textnormal{(A\arabic*)},
    ref=\textnormal{A\arabic*},
    leftmargin=3.8em
]

\item \label{assump:f_bounded}
\emph{Bounded computation outputs:}
The computing function
$f:\mathbb{R}^d\to\mathbb{R}^m$ satisfies
\begin{align}
    \sup_{\mathbf{x}\in\mathbb{R}^d}
    \norm{f(\mathbf{x})}_{\infty}
    \leq
    T_f
\end{align}
for some constant $T_f>0$.

\item \label{assump:f_regularity}
\emph{Regularity of the computation function:}
Let
\(
    f=[f_1,\ldots,f_m]^T.
\)
For every $\mathbf{x}\in\mathbb{R}^d$ and $j\in[m]$, assume that
\begin{align}
    \norm{\nabla f_j(\mathbf{x})}_2
    &\leq
    G_j,
    &
    \norm{D^2f_j(\mathbf{x})}_{\op}
    &\leq
    H_j.
\end{align}
We define the aggregate gradient and Hessian bounds as
\begin{align}
    G_f
    &:=
    \left(
        \sum_{j=1}^m G_j^2
    \right)^{1/2},
    &
    H_f
    &:=
    \left(
        \sum_{j=1}^m H_j^2
    \right)^{1/2},
\end{align}
and we assume that $G_f,H_f>0$.

\item \label{assump:adv_bounded}
\emph{Bounded adversarial outputs:}
Every corrupted output belongs to $[-T_a,T_a]^m$ for some
$T_a>0$. We define
\begin{align}
    T:=\max\{T_a,T_f\}.
\end{align}

\end{enumerate}

Directly optimizing \eqref{eq:main_adv_obj} is difficult because the
encoder and decoder belong to infinite-dimensional function spaces
and up to $q$ returned worker outputs may be arbitrarily corrupted. A
central step in our approach is to construct an upper bound on
$\Radv(\enc,\dec;f)$ that decomposes the loss into three interconnected
components.

The first component captures the encoder approximation error, which
measures the discrepancy caused by approximating the original input
data at the encoder design points, corresponding to
\eqref{eq:enc_approx}. The remaining two components capture the
decoder error, corresponding to \eqref{eq:dec_approx}, by separating
the effect of adversarial corruptions from the clean generalization
error of the decoder.

More formally, for any admissible attack $A\in\Aadv$, we have
\begin{align}
    &\frac{1}{K}
    \sum_{k=1}^K
    \norm{
        \fhat(\mathbf{x}_k)
        -
        f(\mathbf{x}_k)
    }_2^2
    \nonumber\\
    &=
    \frac{1}{K}
    \sum_{k=1}^K
    \norm{
        \dec(\alpha_k)
        -
        f(\enc(\alpha_k))
        +
        f(\enc(\alpha_k))
        -
        f(\mathbf{x}_k)
    }_2^2
    \nonumber\\
    &\leq
    \frac{2}{K}
    \sum_{k=1}^K
    \norm{
        f(\enc(\alpha_k))
        -
        f(\mathbf{x}_k)
    }_2^2+
    \frac{2}{K}
    \sum_{k=1}^K
    \norm{
        \dec(\alpha_k)
        -
        f(\enc(\alpha_k))
    }_2^2,
\end{align}
where the inequality follows from
$
    \norm{\mathbf{u}+\mathbf{v}}_2^2
    \leq
    2\norm{\mathbf{u}}_2^2
    +
    2\norm{\mathbf{v}}_2^2.
$
Taking the supremum over all admissible attacks yields
\begin{align}
    \Radv(\enc,\dec;f)
    \leq
    \lenc+\ldec,
\label{eq:adv_first_decomposition}
\end{align}
where
\begin{align}
    \lenc
    &:=
    \frac{2}{K}
    \sum_{k=1}^K
    \norm{
        f(\enc(\alpha_k))
        -
        f(\mathbf{x}_k)
    }_2^2,
    \label{eq:adv_encoding_term}\\
    \ldec
    &:=
    \sup_{A\in\Aadv}
    \frac{2}{K}
    \sum_{k=1}^K
    \norm{
        \dec(\alpha_k)
        -
        f(\enc(\alpha_k))
    }_2^2.
    \label{eq:adv_decoding_term}
\end{align}
The encoder term $\lenc$ is independent of the attack. In contrast,
the value of $\ldec$ is affected by the attack through the returned
worker outputs used to construct $\dec$. 

To systematically manage the worst-case decoding error, we further decompose the decoder error. Let $\decclean$ represent the \emph{clean decoder} function that would be constructed if all worker nodes were honest, i.e., if the master node received $\overline{\mathbf{y}}_j^{A} = f(\enc(\beta_j))$ for all $j \in [N]$. Thus, $\decclean$ is a
counterfactual function introduced only for the analysis. Adding and
subtracting $\decclean(\alpha_k)$ in
\eqref{eq:adv_decoding_term} gives
\begin{align}
    \ldec
    &=
    \sup_{A\in\Aadv}
    \frac{2}{K}
    \sum_{k=1}^K
    \norm{
        \dec(\alpha_k)
        -
        \decclean(\alpha_k)
        +
        \decclean(\alpha_k)
        -
        f(\enc(\alpha_k))
    }_2^2
    \nonumber\\
    &\le
    2 \sup_{A \in \Aadv} \sup_{z \in \Omega} \left\| \dec(z) - \decclean(z) + \decclean(z) - f(\enc(z)) \right\|_2^2 \nonumber\\
    &\le
    \underbrace{
    4 \sup_{A \in \Aadv} \sup_{z \in \Omega} \left\| \dec(z) - \decclean(z) \right\|_2^2
    }_{\ldecadv}
    +
    \underbrace{
    4 \sup_{z \in \Omega} \left\| \decclean(z) - f(\enc(z)) \right\|_2^2
    }_{\ldecgen}.
\label{eq:adv_decoder_decomposition}
\end{align}
Consequently,
\begin{align}\label{eq:adv_three_term_decomposition}
    \Radv(\enc,\dec;f)
    \leq
    \lenc
    +
    \ldecgen
    +
    \ldecadv.
\end{align}

\begin{remark}[Interpretation of the Error Terms]\label{rem:error_terms}
The term $\lenc$ represents the encoder approximation error, measuring the average distortion at the encoder design points $\{\alpha_i\}_{i=1}^K$ after applying $f$. The term $\ldecadv$ captures the \emph{adversarial perturbation error}, defined as the worst-case deviation in the decoder output induced by up to $q$ corrupted worker transmissions. Finally, $\ldecgen$ represents the \emph{clean decoder generalization error}, which reflects the reconstruction error of the composite map $z \mapsto f(\enc(z))$ in the benign setting where all worker outputs are uncorrupted.  
\end{remark}

\begin{remark}[Connection to Generalization in Learning Theory]\label{rem:generalization_analogy}
The term \emph{generalization error} is borrowed from learning theory, where it measures how well a model trained on one dataset performs on new, unseen test data~\cite{bach2024learning}. In our setting, the decoder is constructed based on the decoder design points $\{\beta_j\}_{j=1}^N$, whereas $\ldecgen$ measures its performance at the distinct encoder design points $\{\alpha_i\}_{i=1}^K$.
\end{remark}

\begin{remark}[Coupling through the encoder]\label{rem:adv_coupling}
Although the three terms in
\eqref{eq:adv_three_term_decomposition} have different
interpretations, they are not independent. The encoder term depends
directly on the values $\enc(\alpha_k)$, while both decoder terms
depend on the encoder through the composite function $f\circ\enc$.
\end{remark}

We address this coupling through a nested construction. For a fixed
encoder, we first design the decoder and derive bounds on $\ldecadv+\ldecgen$.
Substituting these
bounds into \eqref{eq:adv_three_term_decomposition} yields an
encoder-only objective, which is then minimized in an outer
optimization step.

\subsubsection{\bf Decoder Design}\label{sec:adv_decoder_design}

As part of the code design, we select the decoder design points to be
equidistant over $\Omega=[0,1]$:
\begin{align}\label{eq:equispaced_decoder_points}
    \beta_j
    :=
    \frac{j-1}{N-1},
    \qquad
    j\in[N].
\end{align}
This choice will help us in our subsequent theoretical results.

Fix an encoder $\enc\in\hild{2}{d}$. The master node receives the outputs $\{\overline{\mathbf{y}}_j^A\}_{j=1}^N$ defined in \eqref{eq:received_output}
without knowing which, if any, of the returned outputs are corrupted.
We construct the decoder using a regularized smoothing-spline objective. Specifically, the decoder is defined as:
\begin{align}\label{eq:decoder_opt_adv}
    \dec^*
    :=
    \operatorname*{arg\,min}_{u\in\hild{2}{m}}
    \left\{
        \frac{1}{N}
        \sum_{j=1}^N
        \norm{
            u(\beta_j)
            -
            \overline{\mathbf{y}}_j^{A}
        }_2^2
        +
        \declamb
        \norm{u''}_{\lpd{2}{m}}^2
    \right\},
\end{align}
where $\declamb>0$ is the decoder smoothing parameter. The smoothing parameter $\declamb$ may depend on the total number of workers $N$, the adversarial budget $q$, and the decoder evaluation points $\{\beta_j\}_{j=1}^N$, and is selected prior to observing the worker transmissions.

The first term in~\eqref{eq:decoder_opt_adv} measures the empirical squared error over all returned worker outputs, while the second term penalizes the curvature of the decoder. The smoothing parameter $\declamb$ balances these two objectives: larger values of $\declamb$ yield a smoother decoder with lower curvature, whereas smaller values prioritize fitting the empirical data, pushing the decoder toward interpolating the received outputs $\{\overline{\mathbf{y}}_j^A\}_{j=1}^N$. By penalizing rapid variations, this roughness penalty reduces sensitivity to localized anomalies and limits the impact of adversarial corruptions.

Based on decoder function in \eqref{eq:decoder_opt_adv}, its corresponding \emph{clean decoder} has the following form:
\begin{align}\label{eq:clean_decoder_opt}
    \decclean
    :=
    \operatorname*{arg\,min}_{u\in\hild{2}{m}}
    \left\{
        \frac{1}{N}
        \sum_{j=1}^N
        \norm{
            u(\beta_j)
            -
            f(\enc(\beta_j))
        }_2^2
        +
        \declamb
        \norm{u''}_{\lpd{2}{m}}^2
    \right\}.
\end{align}
The clean decoder which is purely an analytical tool used in our analysis, corresponds to the exact same optimization, assuming that all worker nodes are honest and compute and transmit uncorrupted evaluations.

As established in Section~\ref{sec:smooth_spline}, the separability of the squared Euclidean norm across coordinate dimensions decouples~\eqref{eq:decoder_opt_adv} and~\eqref{eq:clean_decoder_opt} into $m$ independent scalar smoothing-spline variational problems. Consequently, both $\dec$ and $\decclean$ are unique vector-valued natural cubic smoothing splines. Specifically, the decoder admits the representer decomposition
\begin{align}\label{eq:adv_decoder_spline_form}
    \dec^*(t)
    =
    \mathbf{d}_0
    +
    \mathbf{d}_1 t
    +
    \sum_{j=1}^N
    \mathbf{c}_j
    \phi_0(t, \beta_j),
\end{align}
where $\mathbf{d}_0, \mathbf{d}_1, \mathbf{c}_1, \ldots, \mathbf{c}_N \in \mathbb{R}^m$, and $\phi_0$ denotes the reproducing kernel associated with the space $\hilz{2}$, which is the second-order Sobolev space with compact support (see Definition~\ref{def:sob_compact} and Remark~\ref{rem:kernel} in Appendix~\ref{app:functional_details} for its explicit formulation). The coefficients are obtained by solving
the standard linear system for vector-valued cubic smoothing splines~\cite{wahba1975smoothing,wahba1990spline}.

\begin{theorem}[Decoder error bound]\label{th:decoder_error_bound}
Consider the \gcc{} scheme with $N$ worker nodes and the
equidistant decoder design points in
\eqref{eq:equispaced_decoder_points}. Suppose that
Assumptions~\ref{assump:f_bounded} through
\ref{assump:adv_bounded} hold. Then there exist constants
$C_1,C_2,c_0>0$, independent of $N$, $q$, $\declamb$, and $\enc$,
such that, whenever
\(
    0<\declamb\leq1
    \text{ and }
    N\declamb^{1/4}\geq c_0,
\)
we have
\begin{align}\label{eq:combined_decoder_bound}
    \inf_{\dec \in \hild{2}{m}}\ldecgen
    +
    \ldecadv
    &\leq
    C_1
    \max\{H_f^2,G_f^2\}
    (
        \declamb^{3/4}
        +
        N^{-3}
    )
    \psi(
        \norm{\enc}_{\hild{2}{d}}^2
    )
   +
    C_2
    mT^2
    \frac{q^2}{N^2}
    \declamb^{-1/2},
\end{align}
where
\(
    \psi(t):=t+4t^2.
\)
\end{theorem}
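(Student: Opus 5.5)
The plan is to take $\dec=\dec^*$, the smoothing spline in \eqref{eq:decoder_opt_adv}; the infimum is then bounded by its value. The two terms in \eqref{eq:adv_decoder_decomposition} are handled separately.

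\textbf{Clean term.} Write $h:=f\circ\enc$. By linearity of $\smop{N}{\declamb}{\cdot}$ we have $\decclean=\smop{N}{\declamb}{h}$.
\begin{itemize}
\item First bound the Sobolev norm of $h$ using Assumption~\ref{assump:f_regularity} and the chain rule: $h_j''=\enc''^{\top}D^2f_j(\enc)\,\enc'+\nabla f_j(\enc)^{\top}\enc''$. This gives $\norm{h''}_{L^2}^2\lesssim H_f^2\norm{\enc'}_{L^4}^4+G_f^2\norm{\enc''}_{L^2}^2$.
\item Control $\norm{\enc'}_{\infty}$ by $\norm{\enc}_{\hild{2}{d}}$ through the anchored norm, since $\enc'(t)=\enc'(0)+\int_0^t\enc''$. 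Together with $\norm{h(0)},\norm{h'(0)}$ (bounded by $T$ and $G_f\norm{\enc'(0)}$), this yields $\norm{h}_{\hild{2}{m}}^2\lesssim\max\{H_f^2,G_f^2\}\,\psi(\norm{\enc}^2_{\hild{2}{d}})$ up to constants.
\item Invoke the standard sup-norm error bound for cubic smoothing splines on quasi-uniform designs, of Utreras / Ragozin type: for $g\in\mathcal H^2$ and $N\declamb^{1/4}\ge c_0$, one has $\norm{\smop{N}{\declamb}{g}-g}_\infty^2\lesssim(\declamb^{3/4}+N^{-3})\norm{g}_{\mathcal H^2}^2$. The rate $\declamb^{3/4}$ comes from the $L^2$ bias $\declamb\norm{g''}^2$ interpolated with the derivative bound $\norm{(s-g)''}^2\lesssim\norm{g''}^2$ via Gagliardo–Nirenberg, $\norm{e}_\infty^2\lesssim\norm{e}_2^{3/2}\norm{e''}_2^{1/2}$. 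The $N^{-3}$ term is the discretization error between the empirical and continuous norms at spacing $1/(N-1)$.
\end{itemize}
I would derive this through the variational characterization: compare the objective at $s$ and at $g$, then use a quadrature/sampling inequality for $\mathcal H^2$ on equispaced points.

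\textbf{Adversarial term.} By linearity, $\dec^*-\decclean=\smop{N}{\declamb}{\mathbf e}$, where $\mathbf e_j=\mathbf a_j-f(\enc(\beta_j))$ for $j\in\mathcal B$ and $\mathbf e_j=0$ otherwise, so $\norm{\mathbf e_j}_\infty\le 2T$. By \eqref{eq:smoothing_kernel},
\begin{align*}
\norm{\dec^*(z)-\decclean(z)}_2\le\frac{1}{N}\sum_{j\in\mathcal B}\abs{G_{N,\declamb}(z,\beta_j)}\,\norm{\mathbf e_j}_2\le\frac{2\sqrt m\,T\,q}{N}\sup_{z,t}\abs{G_{N,\declamb}(z,t)}.
\end{align*}
The key input is a uniform bound on the equivalent kernel, $\sup_{z,t}\abs{G_{N,\declamb}(z,t)}\le C\declamb^{-1/4}$, valid for equispaced points when $N\declamb^{1/4}\ge c_0$. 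This is the Silverman/Nychka–Messer result that $G$ behaves like $\declamb^{-1/4}K((z-t)/\declamb^{1/4})$ with an exponentially decaying kernel. Squaring gives $\ldecadv\le C_2 mT^2(q/N)^2\declamb^{-1/2}$.

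If I cannot cite the kernel bound directly, I would prove it as follows. Take a single unit observation at $\beta_j$ and let $s$ be the resulting spline. Comparing the objective with $u=0$ gives $\frac1N\sum_i\norm{s(\beta_i)-\mathbf e_i}^2+\declamb\norm{s''}^2\le 1/N$. Combining this with the sampling inequality, which turns the empirical norm into an $L^2$ norm once $N\declamb^{1/4}$ is large, and with Gagliardo–Nirenberg, gives $\norm{s}_\infty\lesssim N^{-1}\declamb^{-1/4}$. Hence $N\norm{s}_\infty=\abs{G}\lesssim\declamb^{-1/4}$.

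\textbf{Main obstacle.} The hardest part is this uniform equivalent-kernel bound and the matching sup-norm clean-error bound, with constants independent of $N$ and $\declamb$. Both hinge on the discrete-to-continuous norm equivalence, which is exactly where the condition $N\declamb^{1/4}\ge c_0$ enters.
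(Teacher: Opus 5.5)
Your architecture is the paper's. You take $\dec=\decstar$, write $\decstar-\decclean$ by linearity as the spline of the corruption vector, control it with a uniform bound $\abs{G_{N,\declamb}}\lesssim\declamb^{-1/4}$ from the Nychka/Messer equivalent-kernel results, and control the clean part with a sup-norm spline estimate. Your derivation of that estimate (objective comparison at $s$ and $g$, a sampling inequality, and Gagliardo--Nirenberg with $e''$) is a valid and more self-contained alternative to the paper's use of Ragozin's $L^2$/$H^1$ bounds plus the zero-crossing estimate $\norm{e}_\infty^2\le2\norm{e}_{L^2}\norm{e'}_{L^2}$. On $[0,1]$, Gagliardo--Nirenberg needs the extra term $\norm{e}_{L^2}^2$, which is harmless here. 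Bounding $\norm{\enc'}_\infty^2\le2\norm{\enc}_{\hild{2}{d}}^2$ through the anchored norm also gives $\norm{h''}_{L^2}^2\le2\max\{H_f^2,G_f^2\}\psi(\norm{\enc}_{\hild{2}{d}}^2)$ directly. The first chain-rule term should read ${\enc'}^{\top}D^2f_j(\enc)\,\enc'$. Two steps, however, fail as written.

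First, you state the spline bound with the full norm $\norm{g}_{\mathcal H^2}^2$, and so you assert $\norm{h}_{\hild{2}{m}}^2\lesssim\max\{H_f^2,G_f^2\}\psi(\norm{\enc}_{\hild{2}{d}}^2)$. This is false. The anchored norm contains $\norm{h(0)}_2^2=\norm{f(\enc(0))}_2^2$, which $\psi(\norm{\enc}^2)$ does not control (take $\enc\equiv0$ and $f(\mathbf{0})\neq\mathbf{0}$). You would be left with an extra $(\declamb^{3/4}+N^{-3})mT^2$ term that is absent from \eqref{eq:combined_decoder_bound}. Use the seminorm instead: the spline reproduces linear functions, so the error depends only on $\norm{g''}_{L^2}$, which is all your variational argument produces anyway.

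Second, your fallback proof of the kernel bound loses a factor. Comparing with $u=0$ gives only $\frac1N\sum_i s(\beta_i)^2\lesssim N^{-1}$ and $\norm{s''}_{L^2}^2\le(N\declamb)^{-1}$. Sampling plus Gagliardo--Nirenberg then yield $\norm{s}_\infty^2\lesssim N^{-1}\declamb^{-1/4}$, i.e. $\abs{G_{N,\declamb}}\lesssim\declamb^{-1/4}(N\declamb^{1/4})^{1/2}$. The factor $(N\declamb^{1/4})^{1/2}$ is unbounded in the relevant regime, since $N\declamb^{1/4}\asymp N^{3/5}q^{2/5}$ at the optimal tuning, and it would inflate $\ldecadv$ by $N\declamb^{1/4}$. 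The energy bound $1/N$ is too crude: the true energy of the one-spike spline is of order $N^{-2}\declamb^{-1/4}$.

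The missing step is self-bounding. Testing the first-order optimality condition with $v=s$ gives $E:=\frac1N\sum_i s(\beta_i)^2+\declamb\norm{s''}_{L^2}^2=\frac1N s(\beta_j)\le\frac1N\norm{s}_\infty$. Then apply the sampling inequality; this is where $N\declamb^{1/4}\ge c_0$ is used, to absorb $N^{-4}\norm{s''}_{L^2}^2\le(N\declamb^{1/4})^{-4}E$. Combined with Gagliardo--Nirenberg this gives $\norm{s}_\infty^2\lesssim\declamb^{-1/4}E\le\declamb^{-1/4}\norm{s}_\infty/N$, hence $\norm{s}_\infty\lesssim N^{-1}\declamb^{-1/4}$. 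Equivalently, $G_{N,\declamb}(x,\beta_j)=R(x,\beta_j)$ for the reproducing kernel $R$ of the norm $\frac1N\sum_i u(\beta_i)^2+\declamb\norm{u''}_{L^2}^2$, and Cauchy--Schwarz with $R(x,x)\lesssim\declamb^{-1/4}$ gives the same bound.

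With that repair you have an elementary replacement for the Messer/Nychka estimates the paper imports. If you simply cite them, as the paper does, your main line goes through once the first issue is fixed.
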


\begin{remark}[Trade-off in the Decoder Error Bound]
The first term in~\eqref{eq:combined_decoder_bound} upper-bounds the clean decoder generalization error. For a fixed number of workers $N$, this term decreases monotonically as $\declamb$ decreases. It also scales with the regularity of the computation pipeline, depending on the smoothness parameters $H_f$ and $G_f$ of the target function, as well as the encoder Sobolev norm $\norm{\enc}_{\hild{2}{d}}^2$ which means smoother functions and encoders yield a strictly smaller bound. The second term bounds the adversarial perturbation error. In contrast to the first term, it grows as $\declamb$ decreases or as the adversary's leverage increases through the corruption ratio $Tq/N$. Consequently, the regularization parameter $\declamb$ controls a fundamental trade-off between clean reconstruction accuracy and robustness to adversarial corruptions.
\end{remark}

\begin{remark}[Necessity of Positive Decoder Smoothing]
In the straggler setting, taking the interpolating limit $\declamb \to 0$ yields an effective parameter-free decoder~\cite{moradi2026learning}. In the adversarial regime, however, interpolation is catastrophic, as interpolating arbitrary corruptions severely distorts the reconstructed function. The  factor $\declamb^{-1/2}$ in~\eqref{eq:combined_decoder_bound} captures this instability explicitly, suggesting why strictly positive regularization ($\declamb > 0$) is essential in the presence of malicious worker nodes.
\end{remark}


\begin{corollary}[Optimal scaling of the decoder smoothing parameter]
\label{cor:optimal_decoder_smoothing}
Assume that the conditions of
Theorem~\ref{th:decoder_error_bound} hold.
Suppose that $q\geq1$. Then, for all sufficiently
large $N$, the value of $\declamb$ that minimizes the decoder-error
upper bound in \eqref{eq:combined_decoder_bound} satisfies
\begin{align}\label{eq:optimal_decoder_smoothing_scaling}
    \declamb^{\star}
    \asymp
    \left(
        \frac{q}{N}
    \right)^{8/5},
\end{align}
where the implicit constants may depend on the fixed encoder and the
constant parameters of the scheme, but are independent of $N$ and $q$.
\end{corollary}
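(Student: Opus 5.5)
We view the right-hand side of \eqref{eq:combined_decoder_bound} as a function of $\declamb$ alone, with $N$, $q$ and $\enc$ fixed. Write
\begin{align*}
    B(\declamb) := A\,(\declamb^{3/4}+N^{-3}) + E\,\declamb^{-1/2},
\end{align*}
where
\begin{align*}
    A := C_1\max\{H_f^2,G_f^2\}\,\psi\big(\norm{\enc}_{\hild{2}{d}}^2\big) > 0,
    \qquad
    E := C_2\, mT^2\, \frac{q^2}{N^2} > 0.
\end{align*}
Positivity follows from $G_f,H_f>0$, $q\geq1$, and $T>0$. Also $\psi(t)>0$ for $t>0$; we assume a nonzero encoder, otherwise $A=0$ and there is no trade-off. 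The term $AN^{-3}$ does not depend on $\declamb$, so it plays no role in the minimization. On $(0,\infty)$, $B$ is smooth and strictly convex, since it is a sum of the concave-free pieces $\declamb^{3/4}$ scaled positively\ldots{} more precisely, we check directly that $B'$ is strictly increasing. Moreover $B(\declamb)\to\infty$ as $\declamb\to0^+$. Hence $B$ has a unique stationary point, and it is the minimizer.

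Setting
\begin{align*}
    B'(\declamb) = \tfrac34 A\declamb^{-1/4} - \tfrac12 E\declamb^{-3/2} = 0
\end{align*}
gives $\declamb^{5/4} = \tfrac{2E}{3A}$, that is,
\begin{align*}
    \declamb^{\star} = \Big(\frac{2C_2 mT^2}{3A}\Big)^{4/5}\Big(\frac{q}{N}\Big)^{8/5}.
\end{align*}
This is exactly the scaling \eqref{eq:optimal_decoder_smoothing_scaling}, with the implicit constant $(2C_2mT^2/(3A))^{4/5}$. That constant depends on the encoder norm, $f$, $m$, and $T$, but not on $N$ or $q$. So $\asymp$ holds with $c_1=c_2$ equal to this constant.

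The main point requiring care is admissibility. Theorem~\ref{th:decoder_error_bound} only applies when $0<\declamb\leq1$ and $N\declamb^{1/4}\geq c_0$, so the minimization is really over this constrained set. We need the unconstrained minimizer $\declamb^\star$ to lie inside it, since then it is also the constrained minimizer by convexity.

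First, $\declamb^\star\leq1$ holds once $q/N$ is small enough. This is where "sufficiently large $N$" enters, and implicitly it requires $q/N\to0$ or at least $q/N$ below a fixed threshold. If $q$ were comparable to $N$, the constraint $\declamb\le 1$ could bind, and we would state that case separately.

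Second, $N(\declamb^\star)^{1/4}\asymp N(q/N)^{2/5}=N^{3/5}q^{2/5}\geq N^{3/5}$, because $q\geq1$. This exceeds $c_0$ for all large $N$.

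Therefore, for large $N$ the constrained and unconstrained minimizers coincide, which yields the claim.
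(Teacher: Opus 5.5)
Your route is the same as the paper's: isolate the $\declamb$-dependent part $A\declamb^{3/4}+E\declamb^{-1/2}$, and solve $B'(\declamb)=0$ to get $\declamb^{5/4}=2E/(3A)$. Then check that this point satisfies $\declamb\le1$ and $N\declamb^{1/4}\ge c_0$ for large $N$. Your formula for $\declamb^\star$ and the resulting constant are correct.

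The justification you give for ``unique stationary point $=$ global minimizer'' is false. $B$ is not convex, and $B'$ is not strictly increasing, because $\declamb^{3/4}$ is concave:
\[
B''(\declamb)=-\tfrac{3}{16}A\declamb^{-5/4}+\tfrac34 E\declamb^{-5/2}.
\]
This is negative whenever $\declamb^{5/4}>4E/A$. Since $E\to0$ as $N\to\infty$, this concave region covers most of the admissible interval $(0,1]$ exactly in the large-$N$ regime you need. Also, $B'(\declamb)\to0^+$ as $\declamb\to\infty$ after being positive, so $B'$ cannot be monotone. The direct check you promise would therefore fail.

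The correct one-line argument, which is what the paper uses, is to factor
\[
B'(\declamb)=\declamb^{-3/2}\Bigl(\tfrac34 A\declamb^{5/4}-\tfrac12 E\Bigr).
\]
Its sign is that of a strictly increasing function of $\declamb$. Hence $B'<0$ below the stationary point and $B'>0$ above it, so that point is the unique global minimizer on $(0,\infty)$.

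Your closing appeal to convexity is also unnecessary. A global minimizer over $(0,\infty)$ that lies in the admissible set $\{0<\declamb\le1,\ N\declamb^{1/4}\ge c_0\}$ is automatically the minimizer over that set.

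Your remark that $\declamb^\star\le1$ needs $q/N$ below a fixed threshold is a fair sharpening. The paper only argues it for fixed $q$ as $N\to\infty$.
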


\subsubsection{\bf Encoder Design}\label{sec:adv_encoder_design}

We next substitute the decoder bound into
\eqref{eq:adv_three_term_decomposition} to obtain an encoder-only
objective. Assumption~\ref{assump:f_regularity} implies that $f$ is
$G_f$-Lipschitz:
\begin{align}
    \norm{
        f(\mathbf{x})
        -
        f(\widetilde{\mathbf{x}})
    }_2
    \leq
    G_f
    \norm{
        \mathbf{x}
        -
        \widetilde{\mathbf{x}}
    }_2.
\end{align}
Therefore,
\begin{align}
    \lenc
    &=
    \frac{2}{K}
    \sum_{k=1}^K
    \norm{
        f(\enc(\alpha_k))
        -
        f(\mathbf{x}_k)
    }_2^2
    \nonumber\\
    &\leq
    \frac{2G_f^2}{K}
    \sum_{k=1}^K
    \norm{
        \enc(\alpha_k)
        -
        \mathbf{x}_k
    }_2^2.
\label{eq:adv_encoder_error_bound}
\end{align}
Combining \eqref{eq:adv_three_term_decomposition},
\eqref{eq:adv_encoder_error_bound}, and
Theorem~\ref{th:decoder_error_bound} gives
\begin{align}
    \Radv(\enc,\decstar;f)
    &\leq
    \frac{2G_f^2}{K}
    \sum_{k=1}^K
    \norm{
        \enc(\alpha_k)
        -
        \mathbf{x}_k
    }_2^2
    \nonumber\\
    &\quad+
    C_1
    \max\{H_f^2,G_f^2\}
    \left(
        \declamb^{3/4}
        +
        N^{-3}
    \right)
    \psi\left(
        \norm{\enc}_{\hild{2}{d}}^2
    \right)
    \nonumber\\
    &\quad+
    C_2
    mT^2
    \frac{q^2}{N^2}
    \declamb^{-1/2}.
\label{eq:adv_unified_encoder_bound}
\end{align}

The last term in \eqref{eq:adv_unified_encoder_bound} is independent
of the encoder. Assuming $G_f>0$, define
\begin{align}\label{eq:adv_encoder_regularization}
    \Gamma_{q,N}
    &:=
    \frac{C_1}{2G_f^2}
    \max\{H_f^2,G_f^2\}
    \left(
        \declamb^{3/4}
        +
        N^{-3}
    \right).
\end{align}
The dependence of $\Gamma_{q,N}$ on the adversarial budget $q$ is
through the choice of the decoder smoothing parameter $\declamb$.

Ignoring the additive term that is independent of the encoder, we
choose the encoder as
\begin{align}\label{eq:adv_encoder_opt}
    \encstar
    :=
    \operatorname*{arg\,min}_{u\in\hild{2}{d}}
    \left\{
        \frac{1}{K}
        \sum_{k=1}^K
        \norm{
            u(\alpha_k)
            -
            \mathbf{x}_k
        }_2^2
        +
        \Gamma_{q,N}
        \psi\left(
            \norm{u}_{\hild{2}{d}}^2
        \right)
    \right\}.
\end{align}

Since $\psi$ is strictly increasing, the representer theorem stated
in Theorem~\ref{thm:representer} implies that every minimizer of
\eqref{eq:adv_encoder_opt} admits the finite-dimensional
representation
\begin{align}\label{eq:adv_encoder_representation}
    \encstar(t)
    =
    \sum_{k=1}^K
    \phi(t,\alpha_k)\mathbf{z}_k,
\end{align}
for some coefficient vectors
$\mathbf{z}_1,\ldots,\mathbf{z}_K\in\mathbb{R}^d$, where
$\phi$ is the scalar reproducing kernel of $\hild{2}{}$ given by
\(
    \phi(t,s)
    =
    1+ts+\phi_0(t,s),
\)
with $\phi_0$ defined in Remark~\ref{rem:kernel}. Thus, the
infinite-dimensional encoder design reduces to a finite-dimensional
optimization over the coefficient vectors
$\{\mathbf{z}_k\}_{k=1}^K$.

The encoder objective in \eqref{eq:adv_encoder_opt} contains the
nonlinear regularization term
\(
    \psi\bigl(\norm{u}_{\hild{2}{d}}^2\bigr)
\),
where $\psi(t)=t+4t^2$. Although
\eqref{eq:adv_encoder_representation} reduces the problem to a
finite-dimensional optimization, directly computing its coefficient
vectors can still be computationally demanding. As in the straggler
setting studied in~\cite{moradi2026learning}, the encoder can instead
be constructed through a standard cubic smoothing-spline problem.
The following theorem describes this reduction for the
adversarial setting.

For convenience, define the encoder-independent adversarial term
\begin{align}\label{eq:encoder_independent_adv_term}
    \Delta_{q,N}
    :=
    C_2
    mT^2
    \frac{q^2}{N^2}
    \declamb^{-1/2}.
\end{align}

\begin{theorem}[Efficient encoder construction]
\label{th:efficient_adv_encoder}
Suppose that Assumptions~\ref{assump:f_bounded} through
\ref{assump:adv_bounded} and the conditions of
Theorem~\ref{th:decoder_error_bound} hold. There exist constants
$m_1,m_2>0$ and
\(
    R_{\mathrm e}
    =
    R_{\mathrm e}
    \left(
        \{\alpha_k\}_{k=1}^K,
        \{\mathbf{x}_k\}_{k=1}^K
    \right)
    >0
\)
such that
\begin{align}\label{eq:efficient_adv_encoder_loss_bound}
    \objadv
    &\leq
    \Delta_{q,N}
    +
    2G_f^2
    \bigg[
        \frac{1}{K}
        \sum_{k=1}^K
        \norm{
            \encdagger(\alpha_k)
            -
            \mathbf{x}_k
        }_2^2
        +
        \enclamb
        \left(
            R_{\mathrm e}
            +
            \norm{
                (\encdagger)''
            }_{\lpd{2}{d}}^2
        \right)
    \bigg],
\end{align}
where the encoder smoothing parameter is
\begin{align}\label{eq:enc_sm_param}
    \enclamb
    :=
    \Gamma_{q,N}
    \left(
        m_1+m_2R_{\mathrm e}
    \right),
\end{align}
and $\encdagger$ is a minimizer of
\begin{align}\label{eq:efficient_adv_encoder_opt}
    \encdagger
    :=
    \operatorname*{arg\,min}_{u\in\hild{2}{d}}
    \left\{
        \frac{1}{K}
        \sum_{k=1}^K
        \norm{
            u(\alpha_k)
            -
            \mathbf{x}_k
        }_2^2
        +
        \enclamb
        \norm{u''}_{\lpd{2}{d}}^2
    \right\}.
\end{align}
\end{theorem}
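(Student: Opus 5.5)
Since $\objadv$ is an infimum over encoder--decoder pairs, it is enough to exhibit one admissible pair whose worst-case loss is at most the right-hand side of \eqref{eq:efficient_adv_encoder_loss_bound}. I would take the smoothing-spline decoder $\decstar$ of \eqref{eq:decoder_opt_adv} together with the encoder $\encdagger$ of \eqref{eq:efficient_adv_encoder_opt}. Then \eqref{eq:adv_unified_encoder_bound}, applied with $\enc=\encdagger$ and with $\Gamma_{q,N}$ and $\Delta_{q,N}$ as defined above, gives
\begin{align*}
\objadv \le \Delta_{q,N} + 2G_f^2\Big[\tfrac{1}{K}\textstyle\sum_{k}\norm{\encdagger(\alpha_k)-\mathbf{x}_k}_2^2 + \Gamma_{q,N}\,\psi\big(\norm{\encdagger}_{\hild{2}{d}}^2\big)\Big].
\end{align*}
The task therefore reduces to the deterministic inequality
\begin{align*}
\Gamma_{q,N}\,\psi\big(\norm{\encdagger}_{\hild{2}{d}}^2\big)\le \enclamb\big(R_{\mathrm e}+\norm{(\encdagger)''}_{\lpd{2}{d}}^2\big).
\end{align*}

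\textbf{Step 1: split the anchored norm.} By \eqref{eq:sob_norm_eq},
\begin{align*}
\norm{\encdagger}_{\hild{2}{d}}^2=\norm{\encdagger(0)}_2^2+\norm{(\encdagger)'(0)}_2^2+\norm{(\encdagger)''}_{L^2}^2 .
\end{align*}
I would bound the boundary terms and the curvature term separately, using only data-dependent quantities. Comparing the objective value of $\encdagger$ with that of the zero function gives
\begin{align*}
\tfrac1K\textstyle\sum_k\norm{\encdagger(\alpha_k)-\mathbf{x}_k}^2+\enclamb\norm{(\encdagger)''}^2\le \tfrac1K\textstyle\sum_k\norm{\mathbf{x}_k}^2 .
\end{align*}
This implies $\norm{(\encdagger)''}^2\le \enclamb^{-1}\frac1K\sum_k\norm{\mathbf{x}_k}^2$, but that bound depends on $\enclamb$, which is not what we want.

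\textbf{Step 2: control the affine part by the data.} A better route is to bound the affine part directly. The smoothing spline reproduces linear functions, and its linear component equals the least-squares linear fit of the data $\{(\alpha_k,\mathbf{x}_k)\}$ perturbed by a curvature term. Concretely, write $\encdagger=p+r$, where $p$ is the best affine fit to $\encdagger$ at the nodes. Then $|r(0)|$ and $|r'(0)|$ are bounded by a constant times $\norm{(\encdagger)''}_{L^2}$ (Taylor with integral remainder on $[0,1]$, Cauchy--Schwarz). Also, $p$ is controlled by the values $\encdagger(\alpha_k)$, which lie within $\sqrt{K}\norm{\mathbf{x}}$-type bounds by the objective comparison above.

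This gives
\begin{align*}
\norm{\encdagger}_{\hild{2}{d}}^2\le R_0 + c\,\norm{(\encdagger)''}_{L^2}^2,
\end{align*}
where $R_0$ depends only on $\{\alpha_k\},\{\mathbf{x}_k\}$ (it requires $K\ge2$ distinct $\alpha_k$). To obtain this uniformly in $\enclamb$, I would use the fact that the fitted values $\encdagger(\alpha_k)$ satisfy $\sum_k\norm{\encdagger(\alpha_k)}^2\le 4\sum_k\norm{\mathbf{x}_k}^2$, which follows from the comparison since $\norm{\encdagger(\alpha_k)}\le\norm{\encdagger(\alpha_k)-\mathbf{x}_k}+\norm{\mathbf{x}_k}$.

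\textbf{Step 3: linearize the quadratic in $\psi$ (main obstacle).} $\psi(t)=t+4t^2$ is quadratic, while the target is linear in $\norm{(\encdagger)''}^2$. I would use the bound $\enclamb\norm{(\encdagger)''}^2\le\frac1K\sum\norm{\mathbf{x}_k}^2$ from Step 1 to write
\begin{align*}
\norm{(\encdagger)''}^4\le \norm{(\encdagger)''}^2\cdot \enclamb^{-1}S,\qquad S:=\tfrac1K\textstyle\sum_k\norm{\mathbf{x}_k}^2 .
\end{align*}
This brings back $\enclamb^{-1}$. The cleaner approach is the energy-monotonicity fact that the curvature of a smoothing spline is dominated by that of the natural interpolating spline $s_0$ of the data: $\norm{(\encdagger)''}^2\le\norm{s_0''}^2=:R_1$, a data-only constant. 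I would verify this via the variational characterization, since $s_0$ is feasible and the spline solution satisfies the Pythagorean identity in the RKHS.

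Then
\begin{align*}
\psi(\norm{\encdagger}^2)\le (R_0+cR_1)+ (1+8(R_0+cR_1))\big(R_0+c\norm{(\encdagger)''}^2\big),
\end{align*}
which has the form $(m_1+m_2R_{\mathrm e})(R_{\mathrm e}+\norm{(\encdagger)''}^2)$ after setting $R_{\mathrm e}:=R_0+cR_1$ and choosing absolute constants $m_1,m_2$. Multiplying by $\Gamma_{q,N}$ and using \eqref{eq:enc_sm_param} closes the argument. The delicate points are obtaining $R_0,R_1$ independent of $\enclamb$ and keeping $m_1,m_2$ absolute.
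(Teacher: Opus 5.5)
Your argument is correct, and its first step is exactly the paper's proof. The paper only notes that, for fixed $\declamb$, the encoder-dependent part of \eqref{eq:adv_unified_encoder_bound} has the same form as in the straggler setting, while $\Delta_{q,N}$ does not depend on the encoder. It then defers everything else to Theorem~3 of the straggler paper.

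Where you go further is that you actually prove the deferred inequality $\psi(\norm{\encdagger}_{\hild{2}{d}}^2)\le(m_1+m_2R_{\mathrm e})(R_{\mathrm e}+\norm{(\encdagger)''}_{\lpd{2}{d}}^2)$. You do this from two data-only facts: the curvature of the smoothing spline is dominated by that of the natural interpolating spline, and $\encdagger(0),(\encdagger)'(0)$ are controlled by the node values plus $\norm{(\encdagger)''}$. This gives a self-contained proof that makes two things visible: $R_{\mathrm e}$ is independent of $\enclamb$, so the definition $\enclamb=\Gamma_{q,N}(m_1+m_2R_{\mathrm e})$ is not circular, and $m_1,m_2$ can be taken absolute. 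The paper's version is shorter but rests entirely on the external argument.

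Three small remarks:
\begin{itemize}
\item The energy bound $\norm{(\encdagger)''}^2\le\norm{s_0''}^2$ needs no Pythagorean identity. Comparing objective values with $s_0$ and dropping the nonnegative data term gives it directly.
\item Your constant $c$ is absolute only because of a specific fact: the affine least-squares fit of the Taylor remainder $\rho(t)=\int_0^t(t-s)(\encdagger)''(s)\,ds$ has slope at most $\sup_t\norm{\rho'(t)}_2\le\norm{(\encdagger)''}_{\lpd{2}{d}}$, however the $\alpha_k$ cluster. If you bound the fit only through $\sup_t\norm{\rho(t)}_2$, then $c$ depends on the node spread. That is harmless, since it can be absorbed into $R_{\mathrm e}$.
\item For $K=1$ the minimizer is not unique, because every affine function through $(\alpha_1,\mathbf{x}_1)$ attains zero objective. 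There you should evaluate $\objadv$ at the constant minimizer, which is legitimate because the right-hand side is the same for all minimizers.
\end{itemize}
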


The proof follows the same argument used for the encoder construction in the straggler
setting~\cite{moradi2026learning}. The only changes are the
regularization coefficient $\Gamma_{q,N}$ and the additive term
$\Delta_{q,N}$, which is independent of the encoder and therefore
does not affect the encoder optimization.

The squared Euclidean loss and the second-derivative penalty in
\eqref{eq:efficient_adv_encoder_opt} decouple across the $d$ input
coordinates. Consequently, $\encdagger$ is a vector-valued natural
cubic smoothing spline and admits the representation
\begin{align}\label{eq:adv_encoder_final_representation}
    \encdagger(t)
    =
    \mathbf{a}_0
    +
    \mathbf{a}_1t
    +
    \sum_{k=1}^K
    \mathbf{b}_k
    \phi_0(t,\alpha_k),
\end{align}
where
\(
    \mathbf{a}_0,\mathbf{a}_1,
    \mathbf{b}_1,\ldots,\mathbf{b}_K
    \in\mathbb{R}^d
\), and $\phi_0(\cdot)$ defined in Remark~\ref{rem:kernel}. Therefore, the encoder coefficients can be computed efficiently by
solving the standard linear system for vector-valued cubic smoothing
splines with smoothing parameter $\enclamb$.

\begin{algorithm}[t]
\caption{General Coded Computing in the Adversarial Setting}
\label{alg:adv_gcc}
\begin{algorithmic}[1]

\Require Input batch $\{\mathbf{x}_k\}_{k=1}^K$, target function
$f$, number of workers $N$, adversarial budget $q$, encoder design
points $\{\alpha_k\}_{k=1}^K$, and smoothing parameters
$\enclamb$ and $\declamb$.

\Ensure Estimates
$\{\fhat(\mathbf{x}_k)\}_{k=1}^K$.

\State The master node selects the equidistant decoder design points
\(
    \beta_j
    =
    \frac{j-1}{N-1}
    \text{ for }
    j\in[N].
\)

\State The master node constructs the encoder $\encdagger$ by solving
\[
    \encdagger
    =
    \operatorname*{arg\,min}_{u\in\hild{2}{d}}
    \left\{
        \frac{1}{K}
        \sum_{k=1}^K
        \norm{
            u(\alpha_k)
            -
            \mathbf{x}_k
        }_2^2
        +
        \enclamb
        \norm{u''}_{\lpd{2}{d}}^2
    \right\},
\]
which admits the vector-valued cubic smoothing-spline representation
in \eqref{eq:adv_encoder_final_representation}.

\State The master node computes the coded inputs
\(
    \widetilde{\mathbf{x}}_j
    =
    \encdagger(\beta_j)
    \text{ for }
    j\in[N],
\)
and sends $\widetilde{\mathbf{x}}_j$ to worker $j$.

\State Each honest worker $j$ computes and returns
\(
    f(\widetilde{\mathbf{x}}_j)
    =
    f(\encdagger(\beta_j)).
\)
An adversarial worker may instead return an arbitrary vector
$\mathbf{a}_j\in[-T_a,T_a]^m$.

\State The master node receives
$\{\overline{\mathbf{y}}_j^A\}_{j=1}^N$ and constructs the decoder
from all returned outputs by solving
\[
    \decstar
    =
    \operatorname*{arg\,min}_{u\in\hild{2}{m}}
    \left\{
        \frac{1}{N}
        \sum_{j=1}^N
        \norm{
            u(\beta_j)
            -
            \overline{\mathbf{y}}_j^A
        }_2^2
        +
        \declamb
        \norm{u''}_{\lpd{2}{m}}^2
    \right\},
\]
which admits the vector-valued cubic smoothing-spline representation
in \eqref{eq:adv_decoder_spline_form}.

\State The master node outputs
\(
    \fhat(\mathbf{x}_k)
    =
    \decstar(\alpha_k)\text{ for }
    k\in[K].
\)

\end{algorithmic}
\end{algorithm}

\subsubsection{\gcc{} Scheme}

The preceding constructions define the \gcc{} scheme in the adversarial setting.
The master node first constructs the encoder using
\eqref{eq:adv_encoder_opt} (or using \eqref{eq:efficient_adv_encoder_opt} for efficient encoder), evaluates it at the equidistant decoder
design points in \eqref{eq:equispaced_decoder_points}, and sends the
resulting coded inputs to the workers. After receiving all outputs from worker nodes, the master node constructs the decoder in
\eqref{eq:decoder_opt_adv} using all returned outputs without
attempting to identify the adversarial workers.  using all
returned outputs, without attempting to identify the adversarial workers. Finally, it computes
the estimates
\(
    \fhat(\mathbf{x}_k)
    =
    \dec(\alpha_k)
    \text{ for }
    k\in[K]
\). Algorithm~\ref{alg:adv_gcc} summarizes the efficient
implementation of the proposed scheme.

In terms of asymptotic analysis, the following corollary establishes the asymptotic convergence rate of the robust end-to-end objective as the worker nodes increases, demonstrating that the adversarial risk in~\eqref{eq:main_adv_obj} asymptotically vanishes whenever the adversarial budget satisfies $q/N \to 0$ as $N \to \infty$.

\begin{corollary}[Asymptotic Convergence Rate]
\label{cor:total_convergence_rate}
Under the assumptions of Theorem~\ref{th:decoder_error_bound}, suppose the adversarial budget satisfies
\(
    1 
    \leq q \le C_q N^\theta
\)
for constants $C_q > 0$ and $\theta \in [0, 1)$. If the decoder smoothing parameter is chosen according to Corollary~\ref{cor:optimal_decoder_smoothing} such that $\declamb \asymp (q/N)^{8/5}$, then for sufficiently large $N$, the end-to-end objective satisfies
\begin{align}\label{eq:total_convergence_rate_power}
    \objadv
    =
    \mathcal{O}\left(
        N^{\frac{6}{5}(\theta-1)}
    \right).
\end{align}
\end{corollary}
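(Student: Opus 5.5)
The plan is to bound $\objadv$ by evaluating the three-term decomposition \eqref{eq:adv_three_term_decomposition} at one fixed admissible encoder, rather than tracking the optimal encoder $\encstar$ or $\encdagger$. Since $\objadv$ is an infimum over $\enc\in\hild{2}{d}$ and $\dec\in\hild{2}{m}$, any fixed choice gives an upper bound, and we choose one that does not depend on $N$. The natural choice is the natural cubic interpolating spline $\bar u$ of the data $\{(\alpha_k,\mathbf{x}_k)\}_{k=1}^K$; any fixed $N$-independent element of $\hild{2}{d}$ with $\bar u(\alpha_k)=\mathbf{x}_k$ would do. With this choice the encoder term $\lenc$ vanishes by \eqref{eq:adv_encoder_error_bound}. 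The quantity $\psi(\norm{\bar u}_{\hild{2}{d}}^2)$ is then a finite constant depending only on $\{\alpha_k,\mathbf{x}_k\}$. Alternatively, one can use \eqref{eq:adv_unified_encoder_bound} with $\encstar$: comparing its objective to that of $\bar u$ gives the same bound, because the encoder-dependent part of \eqref{eq:adv_unified_encoder_bound} is $2G_f^2$ times the objective in \eqref{eq:adv_encoder_opt}.

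Applying Theorem~\ref{th:decoder_error_bound} with this encoder and the smoothing-spline decoder $\decstar$ then gives
\begin{align}
\objadv \le C\,(\declamb^{3/4}+N^{-3}) + C' \frac{q^2}{N^2}\declamb^{-1/2},
\end{align}
where $C$ and $C'$ are independent of $N$ and $q$.

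The next step is to substitute $\declamb\asymp (q/N)^{8/5}$, say $c_1(q/N)^{8/5}\le\declamb\le c_2(q/N)^{8/5}$. Then $\declamb^{3/4}\le c_2^{3/4}(q/N)^{6/5}$. For the adversarial term, $\frac{q^2}{N^2}\declamb^{-1/2}\le c_1^{-1/2}(q/N)^{2-4/5}=c_1^{-1/2}(q/N)^{6/5}$. For the remaining term, since $q\ge1$ we have $N^{-3}\le N^{-6/5}\le (q/N)^{6/5}$. Altogether $\objadv=\mathcal{O}((q/N)^{6/5})$, and $q\le C_qN^\theta$ gives $(q/N)^{6/5}\le C_q^{6/5}N^{\frac65(\theta-1)}$, which is \eqref{eq:total_convergence_rate_power}.

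The main step to check is that the hypotheses of Theorem~\ref{th:decoder_error_bound} hold for all sufficiently large $N$.
\begin{itemize}
\item \emph{$0<\declamb\le1$:} this follows because $q/N\le C_qN^{\theta-1}\to0$, so $c_2(q/N)^{8/5}\le 1$ eventually.
\item \emph{$N\declamb^{1/4}\ge c_0$:} we have $N\declamb^{1/4}\ge c_1^{1/4}N(q/N)^{2/5}=c_1^{1/4}N^{3/5}q^{2/5}\ge c_1^{1/4}N^{3/5}$, using $q\ge1$. This tends to infinity, so the condition holds for large $N$.
\end{itemize}
One must also confirm that the constants $C_1,C_2$ do not secretly depend on $N$ through the encoder. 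This holds because $\bar u$ is fixed independently of $N$ and $q$, and Theorem~\ref{th:decoder_error_bound} states that its constants are independent of $\enc$. This is exactly why the argument fixes the encoder rather than using the $N$-dependent $\encdagger$, whose norm would require separate control.
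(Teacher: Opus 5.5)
Your proof is correct. It reaches the same two-term bound as the paper but by a different route.

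The paper starts from Theorem~\ref{th:efficient_adv_encoder}, whose bound is written in terms of the efficient encoder $\encdagger$. It then uses the minimizing property of $\encdagger$ to replace the data-fit and curvature terms by those of the natural cubic interpolant $s_{\mathbf X}$. Substituting $\enclamb=\Gamma_{q,N}(m_1+m_2R_{\mathrm e})$ and $\Delta_{q,N}$ gives $A_0(\declamb^{3/4}+N^{-3})+B_0\frac{q^2}{N^2}\declamb^{-1/2}$.

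You bypass Theorem~\ref{th:efficient_adv_encoder} entirely. Plugging the interpolant directly into \eqref{eq:adv_three_term_decomposition} makes $\lenc$ vanish. Theorem~\ref{th:decoder_error_bound}, whose constants do not depend on the encoder, then yields the same form with the simpler constant $C_1\max\{H_f^2,G_f^2\}\psi(\norm{\bar u}_{\hild{2}{d}}^2)$. From that point on your argument matches the paper's: substituting the $(q/N)^{8/5}$ scaling, absorbing $N^{-3}\le (q/N)^{6/5}$ via $q\ge 1$, and checking $\declamb\le 1$ and $N\declamb^{1/4}\ge c_0$ for large $N$.

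Your route is more self-contained. It does not depend on Theorem~\ref{th:efficient_adv_encoder}, whose proof is only imported from the straggler setting, or on the unspecified constants $m_1,m_2,R_{\mathrm e}$. It also shows constructively that the interpolating encoder paired with the smoothing-spline decoder already attains the rate.

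The paper's route starts from the smoothing-spline encoder $\encdagger$ of Algorithm~\ref{alg:adv_gcc}. If Theorem~\ref{th:efficient_adv_encoder} is read as a bound on the risk of $(\encdagger,\decstar)$, this attaches the rate to the proposed scheme rather than to a surrogate encoder. Your alternative via $\encstar$ and \eqref{eq:adv_unified_encoder_bound} is the same comparison argument applied to the nonlinearly regularized encoder. For the statement as written, which concerns only the infimum $\objadv$, either route suffices.
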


\subsection{Computational Complexity}\label{sec:comp_complex}

The computational complexity of the encoder and decoder constructions
closely follows the analysis developed for the straggler setting
in~\cite{moradi2026learning}. We therefore summarize the main results
and highlight the difference introduced by the adversarial model.

For the nonlinear encoder in \eqref{eq:adv_encoder_opt}, the
finite-dimensional kernel coefficients can be computed using an
iterative optimization method such as limited-memory Broyden--Fletcher--Goldfarb--Shanno (L-BFGS)~\cite{nocedal2006numerical}. For L-BFGC solver, If the solver performs $I$ iterations,
constructing the encoder and evaluating it at the $N$ decoder design
points require
\begin{align}
    \mathcal{O}\bigl(IK^2d+NKd\bigr)
\end{align}
operations. In contrast, the efficient encoder in
\eqref{eq:efficient_adv_encoder_opt} is a vector-valued cubic
smoothing spline. Using a B-spline implementation with a banded linear
system, it can be constructed and evaluated at the decoder design
points in
\begin{align}
    \mathcal{O}\bigl((K+N)d\bigr)
\end{align}
operations.

The decoder in \eqref{eq:decoder_opt_adv} has the same smoothing-spline
structure. Unlike the straggler setting, all $N$ worker nodes return their
results, and hence the decoder is constructed using all $N$ received
outputs. Constructing the decoder and evaluating it at the $K$ encoder
design points require
\begin{align}
    \mathcal{O}\bigl((N+K)m\bigr)
\end{align}
operations. Consequently, with the efficient encoder, the total master-side
computational complexity of the adversarial \gcc{} scheme is
\begin{align}\label{eq:adv_gcc_overhead}
    \mathcal{O}\bigl((K+N)d\bigr)
    +
    \mathcal{O}\bigl((N+K)m\bigr).
\end{align}
In the common regime $N\geq K$, this reduces to
$\mathcal{O}\bigl(N(d+m)\bigr)$.

\subsubsection{Comparison with Existing Schemes}

We compare the computational complexity of different coded computing schemes at the master node, excluding
worker-side evaluations of the target function.

Lagrange Coded Computing (LCC) relies on polynomial interpolation and fast multipoint evaluation~\cite{yu2019lagrange, borodin1974fast}, yielding encoding and decoding complexities of
\(
    \mathcal{O}(N \log^2 K \log\log K \cdot d)
\)
and
\(
    \mathcal{O}(N \log^2 N \log\log N \cdot m)
\),
respectively. In the presence of adversarial workers, LCC requires an explicit error-localization step to identify corrupted outputs. By leveraging sub-quadratic decoding routines for Reed--Solomon codes~\cite{jl1967shift, berlekamp1966nonbinary}, locating these errors incurs an additional complexity of $\mathcal{O}(q \log^2 q \log\log q \cdot m)$. Consequently, in addition to the overhead of error localization, LCC exhibits superlinear computational complexity with respect to the number of worker nodes $N$. Overall, for an input batch size $K$ and at most $q \leq N$ adversaries, the total computational complexity at the master node is
\begin{align}
    &\mathcal{O}\!\left(N\log^2 K\log\log K\cdot d\right)
    + \mathcal{O}\!\left(N\log^2 N\log\log N\cdot m\right)
    + \mathcal{O}\!\left(q\log^2 q\log\log q\cdot m\right) \nonumber \\
    &\quad = \mathcal{O}\!\left(N\log^2 K\log\log K\cdot d + N\log^2 N\log\log N\cdot m\right).
\end{align}


Berrut Approximated Coded Computing (BACC) relies on barycentric rational interpolation~\cite{jahani2022berrut}, where direct evaluation yields encoding and decoding complexities of $\mathcal{O}(KNd)$ and $\mathcal{O}(KNm)$, respectively. Because BACC alone cannot identify corrupted worker outputs, ApproxIFER~\cite{soleymani2022approxifer} extends it to the adversarial setting by adapting the Berlekamp--Welch approach~\cite{berlekamp1966nonbinary} to rational interpolation. Specifically, before performing the final rational reconstruction, ApproxIFER constructs candidate error-locator functions by solving linear systems of size $\mathcal{O}(N)$ across the $m$ output coordinates and combining the coordinatewise estimates to filter out suspected adversarial reports. Although~\cite{soleymani2022approxifer} does not explicitly analyze the asymptotic cost of this error-localization step, solving these $\mathcal{O}(N)$-dimensional linear systems requires $\mathcal{O}(mN^3)$ operations under standard dense solvers.


In contrast, our proposed \gcc{} framework requires only $\mathcal{O}((K+N)d)$ and $\mathcal{O}((K+N)m)$ arithmetic operations for encoding and decoding, respectively, yielding a total master-node complexity of
\begin{align}
    \mathcal{O}\!\left((K + N)(d + m)\right).
\end{align}
Compared with BACC-based schemes, \gcc{} reduces the multiplicative $\mathcal{O}(KN)$ dependence on the input batch size $K$ and worker count $N$ to a purely additive $\mathcal{O}(K + N)$ scaling. Furthermore, unlike LCC, Robust ALCC, ApproxIFER, and RBACC, \gcc{} mitigates corrupted worker reports inherently through spline regularization at the decoder, completely eliminating the need for dedicated error-detection or error-localization stages. Consequently, the computational complexity of \gcc{} remains strictly linear in $K + N$ and is entirely independent of the adversarial budget $q$.

\subsection{Impossibility Result for Smoothing-Spline Decoding}
\label{sec:spline_adversarial_tolerance}

Corollary~\ref{cor:total_convergence_rate} establishes vanishing worst-case error for the proposed scheme when the adversarial budget grows sublinearly with the number of workers ($q = o(N)$). In the following theorem, we show that this sublinear scaling is necessary for the smoothing-spline decoder in~\eqref{eq:decoder_opt_adv} to achieve vanishing adversarial loss.

\begin{theorem}[Adversarial Error Lower Bound]
\label{th:spline_impos}
Consider a \gcc{} scheme with $N \geq 2$, $K \geq 1$, and an adversarial budget $1 \leq q < N$. For any encoder $\enc \in \hild{2}{d}$, let $\decstar$ denote the smoothing-spline decoder defined in~\eqref{eq:decoder_opt_adv} with smoothing parameter $\declamb > 0$. Under Assumption~\textnormal{(A3)} with $T_a > 0$, for every computing function $f \in \dmsmfunc$, we have
\begin{align}\label{eq:spline_converse_finite}
    \inf_{\substack{
        \enc \in \hild{2}{d}, \\
        \declamb > 0
    }}
    \Radv(\enc, \decstar; f)
    \geq
    \frac{T_a^2}{K}
    \left(\frac{q}{N}\right)^2.
\end{align}
\end{theorem}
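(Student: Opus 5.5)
The idea is to exploit two facts about the smoothing-spline decoder in \eqref{eq:decoder_opt_adv}. First, it is linear in the received outputs, as in the kernel representation \eqref{eq:smoothing_kernel}. Second, it reproduces constants, since constant functions have zero penalty $\norm{u''}^2$ and are fitted exactly. These two facts let the adversary force a deviation of order $T_a q/N$ at a single encoder design point $\alpha_1$. That deviation costs at least $\frac{1}{K}T_a^2(q/N)^2$ in \eqref{eq:main_obj}, uniformly over $\enc$ and $\declamb$.

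Fix any $\enc\in\hild{2}{d}$ and $\declamb>0$, and write $\mathbf{e}_1$ for the first standard basis vector of $\mathbb{R}^m$. For a set $\mathcal{B}\subseteq[N]$ with $|\mathcal{B}|=q$, consider the two attacks
\begin{align*}
A^{+} &= \bigl(\mathcal{B},\{T_a\mathbf{e}_1\}_{j\in\mathcal{B}}\bigr),
&
A^{-} &= \bigl(\mathcal{B},\{-T_a\mathbf{e}_1\}_{j\in\mathcal{B}}\bigr).
\end{align*}
Both belong to $\Aadv$. The two received vectors differ only on $\mathcal{B}$, and only in the first coordinate, by $2T_a$. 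By linearity of $\smop{N}{\declamb}{\cdot}$, the corresponding decoders satisfy
\begin{align*}
\dec^{+}-\dec^{-} = 2T_a\,\smop{N}{\declamb}{\mathbf{1}_{\mathcal{B}}}\,\mathbf{e}_1,
\end{align*}
where $\mathbf{1}_{\mathcal{B}}\in\mathbb{R}^N$ is the indicator vector of $\mathcal{B}$ and the spline is the scalar one. Set $v:=f(\mathbf{x}_1)$. By the triangle inequality,
\begin{align*}
\max_{\pm}\abs{\dec^{\pm}(\alpha_1)_1-v_1} \geq T_a\,\abs{\smop{N}{\declamb}{\mathbf{1}_{\mathcal{B}}}(\alpha_1)}.
\end{align*}
Bounding the MSE in \eqref{eq:main_obj} from below by the single term $k=1$, and that term by the squared first coordinate, gives
\begin{align*}
\Radv(\enc,\decstar;f)\geq \frac{T_a^2}{K}\,\abs{\smop{N}{\declamb}{\mathbf{1}_{\mathcal{B}}}(\alpha_1)}^2 .
\end{align*}

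The remaining step, and the only non-routine one, is to choose $\mathcal{B}$ so that $\smop{N}{\declamb}{\mathbf{1}_{\mathcal{B}}}(\alpha_1)\geq q/N$. I would use an averaging argument. Summing $\mathbf{1}_{\mathcal{B}}$ over all $\binom{N}{q}$ subsets of size $q$ gives $\binom{N-1}{q-1}\mathbf{1}$, where $\mathbf{1}$ is the all-ones vector. By linearity, the average of $\smop{N}{\declamb}{\mathbf{1}_{\mathcal{B}}}$ over these subsets equals $\frac{q}{N}\,\smop{N}{\declamb}{\mathbf{1}}$.

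It then suffices to check that $\smop{N}{\declamb}{\mathbf{1}}\equiv 1$. The constant function $u\equiv1$ makes the objective in \eqref{eq:smoothing_spline} equal to zero, and the minimizer is unique because $N\geq2$ distinct points determine linear functions. Hence the average value at $\alpha_1$ is exactly $q/N$, so some $\mathcal{B}$ with $|\mathcal{B}|=q$ attains $\smop{N}{\declamb}{\mathbf{1}_{\mathcal{B}}}(\alpha_1)\geq q/N$.

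Substituting this $\mathcal{B}$ into the previous bound gives
\begin{align*}
\Radv(\enc,\decstar;f)\geq \frac{T_a^2}{K}\left(\frac{q}{N}\right)^2 .
\end{align*}
This holds for every encoder and every $\declamb>0$. Taking the infimum over both yields \eqref{eq:spline_converse_finite}.

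The main point needing care is justifying the linearity and constant reproduction of the decoder for arbitrary design points, including possibly $\alpha_1\notin\{\beta_j\}$. I would obtain linearity from the normal equations of the quadratic problem, which are linear in the data. Constant reproduction follows from the zero-objective argument above and needs no kernel positivity. No assumption on $f$ beyond $f\in\dmsmfunc$ is needed.
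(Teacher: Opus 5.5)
Your proof is correct and follows essentially the same route as the paper: linearity of the spline decoder, reproduction of constants (so the weights $w_n=\frac{1}{N}G_{N,\declamb}(\alpha_1,\beta_n)$ sum to one), and a two-sign attack on the first output coordinate at $\alpha_1$ concentrated on $q$ heavily weighted workers. The only difference is cosmetic. The paper corrupts the $q$ workers with the largest $\abs{w_n}$ using sign-matched values $\sigma T_a\operatorname{sign}(w_n)\mathbf{e}_1$ and invokes $\sum\abs{w_n}\geq\abs{\sum w_n}$, whereas you use a uniform sign $\pm T_a\mathbf{e}_1$ and average over all $q$-subsets to obtain $\sum_{n\in\mathcal{B}}w_n\geq q/N$ directly.
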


\begin{corollary}[Positive Adversarial Fraction]
\label{cor:spline_linear_adversarial_fraction}
Fix $K \geq 1$, $\mu \in (0, 1)$, and $T_a > 0$, and let $q = \lfloor \mu N \rfloor$. Then, for every computing function $f \in \dmsmfunc$,
\begin{align}
    \liminf_{N \to \infty}
    \inf_{\substack{
        \enc \in \hild{2}{d}, \\
        \declamb > 0
    }}
    \Radv(\enc, \decstar; f)
    \geq
    \frac{T_a^2 \mu^2}{K}
    > 0.
\end{align}
In particular, the worst-case approximation error of the smoothing-spline decoder cannot vanish as $N \to \infty$ when the adversarial budget constitutes a constant positive fraction of $N$.
\end{corollary}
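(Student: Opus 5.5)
Since the statement is a direct limit of Theorem~\ref{th:spline_impos}, the plan is to apply that theorem for each $N$ and pass to the limit; no new construction is needed.

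\textbf{Applicability.} Fix $K\geq 1$, $\mu\in(0,1)$, $T_a>0$ and $f\in\dmsmfunc$, and set $q_N:=\lfloor \mu N\rfloor$. Theorem~\ref{th:spline_impos} requires $N\geq 2$ and $1\leq q_N<N$. The upper bound holds for every $N$, since $q_N\leq \mu N<N$. The lower bound $q_N\geq 1$ holds as soon as $N\geq 1/\mu$. Hence for all $N\geq \max\{2,\lceil 1/\mu\rceil\}$ the theorem gives
\begin{align*}
    \inf_{\substack{\enc \in \hild{2}{d},\\ \declamb>0}}
    \Radv(\enc,\decstar;f)
    \geq
    \frac{T_a^2}{K}\left(\frac{\lfloor \mu N\rfloor}{N}\right)^2 .
\end{align*}
The finitely many smaller $N$ do not affect the $\liminf$.

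\textbf{Limit.} Use $\mu N-1<\lfloor \mu N\rfloor\leq \mu N$ to get
\begin{align*}
    \mu-\frac{1}{N}<\frac{\lfloor \mu N\rfloor}{N}\leq \mu,
\end{align*}
so $\lfloor \mu N\rfloor/N\to\mu$ as $N\to\infty$. The map $x\mapsto x^2$ is continuous, so the right-hand side of the displayed bound converges to $T_a^2\mu^2/K$. A $\liminf$ of a sequence that is eventually bounded below by a convergent sequence is at least that sequence's limit. This yields the claimed bound $T_a^2\mu^2/K$, which is strictly positive because $T_a>0$, $\mu>0$ and $K<\infty$.

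\textbf{Final remark and main point of care.} The vanishing-error claim fails here because the bound holds uniformly over the encoder and over all $\declamb>0$, including $N$-dependent tunings. The only subtle step is checking that the infimum in Theorem~\ref{th:spline_impos} already ranges over $\declamb$, so that any sequence $\declamb=\declamb(N)$ is covered. The statement does say this, so I expect no real obstacle; the rest is the elementary floor estimate above.
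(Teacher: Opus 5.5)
Your proposal is correct and matches the paper's proof: apply Theorem~\ref{th:spline_impos} for every $N\geq\lceil 1/\mu\rceil$, where $1\leq\lfloor\mu N\rfloor<N$ holds (this already forces $N\geq 2$ since $\mu<1$, so your extra $\max\{2,\cdot\}$ is harmless but redundant). Then pass to the $\liminf$ using $\lfloor\mu N\rfloor/N\to\mu$.
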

\section{Proofs of the Main Results}\label{sec:proofs}

In this section, we provide the proofs of the main theorems and corollaries, each presented in a separate subsection. Several proofs rely on auxiliary lemmas whose proofs are deferred to Appendix~\ref{app:lemma_proof}.

\subsection{Proof of Theorem~\ref{th:decoder_error_bound}}
\label{sec:proof_decoder_error_bound}

Fix $\enc \in \hild{2}{d}$, and define
\begin{align}
    g(z)
    :=
    f(\enc(z)).
\end{align}

\begin{lemma}[{\cite[Lemma~1]{moradi2026learning}}]\label{lem:composition_sobolev}
Let $\enc \in \hild{2}{d}$, and let $f = [f_1, \ldots, f_m]^\top : \mathbb{R}^d \to \mathbb{R}^m$ satisfy Assumption~(A2). Then, $f \circ \enc \in \hild{2}{m}$.
\end{lemma}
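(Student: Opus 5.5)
The argument is a chain-rule computation of the first two weak derivatives of $g=f\circ\enc$, followed by a check that each lies in $L^2$. Each component $\enc$ lies in $\hild{2}{d}$ on the one-dimensional interval $\Omega=[0,1]$. By the Sobolev embedding in one dimension, $\enc$ then has an absolutely continuous representative with $\enc\in C^1(\Omega;\mathbb{R}^d)$. Moreover, $\enc'$ is itself absolutely continuous with weak derivative $\enc''\in\lpd{2}{d}$. In particular $\norm{\enc}_{L^\infty}$ and $\norm{\enc'}_{L^\infty}$ are finite, and both are controlled by $\norm{\enc}_{\hild{2}{d}}$ via the anchored norm: $\enc(t)=\enc(0)+\int_0^t\enc'$ and $\enc'(t)=\enc'(0)+\int_0^t\enc''$.

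Fix a coordinate $j\in[m]$ and set $g_j=f_j\circ\enc$.

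\textbf{$L^2$ membership of $g_j$.} Since $f_j$ is continuous and $\enc(\Omega)$ is compact, $g_j$ is continuous on $\Omega$, hence bounded and in $\lp{2}$.

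\textbf{First derivative.} Because $f_j\in C^1$ and $\enc$ is $C^1$, the classical chain rule gives
\begin{align}
g_j'(t)=\nabla f_j(\enc(t))^\top \enc'(t),
\end{align}
and $\abs{g_j'}\le G_j\norm{\enc'}_{L^\infty}$ by (A2).

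\textbf{Second derivative.} We need the weak derivative of the product of $a(t):=\nabla f_j(\enc(t))$ and $\enc'(t)$.
\begin{itemize}
\item The factor $a$ is $C^1$, since $f_j\in C^2$ and $\enc\in C^1$, with $a'(t)=D^2f_j(\enc(t))\,\enc'(t)$.
\item The factor $\enc'$ is absolutely continuous with derivative $\enc''$ almost everywhere.
\end{itemize}
The product of two absolutely continuous functions on a compact interval is absolutely continuous, and the product rule holds almost everywhere. Hence
\begin{align}
g_j''(t)=\enc'(t)^\top D^2f_j(\enc(t))\,\enc'(t)+\nabla f_j(\enc(t))^\top\enc''(t)
\end{align}
as a weak derivative. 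Its first term is bounded by $H_j\norm{\enc'}_{L^\infty}^2$, and its second is bounded pointwise by $G_j\norm{\enc''(t)}_2$, which lies in $L^2$.

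Therefore $g_j''\in\lp{2}$, and summing over $j$ gives $f\circ\enc\in\hilm{2}$. As a byproduct the same computation yields $\norm{(f\circ\enc)''}^2\lesssim H_f^2\norm{\enc'}_\infty^4+G_f^2\norm{\enc''}^2$. Since $\norm{\enc'}_\infty\lesssim\norm{\enc}_{\hild{2}{d}}$, this is the source of the $\psi(t)=t+4t^2$ dependence used in Theorem~\ref{th:decoder_error_bound}.

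The only delicate point is justifying the product and chain rules at the level of weak derivatives. Here $\enc''$ is only in $L^2$, so $\enc'$ is only absolutely continuous, not $C^1$. The issue is handled by working with the absolutely continuous representative. Alternatively, one can approximate $\enc$ in $\hild{2}{d}$ by smooth functions, which also converge in $C^1$ by the embedding, and pass to the limit in the integration-by-parts identity using the boundedness of $\nabla f_j$ and $D^2 f_j$ and the continuity of $D^2f_j$.
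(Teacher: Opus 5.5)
Your proof is correct. The paper does not prove this lemma itself; it imports it from \cite[Lemma~1]{moradi2026learning}. The only in-paper version of the argument appears in the proof of Lemma~\ref{lem:composition_regularity}. There the paper invokes ``the second-order chain rule for Sobolev functions'' to get $g_j''=(u')^\top D^2f_j(u)\,u'+\nabla f_j(u)^\top u''$, and then controls the quartic term $\int_\Omega\norm{u'(z)}_2^4\,dz$ with the Gagliardo--Nirenberg-type inequality of Lemma~\ref{th:interp_ineq} (with $r=4$, $p=v=2$) and the norm-equivalence constant $\theta_{\mathrm{sob}}$.

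You reach the same formula but actually justify it. You use the one-dimensional embedding of $\hild{2}{d}$ into $C^1$, the $C^1$ regularity of $\nabla f_j\circ\enc$, and the a.e.\ product rule for absolutely continuous functions. This is exactly the step the paper takes for granted.

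For the size of the quartic term, you use $L^\infty$ rather than $L^4$ control of $\enc'$. With the anchored norm on $[0,1]$ this is fully explicit: $\sup_{t\in[0,1]}\norm{\enc'(t)}_2\le\norm{\enc'(0)}_2+\norm{\enc''}_{\lpd{2}{d}}\le\sqrt{2}\,\norm{\enc}_{\hild{2}{d}}$. It needs no equivalence constant and gives the same $\psi(t)=t+4t^2$ dependence up to constants.

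The interpolation route only helps where $\enc'$ need not be bounded, since it requires just $L^4$ control. Your route is more elementary. It also shows that, for $f\in C^2$ as in $\dmsmfunc$, the membership $f\circ\enc\in\hild{2}{m}$ does not need the global bounds of (A2) at all. Continuity of $\nabla f_j$ and $D^2f_j$ on the compact set $\enc(\Omega)$ suffices; the constants $G_j,H_j$ matter only for a bound that is uniform over encoders.
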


Applying Lemma~\ref{lem:composition_sobolev}, we have $g \in \hild{2}{m}$. The clean decoder in \eqref{eq:clean_decoder_opt} is a smoothing spline constructed over the design points $\{\beta_j\}_{j=1}^N$ with respect to the target function $g(\cdot)$. Therefore, using the smoothing spline operator defined in \eqref{eq:smoothing_spline}, we obtain
\begin{align}\label{eq:clean_decoder_operator_form}
    \decclean
    =
    \smop{N}{\declamb}{g}.
\end{align}

\subsubsection{Upper bound for clean generalization error}
We first consider the clean generalization error. By definition in \eqref{eq:adv_decoder_decomposition}, we have
\begin{align}\label{eq:proof_gen_start}
    \ldecgen
    =
    4
    \sup_{z\in\Omega}
    \norm{
        \smop{N}{\declamb}{g}(z)
        -
        g(z)
    }_2^2.
\end{align}
Thus, we need a uniform upper bound for the second order smoothing-spline
approximation error. The following lemma provides the required estimate.

\begin{lemma}[Second-order smoothing-spline error bound]
\label{lem:clean_spline_approximation}
Let $h\in\hild{2}{m}$ and let
\(
    t_j=(j-1)/(n-1)
\)
for $j\in[n]$. Let $\smop{n}{\lambda}{h}$ be the second order smoothing spline constructed over design point set $\{t_j\}_{j\in[n]}$ and target function $h(\cdot)$ with smoothing parameter $0<\lambda \leq1$. Then, there exists constant
$C_{\mathrm{sp}}$, independent of $n$, $\lambda$, and $h$, such that
\begin{align}\label{eq:uniform_spline_approximation}
    \sup_{z\in\Omega}
    \norm{
        \smop{N}{\lambda}{h}(z)-h(z)
    }_2^2
    \leq
    C_{\mathrm{sp}}
    \left(
        \lambda^{3/4}
        +
        N^{-3}
    \right)
    \norm{h''}_{\lpd{2}{m}}^2.
\end{align}
\end{lemma}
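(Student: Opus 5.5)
The bound reduces coordinatewise to scalar splines. The loss and the penalty in \eqref{eq:smoothing_spline} decouple across the $m$ components. Also $\norm{\mathbf{v}}_2^2=\sum_i v_i^2$ and $\norm{h''}_{\lpd{2}{m}}^2=\sum_i\norm{h_i''}_{L^2}^2$. So it suffices to prove, for scalar $h\in\hil{2}$ with error $e:=\smop{n}{\lambda}{h}-h$,
\[
\sup_{z\in\Omega}\abs{e(z)}^2\le C_{\mathrm{sp}}(\lambda^{3/4}+n^{-3})\norm{h''}_{L^2}^2 .
\]
Summing over coordinates then gives \eqref{eq:uniform_spline_approximation}. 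I read the $N$ in the statement as $n$.

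\textbf{Step 1 (energy bounds).} Let $s:=\smop{n}{\lambda}{h}$. Comparing the objective at $s$ with its value at the candidate $u=h$ gives
\[
\frac1n\sum_j e(t_j)^2+\lambda\norm{s''}_{L^2}^2\le\lambda\norm{h''}_{L^2}^2 .
\]
This yields two facts: $\norm{s''}_{L^2}\le\norm{h''}_{L^2}$, hence $\norm{e''}_{L^2}\le 2\norm{h''}_{L^2}$, and $\frac1n\sum_j e(t_j)^2\le\lambda\norm{h''}_{L^2}^2$.

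\textbf{Step 2 (discrete to continuous $L^2$).} Let $I_n e$ be the piecewise-linear interpolant of $e$ at the equispaced nodes. Standard interpolation gives $\norm{e-I_ne}_{L^2}\le C n^{-2}\norm{e''}_{L^2}$. For equispaced nodes, $\norm{I_ne}_{L^2}^2\le \frac{C}{n}\sum_j e(t_j)^2$. Combining these with Step 1,
\[
\norm{e}_{L^2}^2\le C(\lambda+n^{-4})\norm{h''}_{L^2}^2 .
\]

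\textbf{Step 3 (uniform bound by interpolation).} Next apply a Gagliardo--Nirenberg inequality on $[0,1]$:
\[
\norm{e}_{L^\infty}^2\le C\bigl(\norm{e}_{L^2}^{3/2}\norm{e''}_{L^2}^{1/2}+\norm{e}_{L^2}^2\bigr).
\]
This is where the exponent $3/4$ comes from: $L^\infty$ sits at $1/4$ of the way from $L^2$ to $\mathcal{H}^2$ in one dimension. Substituting the bounds from Steps 1 and 2,
\[
\norm{e}_{L^\infty}^2\le C\bigl((\lambda+n^{-4})^{3/4}+\lambda+n^{-4}\bigr)\norm{h''}_{L^2}^2\le C(\lambda^{3/4}+n^{-3})\norm{h''}_{L^2}^2 .
\]
The last step uses $\lambda\le1$, which gives $\lambda\le\lambda^{3/4}$ and $(\lambda+n^{-4})^{3/4}\le\lambda^{3/4}+n^{-3}$.

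\textbf{Main obstacle.} I expect Step 3 to be the delicate point. The Gagliardo--Nirenberg inequality on a bounded interval has a lower-order term, and I must check that this term only contributes $\norm{e}_{L^2}^2$, which is absorbed as above. The $L^2$ estimate of Step 2 must also hold with constants uniform in $n$; equispacing is exactly what guarantees this. A fallback for Step 3 is a direct argument. Take the point $z^*$ where $\abs{e}$ is maximal. The $\tfrac12$-Hölder continuity of $e'$, from $\norm{e''}_{L^2}$, keeps $\abs{e}\ge\tfrac12\norm{e}_\infty$ on an interval of length $\asymp(\norm{e}_\infty/\norm{e''}_{L^2})^{1/2}$ after first bounding $\norm{e'}_\infty$. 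Integrating $e^2$ over that interval then recovers the same exponents.
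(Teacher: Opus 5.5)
Your proof is correct, and it takes a genuinely different route from the paper's. The paper first uses the first-order optimality condition, testing with constant directions, to show $\sum_i e(t_i)=0$. Hence every coordinate of the error vanishes somewhere on $[0,1]$, which gives the Agmon-type bound $\|e\|_\infty^2\le 2\|e\|_{L^2}\|e'\|_{L^2}$ with no lower-order term. It then imports Ragozin's spline estimates $\|e\|_{L^2}^2\lesssim L_{\mathcal T}\|h''\|^2$ and $\|e'\|_{L^2}^2\lesssim L_{\mathcal T}^{1/2}\|h''\|^2$, with $L_{\mathcal T}\asymp\lambda+n^{-4}$ for equispaced points, and multiplies them to get $L_{\mathcal T}^{3/4}$.

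You instead derive the $L^2$ bound yourself. Comparing the objective with $u=h$ gives both $\|e''\|\le 2\|h''\|$ and the discrete residual bound. The piecewise-linear interpolant then turns the residual bound into $\|e\|_{L^2}^2\lesssim(\lambda+n^{-4})\|h''\|^2$. The exponent $3/4$ comes from the $L^\infty$--$L^2$--$\mathcal{H}^2$ Gagliardo--Nirenberg inequality, and you correctly absorb its lower-order term using $\lambda\le 1$.

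What your route buys: a self-contained, elementary argument that shows exactly where $\lambda+n^{-4}$ comes from. It needs neither a separate estimate for $e'$ nor the zero-of-the-error trick; in effect you recover Ragozin's rate for $e'$ by interpolating between $\|e\|$ and $\|e''\|$. What the paper's route buys: brevity, via a cited result that already covers general designs through the mesh-ratio factor $p_2(\bar\delta/\underline{\delta})$. Your Steps~1--2 would also extend to quasi-uniform designs, using e.g.\ $\|I_n e\|^2\le\bar\delta\sum_j e(t_j)^2$ together with $n\bar\delta$ bounded, but you would have to handle the boundary intervals if the design did not contain the endpoints.

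Your reading of $N$ as $n$ in the displayed bound is right.
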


The proof is provided in Appendix~\ref{proof_lem:clean_spline_approximation}. Applying Lemma~\ref{lem:clean_spline_approximation} to
$h=g=f\circ\enc$ and $\lambda=\declamb$ gives
\begin{align}\label{eq:proof_gen_after_spline}
    \ldecgen
    \leq
    4C_{\mathrm{sp}}
    \left(
        \declamb^{3/4}
        +
        N^{-3}
    \right)
    \norm{g''}_{\lpd{2}{m}}^2.
\end{align}
It remains to derive an upper bound for  the Sobolev regularity of the composite function $\norm{g''}_{\lpd{2}{m}}^2$. Following lemma provides an upper bound for $\norm{g''}_{\lpd{2}{m}}^2$ in terms of Sobolev norm of encoder $\enc$ and target function regularity constants $H_f, G_f$.

\begin{lemma}[Regularity of the composite function]
\label{lem:composition_regularity}
Under Assumption~(\textnormal{A}2), there exists a constant
$C_{\mathrm c}>0$ such that, for every
$u\in\hild{2}{d}$,
\begin{align}\label{eq:composition_regularity_bound}
    \norm{
        (f\circ u)''
    }_{\lpd{2}{m}}^2
    \leq
    C_{\mathrm c}
    \max\{H_f^2,G_f^2\}
    \psi\left(
        \norm{u}_{\hild{2}{d}}^2
    \right),
\end{align}
where
\(
    \psi(t)=t+4t^2.
\)
\end{lemma}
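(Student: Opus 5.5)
We prove the bound by a direct chain-rule computation. We bound each term of $(f\circ u)''$ in $L^2$ by Sobolev quantities of $u$. Then we convert everything into the anchored norm $\norm{u}_{\hild{2}{d}}^2=\norm{u(0)}_2^2+\norm{u'(0)}_2^2+\norm{u''}_{\lpd{2}{d}}^2$.

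\textbf{Chain rule.} By Lemma~\ref{lem:composition_sobolev}, $f\circ u\in\hild{2}{m}$. For each component $j\in[m]$ the weak second derivative is
\begin{align}
    (f_j\circ u)''(z)
    =
    u'(z)^\top D^2f_j(u(z))\,u'(z)
    +
    \nabla f_j(u(z))^\top u''(z).
\end{align}
If one wants to be careful about weak derivatives, this can be justified by approximating $u$ in $\hild{2}{d}$ by smooth functions, using $f\in C^2$ with bounded derivatives. Assumption~(A2) then gives the pointwise bound
\begin{align}
    \abs{(f_j\circ u)''(z)}
    \leq
    H_j\norm{u'(z)}_2^2+G_j\norm{u''(z)}_2 .
\end{align}
Squaring with $(x+y)^2\leq 2x^2+2y^2$ and summing over $j$ yields
\begin{align}
    \norm{(f\circ u)''}_{\lpd{2}{m}}^2
    \leq
    2H_f^2\int_0^1\norm{u'(z)}_2^4\,dz
    +
    2G_f^2\norm{u''}_{\lpd{2}{d}}^2 .
\end{align}

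\textbf{Controlling $u'$.} Since $u'(z)=u'(0)+\int_0^z u''(s)\,ds$, Cauchy--Schwarz on $[0,1]$ gives
\begin{align}
    \sup_{z}\norm{u'(z)}_2
    \leq
    \norm{u'(0)}_2+\norm{u''}_{\lpd{2}{d}} .
\end{align}
Squaring, $\sup_z\norm{u'(z)}_2^2\leq 2\norm{u}_{\hild{2}{d}}^2$. Hence
\begin{align}
    \int_0^1\norm{u'}_2^4
    \leq
    \Bigl(\sup_z\norm{u'(z)}_2^2\Bigr)^2
    \leq
    4\norm{u}_{\hild{2}{d}}^4 .
\end{align}
This quartic term is the origin of the $4t^2$ in $\psi$. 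The linear term is immediate, since $\norm{u''}_{\lpd{2}{d}}^2\leq\norm{u}_{\hild{2}{d}}^2$.

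\textbf{Combining.} Putting the two pieces together,
\begin{align}
    \norm{(f\circ u)''}_{\lpd{2}{m}}^2
    \leq
    8H_f^2\norm{u}_{\hild{2}{d}}^4+2G_f^2\norm{u}_{\hild{2}{d}}^2
    \leq
    2\max\{H_f^2,G_f^2\}\bigl(t+4t^2\bigr),
\end{align}
with $t=\norm{u}_{\hild{2}{d}}^2$. So $C_{\mathrm c}=2$ suffices.

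The only step needing care is the weak chain rule for the quadratic term $u'^\top D^2f_j(u)\,u'$. It has to be shown that this term lies in $L^2$, which follows from the embedding $u'\in L^\infty$ established in the second step. Otherwise the argument is a routine estimate.
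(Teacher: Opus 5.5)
Your proof is correct and gives the explicit constant $C_{\mathrm c}=2$. It differs from the paper only in how the quartic term is controlled. The chain-rule step, the pointwise bound $\abs{(f_j\circ u)''}\leq H_j\norm{u'}_2^2+G_j\norm{u''}_2$, and the reduction to bounding $\int_0^1\norm{u'(z)}_2^4\,dz$ are the same as in the paper.

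The paper handles the quartic term in three steps:
\begin{enumerate}
\item Minkowski's inequality gives $\int_0^1\norm{u'}_2^4\leq\bigl(\sum_i\norm{u_i'}_{L^4}^2\bigr)^2$.
\item The Gagliardo--Nirenberg-type inequality of Lemma~\ref{th:interp_ineq}, with $r=4$ and $p=v=2$, bounds each $\norm{u_i'}_{L^4}$ by $\norm{u_i'}_{L^2}+\norm{u_i''}_{L^2}$. This is a bound in the standard $\mathbb{W}^{2,2}$ norm.
\item Only then does it pass to the anchored norm, through the norm-equivalence constant $\theta_{\mathrm{sob}}$. This yields $C_{\mathrm c}=2\max\{C_\ell^4\theta_{\mathrm{sob}}^4,1\}$.
\end{enumerate}

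You instead apply the fundamental theorem of calculus to $u'$ from the anchor point $0$. This gives $\sup_z\norm{u'(z)}_2\leq\norm{u'(0)}_2+\norm{u''}_{L^2}$ directly in the anchored norm the paper adopts. In effect it is the one-dimensional embedding $\mathcal{H}^1\hookrightarrow L^\infty$ with an explicit constant.

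Your route is shorter and self-contained. It needs neither the external interpolation lemma nor the unspecified equivalence constant. The same $L^\infty$ control of $u'$ also shows that the quadratic chain-rule term $u'^\top D^2f_j(u)\,u'$ is square-integrable.

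The paper's route needs only $L^4$ control of $u'$ and works through the standard Sobolev norm. It therefore does not depend on the particular anchored normalization, at the cost of a non-explicit constant.
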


The detailed proof can be found in Appendix~\ref{proof_lem:composition_regularity}. Applying Lemma~\ref{lem:composition_regularity} to $u=\enc$ in
\eqref{eq:proof_gen_after_spline} yields
\begin{align}
    \ldecgen
    &\leq
    C_1
    \max\{H_f^2,G_f^2\}
    \left(
        \declamb^{3/4}
        +
        N^{-3}
    \right)
    \psi\left(
        \norm{\enc}_{\hild{2}{d}}^2
    \right),
\label{eq:proof_clean_decoder_error}
\end{align}
where
\(
    C_1:=4C_{\mathrm{sp}}C_{\mathrm c}.
\)

\subsubsection{Upper bound for Adversarial perturbation error}

We next bound the adversarial perturbation error. By linearity of the
smoothing-spline operator discussed in Section~\ref{sec:smooth_spline}, the corrupted and clean decoders admit the
representations
\begin{align}
    \decstar(z)
    &=
    \frac{1}{N}
    \sum_{j=1}^N
    G_{N,\declamb}(z,\beta_j)
    \overline{\mathbf{y}}_j^A,
    \label{eq:proof_corrupted_weight_representation}\\
    \decclean(z)
    &=
    \frac{1}{N}
    \sum_{j=1}^N
    G_{N,\declamb}(z,\beta_j)
    g(\beta_j).
    \label{eq:proof_clean_weight_representation}
\end{align}
Let $\mathcal{B}$ denote the adversarial worker set. Since the outputs
of all workers outside $\mathcal{B}$ are uncorrupted, subtracting
\eqref{eq:proof_clean_weight_representation} from
\eqref{eq:proof_corrupted_weight_representation} gives
\begin{align}\label{eq:proof_decoder_perturbation}
    \decstar(z)-\decclean(z)
    =
    \frac{1}{N}
    \sum_{j\in\mathcal{B}}
    G_{N,\declamb}(z,\beta_j)
    \left(
        \mathbf{a}_j-g(\beta_j)
    \right).
\end{align}

To bound \eqref{eq:proof_decoder_perturbation}, we require an upper bound on the exact discrete weight function $G_{N,\declamb}$, which does not follow directly from its definition. Instead, we derive it by comparing $G_{N,\declamb}$ with the continuous Green's function and its equivalent kernel. More specifically, for $h \in \lpd{2}{}$, consider the continuous smoothing problem
\begin{align}\label{eq:continuous_smoothing_problem}
    v_{\declamb}[h]
    :=
    \operatorname*{arg\,min}_{v\in\hil{2}}
    \left\{
        \int_\Omega
        \bigl(
            h(t)-v(t)
        \bigr)^2\,dt
        +
        \declamb
        \int_\Omega
        \bigl(
            v''(t)
        \bigr)^2\,dt
    \right\}.
\end{align}
It can be shown that the solution to \eqref{eq:continuous_smoothing_problem} is unique and can be represented using the continuous Green's function~\cite{roach1982green,cox1984multivariate}
$G_{\declamb}$ as
\begin{align}
    v_{\declamb}(x)
    =
    \int_\Omega
    G_{\declamb}(x,t)h(t)\,dt.
\end{align}

Similar to smoothing spline weight function $G_{N,\declamb}$, deriving  explicit formula or upper bound for $G_{\declamb}$ is challenging in general form. However, authors in \cite{messer1993new} proposed a kernel that is a good approximation of $G_{\declamb}$ for equidistant design points. Formally, define
\begin{align}\label{eq:proof_equivalent_kernel}
    K_{\declamb}(x,t)
    &=
    \frac{
        e^{-\frac{|x-t|}
        {\sqrt{2}{\declamb}^{1/4}}}
    }{
        2\sqrt{2}{\declamb}^{1/4}
    }
    \left(
        \sin\left(
            \frac{|x-t|}
            {\sqrt{2}{\declamb}^{1/4}}
        \right)
        +
        \cos\left(
            \frac{x-t}
            {\sqrt{2}{\declamb}^{1/4}}
        \right)
    \right)
    \nonumber\\
    &\quad+
    \frac{1}{
        2\sqrt{2}{\declamb}^{1/4}
    }
    \left[
        \Phi\left(
            \frac{x+t}{\sqrt{2}{\declamb}^{1/4}},
            \frac{x-t}{\sqrt{2}{\declamb}^{1/4}}
        \right)
    +
        \Phi\left(
            \frac{2-x-t}{\sqrt{2}{\declamb}^{1/4}},
            \frac{t-x}{\sqrt{2}{\declamb}^{1/4}}
        \right)
    \right],
\end{align}
where
\(
    \Phi(u,v)
    :=
    e^{-u}
    \left(
        \cos u-\sin u+2\cos v
    \right).
\)
We first bound this explicit kernel in the following lemma.

\begin{lemma}[Upper Bound on the Equivalent Kernel]
\label{lem:kernel_upperbound}
There exists a constant $\tau > 0$ such that for every $\lambda \in (0, 1]$,
\begin{align}\label{eq:kernel_upperbound}
    \sup_{x,t\in[0,1]}
    \abs{
        K_\lambda(x,t)
    }
    \leq
    \tau \lambda^{-1/4}.
\end{align}
\end{lemma}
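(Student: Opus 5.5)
The bound follows from the structure of $K_\lambda$: every term carries the prefactor $(2\sqrt{2}\lambda^{1/4})^{-1}$, multiplied by a function of rescaled variables that we will show is bounded uniformly. Set $s:=\sqrt{2}\lambda^{1/4}>0$, and write $u_1=|x-t|/s$, $u_2=(x+t)/s$, $u_3=(2-x-t)/s$, and $v=(x-t)/s$. For $x,t\in[0,1]$ we have $u_1,u_2,u_3\ge 0$, since $x+t\in[0,2]$ and $2-x-t\in[0,2]$. This is the one point needing care: the arguments entering the exponentials $e^{-u}$ must be nonnegative, which holds on the unit square. The term $v$ enters only through $\cos v$, and $|\cos v|\le1$ regardless of its sign.

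We then bound each piece directly using $|\sin|,|\cos|\le 1$ and $e^{-u}\le 1$ for $u\ge0$. The first term satisfies
\[
\bigl|e^{-u_1}(\sin u_1+\cos v)\bigr|\le 2.
\]
For the function $\Phi$ we have
\[
|\Phi(u,w)|\le e^{-u}\bigl(|\cos u|+|\sin u|+2|\cos w|\bigr)\le 4 \quad\text{for } u\ge0,
\]
so $|\Phi(u_2,v)|\le4$ and $|\Phi(u_3,-v)|\le4$.

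Combining these estimates gives
\[
|K_\lambda(x,t)|\le \frac{2+4+4}{2\sqrt{2}\,\lambda^{1/4}}=\frac{10}{2\sqrt2}\lambda^{-1/4},
\]
uniformly in $x,t\in[0,1]$. The lemma therefore holds with $\tau=5/\sqrt2$. The restriction $\lambda\in(0,1]$ is not needed for this bound beyond $\lambda>0$; the argument is elementary, and no step presents a real obstacle.
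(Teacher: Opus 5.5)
Your proof is correct and follows the same route as the paper: bound $|\Phi(u,\cdot)|\le 4e^{-u}\le 4$ using $u\ge 0$ on the unit square, and bound the free-space term using $|\sin|,|\cos|\le 1$ and $e^{-u}\le 1$. Your constant $\tau=5/\sqrt{2}$ is slightly sharper than the paper's $9/\sqrt{2}$, because the paper loosens the prefactor of the two boundary terms by a factor of two.
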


See Appendix~\ref{proof_lem:kernel_upperbound} for the proof. Next, the following lemma relates the finite-sample spline weight function $G_{N,\lambda}$ and the equivalent kernel $K_\lambda$.

\begin{lemma}[Approximation of the Finite-Sample Spline Weights]
\label{lem:kernel_weight_distance}
Under equidistant design points, there exist constants $c_0, C_W, \kappa, C_K > 0$ such that whenever $0 < \lambda \leq 1$ and $N\lambda^{1/4} > c_0$, we have for all $x, t \in [0,1]$,
\begin{align}\label{eq:kernel_weight_distance}
    \abs{
        G_{N,\lambda}(x,t) - K_\lambda(x,t)
    }
    &\leq
    C_W\lambda^{-1/4}
    \exp\left(
        -\kappa
        \abs{x-t}
        \lambda^{-1/4}
    \right)
    +
    C_M
    \lambda^{-1/4}
    \exp\left(
        -\frac{1}{\sqrt{2}}
        \lambda^{-1/4}
    \right).
\end{align}
\end{lemma}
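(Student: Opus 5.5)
The plan is to split the error through the continuous Green's function $G_\lambda$ of \eqref{eq:continuous_smoothing_problem}:
\begin{align}
    \abs{G_{N,\lambda}(x,t)-K_\lambda(x,t)}
    \leq
    \abs{G_{N,\lambda}(x,t)-G_\lambda(x,t)}
    +
    \abs{G_\lambda(x,t)-K_\lambda(x,t)}.
\end{align}
The first term (discretization) should give the exponentially localized piece $C_W\lambda^{-1/4}e^{-\kappa|x-t|\lambda^{-1/4}}$. The second term (boundary correction) should give the uniformly small piece $C_M\lambda^{-1/4}e^{-\lambda^{-1/4}/\sqrt{2}}$. Throughout, I work in the rescaled variable $s=x/(\sqrt{2}\lambda^{1/4})$. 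The natural length scale of the problem is $\lambda^{1/4}$, and the condition $N\lambda^{1/4}>c_0$ says that many design points fall within each such window.

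\emph{Continuous versus equivalent kernel.} The Euler--Lagrange equation of \eqref{eq:continuous_smoothing_problem} shows that $G_\lambda(\cdot,t)$ solves
\begin{align}
    \lambda\,\partial_x^4 G_\lambda(x,t)+G_\lambda(x,t)=\delta(x-t),
\end{align}
with natural boundary conditions $\partial_x^2G_\lambda=\partial_x^3G_\lambda=0$ at $x=0$ and $x=1$. I will verify directly that the first (translation-invariant) part of $K_\lambda$ in \eqref{eq:proof_equivalent_kernel} is the whole-line fundamental solution. The two $\Phi$-terms are homogeneous solutions chosen so that the natural boundary conditions hold exactly at one endpoint each: the term in $x+t$ handles $x=0$, and the term in $2-x-t$ handles $x=1$.

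Consequently $R:=G_\lambda-K_\lambda$ solves the homogeneous equation $\lambda R''''+R=0$. Its boundary data at each endpoint is exactly the contribution of the term built for the opposite endpoint. That term carries a factor $e^{-u}$ with $u\geq 1/(\sqrt2\lambda^{1/4})$ there, so the boundary data is of order $\lambda^{-1/4}e^{-\lambda^{-1/4}/\sqrt2}$, after accounting for the derivative scalings $\lambda^{-k/4}$. I will write $R$ in the basis $e^{\pm s}\cos s$, $e^{\pm s}\sin s$ and solve the resulting $4\times4$ system. This system is well conditioned uniformly in $\lambda\le1$ because the growing modes are attached to the endpoint where they are normalized, which yields $\sup|R|\le C_M\lambda^{-1/4}e^{-\lambda^{-1/4}/\sqrt2}$.

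\emph{Discrete versus continuous.} This is the main obstacle. My first attempt will follow the Cox/Nychka approach. Write the spline solution operator as $(I+\lambda\mathcal{A}_N)^{-1}$ relative to the empirical measure, and compare it with $(I+\lambda\mathcal{A})^{-1}$ for the Lebesgue measure. The difference comes entirely from replacing the integral $\int h v$ by the average $\frac1N\sum h(t_j)v(t_j)$. For equispaced points, the trapezoid or Euler--Maclaurin error between these is $O(N^{-1})$ times derivatives of the integrand. Applied to a kernel of width $\lambda^{1/4}$, this produces a relative error $O((N\lambda^{1/4})^{-1})$.

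I will turn this into a pointwise statement through a resolvent-type identity:
\begin{align}
    G_{N,\lambda}-G_\lambda=(\text{quadrature error operator})\circ G_{N,\lambda}.
\end{align}
I will then iterate it as a Neumann series, which converges once $N\lambda^{1/4}>c_0$. Each factor in the series is controlled by exponential-decay envelopes, because $G_\lambda$ and its $x$-derivatives are bounded by $C\lambda^{-(1+k)/4}e^{-\kappa_0|x-t|\lambda^{-1/4}}$; this follows from the explicit kernel and the bound on $R$ above. Convolving two such envelopes reproduces the same envelope with a slightly smaller rate $\kappa<\kappa_0$. That leaves $|G_{N,\lambda}-G_\lambda|\le C_W\lambda^{-1/4}e^{-\kappa|x-t|\lambda^{-1/4}}$, with the factor $(N\lambda^{1/4})^{-1}\le c_0^{-1}$ absorbed into $C_W$.

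If the Neumann-series control proves delicate near the boundary, my fallback is to exploit that $G_{N,\lambda}(\cdot,t_i)$ is a natural cubic spline. Its knot values solve a banded Toeplitz-plus-boundary system of the form $(I+N\lambda Q^\top R^{-1}Q)$, and Demko-type exponential decay of inverses of banded matrices yields the same envelope directly. Finally, I combine the two pieces via the triangle inequality.
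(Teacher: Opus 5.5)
Your proposal is correct and follows the paper's route: the triangle inequality through the continuous Green's function $G_\lambda$, with $\abs{K_\lambda-G_\lambda}$ giving the uniform $C_M$ term and $\abs{G_{N,\lambda}-G_\lambda}$ giving the localized $C_W$ term. The difference is that the paper does not derive these two bounds but cites them (Messer--Goldstein, Theorem~4.1, and Nychka, Theorem~2.1, whose prefactor $\mu_0\mu_1 N^{-1}\lambda^{-1/4}/(1-\mu_0N^{-1}\lambda^{-1/4})$ has the form of the summed Neumann series you describe); it then only takes $c_0=2\mu_0$, which bounds that prefactor by $\mu_1=:C_W$.
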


The detailed proof is provided in Appendix~\ref{proof_lem:kernel_weight_distance}. We can now derive an upper bound on the exact weight function used by
the decoder. By the triangle inequality,
\begin{align}
    \abs{G_{N,\declamb}(x,t)}
    &\leq
    \abs{K_{\declamb}(x,t)}
    +
    \abs{
        G_{N,\declamb}(x,t)-K_{\declamb}(x,t)
    }.
\end{align}
Using Lemmas~\ref{lem:kernel_upperbound} and
\ref{lem:kernel_weight_distance}, we obtain
\begin{align}
    \abs{G_{N,\declamb}(x,t)}
    &\leq
    \tau\declamb^{-1/4}
    +
    C_W\declamb^{-1/4}
    \exp\left(
        -\kappa|x-t|\declamb^{-1/4}
    \right)
    +
    C_M\declamb^{-1/4}
    \exp\left(
        -\frac{1}{\sqrt{2}}
        \declamb^{-1/4}
    \right)
    \nonumber\\
    &\lec{(a)}{\leq}
    \left(\tau+C_M +C_W\right)\declamb^{-1/4},
\end{align}
where (a) follows from the fact that both exponential terms are
bounded by one.

Therefore, defining 
\(
    C_{\mathrm G}:=\tau+C_M+C_W,
\)
we have 
\begin{align}\label{eq:weight_function_bound}
    \abs{G_{N,\declamb}(x,t)} \leq C_G \declamb^{-1/4}
\end{align}

We now return to the decoder adversarial perturbation error in
\eqref{eq:proof_decoder_perturbation}. Assumptions
\ref{assump:f_bounded} and \ref{assump:adv_bounded} imply
\begin{align}
    \norm{\mathbf{a}_j}_2
    &\leq
    \sqrt{m}\,T_a,
    \\
    \norm{g(\beta_j)}_2
    &=
    \norm{f(\enc(\beta_j))}_2
    \leq
    \sqrt{m}\,T_f.
\end{align}
Since $T=\max\{T_a,T_f\}$,
\begin{align}\label{eq:proof_corruption_bound}
    \norm{
        \mathbf{a}_j-g(\beta_j)
    }_2
    \leq
    2\sqrt{m}\,T.
\end{align}
Using \eqref{eq:weight_function_bound} and
\eqref{eq:proof_corruption_bound} in
\eqref{eq:proof_decoder_perturbation}, and
$|\mathcal{B}|\leq q$, gives
\begin{align}
    \sup_{z\in\Omega}
    \norm{
        \decstar(z)-\decclean(z)
    }_2
    &\leq
    \frac{2\sqrt{m}\,T}{N}
    \sum_{j\in\mathcal{B}}
    \sup_{z\in\Omega}
    \abs{
        G_{N,\declamb}(z,\beta_j)
    }
    \nonumber\\
    &\leq
    2C_{\mathrm G}
    \sqrt{m}\,T
    \frac{q}{N}
    \declamb^{-1/4}.
\end{align}
Consequently,
\begin{align}
    \ldecadv
    =
    4
    \sup_{A\in\Aadv}
    \sup_{z\in\Omega}
    \norm{
        \decstar(z)-\decclean(z)
    }_2^2
    \leq
    16C_{\mathrm G}^2
    mT^2
    \frac{q^2}{N^2}
    \declamb^{-1/2}.
    \label{eq:dec_adversarial_pert_final}
\end{align}
Thus, the adversarial perturbation term in
\eqref{eq:combined_decoder_bound} holds with
\begin{align}
    C_2
    :=
    16C_{\mathrm G}^2.
\end{align}

Adding \eqref{eq:proof_clean_decoder_error} and
\eqref{eq:dec_adversarial_pert_final} proves the claimed bound
for the constructed decoder $\decstar$ defined in \eqref{eq:decoder_opt_adv}. Since $\decstar$ is an
admissible decoder in $\hild{2}{m}$, we obtain
\begin{align}
    \inf_{\dec\in\hild{2}{m}}
    \left\{
        \ldecgen
        +
        \ldecadv
    \right\}
    \leq
    C_1
    \max\{H_f^2,G_f^2\}
    \left(
        \declamb^{3/4}
        +
        N^{-3}
    \right)
    \psi\left(
        \norm{\enc}_{\hild{2}{d}}^2
    \right)
    +
    C_2
    mT^2
    \frac{q^2}{N^2}
    \declamb^{-1/2}.
\end{align}
This proves the theorem.

\subsection{Proof of Corollary~\ref{cor:optimal_decoder_smoothing}}
\label{sec:proof_optimal_decoder_smoothing}

Define
\begin{align}
    A_{\enc}
    &:=
    C_1
    \max\{H_f^2,G_f^2\}
    \psi\left(
        \norm{\enc}_{\hild{2}{d}}^2
    \right),
    \\
    B_N
    &:=
    C_2mT^2
    \frac{q^2}{N^2}.
\end{align}
In the nondegenerate case where $q\geq1$ and $A_{\enc}>0$, the part of
the upper bound in \eqref{eq:combined_decoder_bound} that depends on
$\declamb$ is
\begin{align}
    \Phi_N(\declamb)
    :=
    A_{\enc}\declamb^{3/4}
    +
    B_N\declamb^{-1/2}.
\end{align}
Its derivative is
\begin{align}
    \Phi_N'(\declamb)
    =
    \frac{3}{4}
    A_{\enc}\declamb^{-1/4}
    -
    \frac{1}{2}
    B_N\declamb^{-3/2}.
\end{align}
The unique stationary point satisfies
\begin{align}
    \declamb^{5/4}
    =
    \frac{2B_N}{3A_{\enc}}.
\end{align}
Since $\Phi_N'$ is negative below this point and positive above it,
the stationary point is the unique minimizer:
\begin{align}
    \declamb^*
    =
    \left(
        \frac{
            2C_2mT^2q^2
        }{
            3C_1N^2
            \max\{H_f^2,G_f^2\}
            \psi\left(
                \norm{\enc}_{\hild{2}{d}}^2
            \right)
        }
    \right)^{4/5}.
\end{align}

For fixed $q$ and fixed problem parameters,
$\declamb^*\to0$ as $N\to\infty$. Moreover,
\begin{align}
    N(\declamb^*)^{1/4}
    &=
    \left(
        \frac{
            2C_2mT^2
        }{
            3C_1
            \max\{H_f^2,G_f^2\}
            \psi\left(
                \norm{\enc}_{\hild{2}{d}}^2
            \right)
        }
    \right)^{1/5}
    q^{2/5}N^{3/5}
    \nonumber\\
    &\longrightarrow
    \infty.
\end{align}
Thus, there exists $N_0$ such that, for every $N\geq N_0$, we have
\(
    0<\declamb^*\leq1,
    \text{ and }
    N(\declamb^*)^{1/4}\geq c_0.
\)
Finally,
\begin{align}
    \declamb^*
    \asymp
    \left(
        \frac{q}{N}
    \right)^{8/5},
\end{align}
as claimed.

\subsection{Proof of Theorem~\ref{th:efficient_adv_encoder}}

The proof follows the same argument as that of Theorem~3 in~\cite{moradi2026learning}. Indeed, for a fixed decoder smoothing parameter $\declamb$, the encoder-dependent component of the loss upper bound has the exact same form as in the straggler setting. The only additional term is the adversarial perturbation $\Delta_{q,N}$, which is independent of the encoder and therefore does not affect the optimization or the comparison argument used in the original proof.

\subsection{Proof of Corollary~\ref{cor:total_convergence_rate}}
\label{sec:proof_total_convergence_rate}

\begin{proof}
Let $s_{\mathbf X}\in\hild{2}{d}$ be the natural cubic spline
interpolating the input batch in encoder design points:
\begin{align}
    s_{\mathbf X}(\alpha_k)
    =
    \mathbf{x}_k,
    \qquad k\in[K].
\end{align}
Because the input batch and encoder design points are fixed,
\(
    \norm{s_{\mathbf X}}_{\hild{2}{d}}
\)
is independent of $N$. Moreover,
\begin{align}\label{eq:natural_cub_enc}
    \frac{1}{K}
    \sum_{k=1}^K
    \norm{
        s_{\mathbf X}(\alpha_k)-\mathbf{x}_k
    }_2^2
    =
    0.
\end{align}

By Theorem~\ref{th:efficient_adv_encoder}, we have
\begin{align}\label{eq:efficient_adv_encoder_loss_bound}
    \objadv
    &\leq
    \Delta_{q,N}
    +
    2G_f^2
    \bigg[
        \frac{1}{K}
        \sum_{k=1}^K
        \norm{
            \encdagger(\alpha_k)
            -
            \mathbf{x}_k
        }_2^2
        +
        \enclamb
        \left(
            R_{\mathrm{e}}
            +
            \norm{
                (\encdagger)''
            }_{\lpd{2}{d}}^2
        \right)
    \bigg].
\end{align}
Since $\encdagger(\cdot)$ minimizes the regularized objective
\[
    \frac{1}{K}
    \sum_{k=1}^K
    \norm{
        u(\alpha_k)
        -
        \mathbf{x}_k
    }_2^2
    +
    \enclamb
    \norm{u''}_{\lpd{2}{d}}^2
\]
over all $u \in \hild{2}{d}$, its objective value is upper bounded by that of the natural cubic spline interpolant $s_{\mathbf{X}} \in \hild{2}{d}$. Consequently, we obtain
\begin{align}
    \objadv
    &\leq
    \Delta_{q,N}
    +
    2G_f^2
    \bigg[
        \frac{1}{K}
        \sum_{k=1}^K
        \norm{
            \encdagger(\alpha_k)
            -
            \mathbf{x}_k
        }_2^2
        +
        \enclamb
        \left(
            R_{\mathrm{e}}
            +
            \norm{
                (\encdagger)''
            }_{\lpd{2}{d}}^2
        \right)
    \bigg]
    \nonumber \\ 
    &\leq
    \Delta_{q,N}
    +
    2G_f^2
    \bigg[
        \frac{1}{K}
        \sum_{k=1}^K
        \norm{
            s_{\mathbf{X}}(\alpha_k)
            -
            \mathbf{x}_k
        }_2^2
        +
        \enclamb
        \left(
            R_{\mathrm{e}}
            +
            \norm{
                (s_{\mathbf{X}})''
            }_{\lpd{2}{d}}^2
        \right)
    \bigg]
    \nonumber \\ 
    &\lec{(a)}{=}
    \Delta_{q,N}
    +
    2G_f^2
    \enclamb
    \left(
        R_{\mathrm{e}}
        +
        \norm{
            (s_{\mathbf{X}})''
        }_{\lpd{2}{d}}^2
    \right),
\end{align}
where $\text{(a)}$ follows from the exact interpolation property $s_{\mathbf{X}}(\alpha_k) = \mathbf{x}_k$ in~\eqref{eq:natural_cub_enc}. Substituting the definitions of $\Delta_{q,N}$ and $\enclamb$ from~\eqref{eq:encoder_independent_adv_term} and~\eqref{eq:adv_encoder_regularization}, respectively, yields
\begin{align}\label{eq:proof_total_preoptimization}
    \objadv
    \leq
    A_0
    \left(
        \declamb^{3/4}
        +
        N^{-3}
    \right)
    +
    B_0
    \frac{q^2}{N^2}
    \declamb^{-1/2},
\end{align}
where $A_0 := 2G_f^2\big(R_{\mathrm{e}} + \norm{(s_{\mathbf{X}})''}_{\lpd{2}{d}}^2\big)(m_1 + m_2 R_{\mathrm{e}})$ and $B_0 := C_2 m T^2$.

For $q\geq1$, choose
\(
    \declamb
    \asymp
    \left(
        \frac{q}{N}
    \right)^{8/5}
\)
. Then we have
\(
    \declamb^{3/4}
    \asymp
    \left(
        \frac{q}{N}
    \right)^{6/5}
    \text{ and }
    \frac{q^2}{N^2}
    \declamb^{-1/2}
    \asymp
    \left(
        \frac{q}{N}
    \right)^{6/5}
\)
. Consequently,
\begin{align}\label{eq:proof_total_rate_intermediate}
    \objadv
    =
    \mathcal{O}
    \left(
        \left(
            \frac{q}{N}
        \right)^{6/5}
        +
        N^{-3}
    \right).
\end{align}
Since $q\leq C_qN^\theta$, we obtain
\begin{align}
    \left(
        \frac{q}{N}
    \right)^{6/5}
    \leq
    C_q^{6/5}
    N^{\frac{6}{5}(\theta-1)}.
\end{align}
For $\theta\in[0,1)$,
\(
    \frac{6}{5}(\theta-1)
    \geq
    -\frac{6}{5}
    >
    -3.
\)
Thus, the $N^{-3}$ term is asymptotically no larger than the first
term, and
\begin{align}
    \objadv
    =
    \mathcal{O}
    \left(
        N^{\frac{6}{5}(\theta-1)}
    \right).
\end{align}

It remains to verify the conditions of
Theorem~\ref{th:decoder_error_bound}. We have
\begin{align}
    N\declamb^{1/4}
    &\asymp
    N
    \left(
        \frac{q}{N}
    \right)^{2/5}
    =
    N^{3/5}q^{2/5}
    \longrightarrow
    \infty.
\end{align}
Therefore for sufficiently large $N$ it satisfies $N\declamb^{1/4}\geq c_0$. Also, $q=o(N)$ implies $\declamb\to0$. 

When $q=0$, the adversarial term vanishes. Choosing
$\declamb\asymp N^{-4}$ gives an error of order $N^{-3}$, which
also satisfies the stated upper bound.
\end{proof}

\subsection{Proof of Theorem~\ref{th:spline_impos}}
\label{sec:proof_spline_impos}

Fix $f \in \dmsmfunc$, an encoder $\enc \in \hild{2}{d}$, and a decoder smoothing parameter $\declamb > 0$ chosen prior to observing the worker outputs. As established in Section~\ref{sec:smooth_spline} and~\eqref{eq:smoothing_kernel}, the smoothing-spline decoder is a linear operator with respect to the received worker outputs. Specifically, under any admissible adversarial attack $A$, the decoded function takes the form
\begin{align}
    \decstar(z)
    =
    \frac{1}{N}
    \sum_{n=1}^N
    G_{N,\declamb}(z, \beta_n)\overline{\mathbf{y}}_n^{A},
    \qquad z \in \Omega,
\end{align}
where the weight function $G_{N,\declamb}(z, \beta_n)$ depends solely on the decoder design points $\{\beta_n\}_{n=1}^N$, the smoothing parameter $\declamb$, and $N$. Evaluating $\decstar(\cdot)$ at the first target point $\alpha_1$ yields
\begin{align}\label{eq:converse_spline_weights}
    \decstar(\alpha_1)
    =
    \sum_{n=1}^N w_n\overline{\mathbf{y}}_n^{A},
\end{align}
where $w_n := \frac{1}{N}G_{N,\declamb}(\alpha_1, \beta_n)$. Note that the same scalar weights $\{w_n\}_{n=1}^N$ apply across all $m$ output coordinates because the smoothing-spline objective in~\eqref{eq:decoder_opt_adv} decouples coordinatewise.

To determine the sum of these weights, suppose that $\overline{\mathbf{y}}_n^{A} = \mathbf{c}$ for all $n \in [N]$, where $\mathbf{c} \in \mathbb{R}^m$ is an arbitrary non-zero constant vector. In this case, the constant function $\dec(z) \equiv \mathbf{c}$ achieves an objective value of zero in~\eqref{eq:decoder_opt_adv} and is therefore the unique minimizer $\decstar$. Substituting $\overline{\mathbf{y}}_n^{A} = \mathbf{c}$ and $\decstar(\alpha_1) = \mathbf{c}$ into~\eqref{eq:converse_spline_weights} implies that the weights reproduce constants:
\begin{align}\label{eq:converse_weights_sum}
    \sum_{n=1}^N w_n = 1.
\end{align}

Now let $\mathcal{B} \subset [N]$ denote the subset of $q$ worker indices corresponding to the $q$ largest values of $\abs{w_n}$ (breaking ties arbitrarily). Since the average of the $q$ largest magnitudes is at least the overall average across all $N$ indices, it follows from~\eqref{eq:converse_weights_sum} and the triangle inequality that
\begin{align}\label{eq:converse_top_weights}
    \sum_{n\in\mathcal{B}} \abs{w_n}
    \geq
    \frac{q}{N}\sum_{n=1}^N \abs{w_n}
    \geq
    \frac{q}{N}\abs{\sum_{n=1}^N w_n}
    =
    \frac{q}{N}.
\end{align}

For each sign $\sigma \in \{-1, 1\}$, consider the adversarial attack $A_\sigma$ that corrupts the workers in $\mathcal{B}$ by setting
\begin{align}
    \overline{\mathbf{y}}_n^{A_\sigma}
    =
    \sigma T_a \operatorname{sign}(w_n)\mathbf{e}_1,
    \qquad n \in \mathcal{B},
\end{align}
while leaving all remaining worker outputs ($n \notin \mathcal{B}$) uncorrupted, where $\mathbf{e}_1 \in \mathbb{R}^m$ denotes the first standard basis vector and $\operatorname{sign}(0) := 0$. Both $A_1$ and $A_{-1}$ are admissible under Assumption~\textnormal{(A3)}, as each modifies at most $\abs{\mathcal{B}} = q$ worker outputs and satisfies $\overline{\mathbf{y}}_n^{A_\sigma} \in [-T_a, T_a]^m$.

Next, define the uncorrupted residual $b$ and the adversarial perturbation magnitude $a$ in the first output coordinate at $\alpha_1$ by
\begin{align}
    b
    &:=
    \sum_{n\notin\mathcal{B}}
    w_n f_1(\enc(\beta_n))
    -
    f_1(\mathbf{x}_1),\\
    a
    &:=
    T_a\sum_{n\in\mathcal{B}}\abs{w_n}.
\end{align}
By~\eqref{eq:converse_spline_weights}, under attack $A_\sigma$, the estimation error in the first coordinate at $\alpha_1$ is given by $\big[\decstar(\alpha_1) - f(\mathbf{x}_1)\big]_1 = b + \sigma a$. Using the identity
\begin{align}
    \max_{\sigma \in \{-1, 1\}}
    (b + \sigma a)^2
    \geq
    \frac{(b+a)^2 + (b-a)^2}{2}
    =
    b^2 + a^2
    \geq
    a^2,
\end{align}
 together with~\eqref{eq:converse_top_weights}, we lower-bound the worst-case squared Euclidean error at $\alpha_1$ as
\begin{align}
    \max_{\sigma \in \{-1, 1\}}
    \norm{
        \decstar(\alpha_1)
        -
        f(\mathbf{x}_1)
    }_2^2
    \geq
    \max_{\sigma \in \{-1, 1\}}
    (b + \sigma a)^2
    \geq
    a^2
    \geq
    T_a^2\left(\frac{q}{N}\right)^2.
\end{align}
Because the squared errors at the remaining $K - 1$ target points $\{\alpha_k\}_{k=2}^K$ are non-negative, the worst-case risk over all admissible attacks $A$ satisfies
\begin{align}
    \Radv(\enc, \decstar; f)
    &\geq
    \frac{1}{K}
    \max_{\sigma \in \{-1, 1\}}
    \norm{
        \decstar(\alpha_1)
        -
        f(\mathbf{x}_1)
    }_2^2
    \geq
    \frac{T_a^2}{K}
    \left(\frac{q}{N}\right)^2.
\end{align}
Since this lower bound holds for any $\enc \in \hild{2}{d}$ and any $\declamb > 0$, taking the infimum over both parameters yields~\eqref{eq:spline_converse_finite} and completes the proof.

\subsection{Proof of Corollary~\ref{cor:spline_linear_adversarial_fraction}}
\label{sec:proof_spline_linear_adversarial_fraction}

Since $\mu \in (0, 1)$, for all $N \geq \lceil 1/\mu \rceil$, the adversarial budget $q = \lfloor \mu N \rfloor$ satisfies $1 \leq q < N$. Applying Theorem~\ref{th:spline_impos} for every such $N$ yields
\begin{align}
    \inf_{\substack{
        \enc\in\hild{2}{d},\\
        \declamb>0
    }}
    \Radv(\enc, \decstar; f)
    \geq
    \frac{T_a^2}{K}
    \left(
        \frac{\lfloor\mu N\rfloor}{N}
    \right)^2.
\end{align}
Because $\lim_{N\to\infty} \lfloor\mu N\rfloor / N = \mu$, taking the limit inferior as $N \to \infty$ on both sides establishes the corollary.
\section{Experimental Results}\label{sec:experiments}

In this section, we empirically evaluate the proposed adversarial
\gcc{} framework. We consider two types of computing functions: (1) a scalar nonlinear function and (2) a
high-dimensional neural-network inference task. The primary purpose
of these experiments is to  examine the effectiveness of \gcc{} to mitigate the adversarial effect and  convergence-rate prediction
of Corollary~\ref{cor:total_convergence_rate} under a concrete
adversarial strategy. 


\subsection{Experimental Setup}

\textbf{Target Tasks:}
We evaluate our framework on two representative computing tasks:
\begin{itemize}
    \item \textit{Scalar nonlinear function:}
    The first target function is given by
    \begin{align}\label{eq:scalar_experiment_function}
        f_{\mathrm{sc}}(x)
        :=
        x\sin(x),
    \end{align}
    evaluated over the compact interval $[-12, 12]$ to ensure bounded inputs and outputs.

    \item \textit{Neural-network inference:}
    The second task evaluates a LeNet-5 convolutional neural network~\cite{lecun1998gradient} trained on the MNIST dataset~\cite{lecun2010mnist}. Each input is preprocessed into a $32\times32$ grayscale image ($1024$-dimensional vector), and the network outputs a $10$-dimensional vector representing the class probability distribution across digits $0$ through $9$:
    \begin{align}\label{eq:lenet_experiment_function}
        f_{\mathrm{NN}}:
        \mathbb{R}^{1024}
        \to
        \mathbb{R}^{10}.
    \end{align}
\end{itemize}

In both settings, we fix the batch size to $K = 10$. The encoder design points $\{\alpha_k\}_{k=1}^K$ and worker evaluation points $\{\beta_j\}_{j=1}^N$ are placed uniformly over $[0, 1]$ according to~\eqref{eq:equispaced_decoder_points}:
\begin{align}
    \alpha_k
    &=
    \frac{k-1}{K-1},
    \qquad k \in [K],
    \\
    \beta_j
    &=
    \frac{j-1}{N-1},
    \qquad j \in [N].
\end{align}

To evaluate performance across different adversarial regimes, we scale the number of adversarial workers as
\begin{align}\label{eq:experimental_adversarial_scaling}
    q
    :=
    \left\lfloor
        N^\theta
    \right\rfloor,
    \qquad
    0 \leq \theta < 1.
\end{align}

\textbf{Evaluation Metric:}
We evaluate performance via the expected end-to-end mean squared error (MSE):
\begin{align}\label{eq:emp_metric}
    \mathrm{MSE}
    :=
    \mathbb{E}_{\mathbf{x} \sim \mathcal{X}} \left[
        \frac{1}{K}
        \sum_{k=1}^K
        \norm{
            \fhat(\mathbf{x}_k)
            -
            f(\mathbf{x}_k)
        }_2^2
    \right],
\end{align}
where $\mathcal{X}$ denotes the input data distribution. In practice, we approximate \eqref{eq:emp_metric} by computing the empirical average over $10$ independent experimental runs.

\begin{figure*}[t]
    \centering
    \begin{subfigure}[b]{0.48\textwidth}
        \centering
        \includegraphics[
            width=0.95\textwidth
        ]{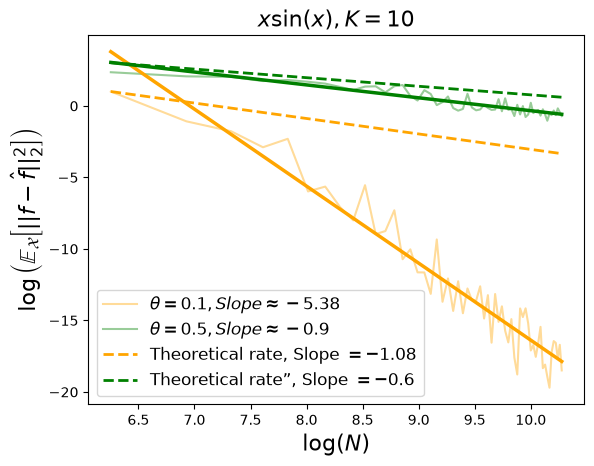}
        \caption{Scalar computation
        $f_{\mathrm{sc}}(x)=x\sin(x)$.}
        \label{fig:rate_xsinx}
    \end{subfigure}
    \hfill
    \begin{subfigure}[b]{0.48\textwidth}
        \centering
        \includegraphics[
            width=0.95\textwidth
        ]{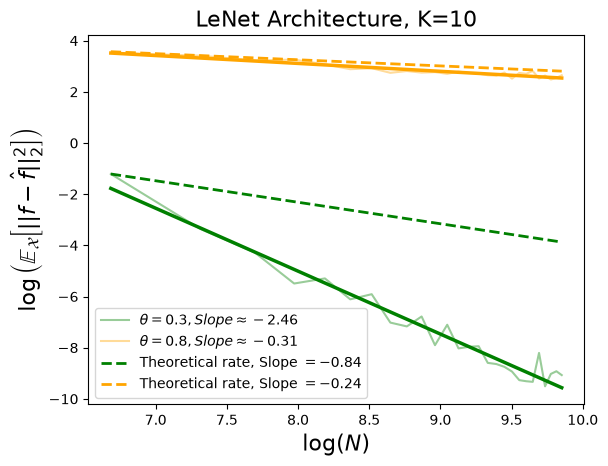}
        \caption{LeNet-5 inference.}
        \label{fig:rate_lenet}
    \end{subfigure}
    \caption{
        Approximation error as a function of the number of workers
        for adversarial budgets
        $q=\lfloor N^\theta\rfloor$.
        The light solid curves show the empirical errors, the dark
        solid lines show the fitted empirical slopes, and the dashed
        lines show the convergence-rate predictions from
        Corollary~\ref{cor:total_convergence_rate}.
    }
    \label{fig:rate_all}
\end{figure*}

\subsection{Adversarial Strategy}

For each encoder design point $\alpha_k$, the adversary selects worker
nodes whose decoder design points $\beta_j$ are closest to
$\alpha_k$. The total adversarial budget $q$ is distributed as evenly
as possible among the $K$ encoder design points. Consequently, approximately $q/K$
outputs near each $\alpha_k$ are corrupted.

The corrupted values are selected differently for the two computing
tasks.

\begin{itemize}
    \item \textit{Scalar nonlinear function:}
    For every selected worker $j$, the adversary replaces the honest
    output $f_{\mathrm{sc}}(\enc(\beta_j))$ by the largest admissible
    scalar value \(
        \overline{y}_j^A
        =
        T_a.
\) We set $T_a=10$ in this experiment.

    \item \textit{LeNet-5 inference:}
    Let
    \begin{align}
        c_j
        :=
        \operatorname*{arg\,max}_{i\in[m]}
        \left[
            f_{\mathrm{NN}}
            \bigl(
                \enc(\beta_j)
            \bigr)
        \right]_i
    \end{align}
    denote the class predicted by the honest worker output. The
    adversary selects a random different class $c_j^A$.
    It then replaces the honest output vector by
    \(
        \overline{\mathbf{y}}_j^A
        =
        \mathbf{e}_{c_j^{A}},
    \)
    where $\mathbf{e}_{c_j^{A}}\in\mathbb{R}^m$ is the corresponding
    standard basis vector. Thus, each corrupted worker outputs a
    one-hot vector concentrated on a class different from the class
    predicted by its honest output.
\end{itemize}

Both attacks concentrate the corrupted outputs near the locations at
which the final estimates are evaluated to inject maximum noise.  These attacks represent
particular admissible strategies and do not restrict the general
adversarial model considered in the theoretical analysis.

\subsection{Convergence-Rate Evaluation}

Under the scaling in
\eqref{eq:experimental_adversarial_scaling},
Corollary~\ref{cor:total_convergence_rate} gives
\(
    \objadv
    =
    \mathcal{O}(
        N^{-\frac{6}{5}(1-\theta)}
    ).
\)
Therefore, the predicted slope in a log-log plot of the approximation
error versus $N$ is
\(
    -\frac{6}{5}(1-\theta).
\)
Figure~\ref{fig:rate_all} compares these predicted slopes with the
slopes fitted to the empirical curves. 


In every considered regime, the empirical slope is more negative than
the corresponding theoretically guaranteed slope. Thus, over the
investigated range of worker populations, the approximation error
decays at least as rapidly as predicted by
Corollary~\ref{cor:total_convergence_rate}. As expected, increasing
$\theta$ slows the convergence because the adversarial population grows more rapidly with $N$.


\section{Conclusion and Future Directions}\label{sec:conclusion}

In this paper, we introduced an adversarially robust framework for general coded computing (\gcc{}) that mitigates the impact of malicious worker nodes through smoothness regularization rather than explicit error localization. By distributing computations via a smooth encoder and aggregating all returned outputs with a regularized decoder, we established a non-asymptotic worst-case error bound. This theoretical analysis proves that our proposed spline-based scheme achieves linear master-side complexity and guarantees a vanishing approximation error whenever the adversarial population grows sublinearly with the total number of workers. 

Several open questions and directions remain for future investigation. First, our current design assumes the adversarial budget is known \textit{a priori} in order to tune the optimal smoothing parameters. Developing adaptive procedures capable of estimating the effective corruption level directly from the received outputs would significantly enhance the framework's practical applicability. Second, future work should address the simultaneous presence of stragglers and adversarial workers, as missing observations and corrupted outputs distort the regularized decoder via fundamentally different mechanisms. Finally, establishing a matching minimax lower bound for the $\mathcal{O}((q/N)^{6/5})$ convergence rate would clarify whether our achievable exponent is strictly optimal, or if exploring higher-order regularization penalties and alternative reproducing kernels could yield sharper convergence rates.

\section*{Acknowledgment}
The work has been partially supported by the National Science Foundation under Grant CCF-2348638.

\bibliographystyle{ieeetr}

\section{Biographies}
\begin{IEEEbiographynophoto}{Parsa Moradi}
is a Ph.D. candidate in the Department of Electrical and Computer Engineering at the University of Minnesota Twin Cities. He received his B.Sc. and M.Sc. degrees from the Department of Electrical Engineering, Sharif University of Technology, Tehran, Iran, in 2016 and 2019, respectively. His research interests include deep learning, generative AI,  distributed machine learning, and AI for healthcare.
\end{IEEEbiographynophoto}

\begin{IEEEbiographynophoto}{Hanzaleh Akbarinodehi}
is a Ph.D. candidate in the Department of Electrical and Computer Engineering at the University of Minnesota Twin Cities. He received his B.Sc. and M.Sc. degrees from the Department of Electrical Engineering, Sharif University of Technology, Tehran, Iran, in 2016 and 2018, respectively. His research interests include information theory and coding, blockchain technology, game theory, and decentralized systems. 
\end{IEEEbiographynophoto}

\begin{IEEEbiographynophoto}{Mohammad Ali Maddah-Ali}
(IEEE Fellow, 2023) is an Associate Professor at the University of Minnesota Twin Cities. He received his B.Sc. degree in Electrical Engineering from Isfahan University of Technology, his M.A.Sc. degree from the University of Tehran, and his Ph.D. in Electrical and Computer Engineering from the University of Waterloo, Canada, in 2007.

From 2007 to 2008, he was with the Wireless Technology Laboratories at Nortel Networks, Ottawa, ON, Canada. He then held a Postdoctoral Fellowship at the Department of Electrical Engineering and Computer Sciences, University of California, Berkeley, from 2008 to 2010. From September 2010 to September 2020, he served as a Communication Research Scientist at Nokia Bell Labs, NJ, USA. 

Dr. Maddah-Ali is the recipient of several honors, including the NSERC Postdoctoral Fellowship (2007), the Best Paper Award at the IEEE International Conference on Communications (ICC) in 2014, the IEEE Communications Society and IEEE Information Theory Society Joint Paper Award in 2015, and the IEEE Information Theory Society Paper Award in 2016. He served as an Associate Editor for the IEEE Transactions on Information Theory (2019–2022) and as Lead Editor for the IEEE Journal on Selected Areas in Information Theory. He is currently a distinguished lecturer of the IEEE Information Theory Society.
\end{IEEEbiographynophoto}

\newpage
\appendices
\section{Additional Functional-Analytic Details}\label{app:functional_details}
\begin{definition}[Weak Derivatives and Vector-Valued Norms]
\label{def:weak_deriv}
A locally integrable scalar function $v$ has a weak derivative $w$ on $\Omega \subset \mathbb{R}$ if
\begin{align}
    \int_\Omega
    v(t)\phi'(t)
    \,dt
    =
    -
    \int_\Omega
    w(t)\phi(t)
    \,dt
\end{align}
for every infinitely differentiable test function $\phi$ with compact support in $\Omega$. Weak derivatives of vector-valued functions are defined coordinate-wise.
\end{definition}

\begin{definition}[$L^p$-norm]
\label{def:lp_norm}
For a vector-valued function $u=[u_1,\ldots,u_M]^T$, the $L^p$-norm on a domain $\Omega \subset \mathbb{R}$ is defined as follows:
\begin{align}
    \norm{u}_{L^p(\Omega;\mathbb{R}^M)}
    :=
    \left\{
        \int_\Omega
        \norm{u(t)}_2^p
        \,dt
    \right\}^{1/p},
\end{align}
for $1\leq p<\infty$, and 
\begin{align}
    \norm{u}_{L^\infty(\Omega;\mathbb{R}^M)}
    :=
   \operatorname*{ess\,sup}_{z\in\Omega} \norm{u_i(z)}_2,
\end{align}
when $p=\infty$.
\end{definition}

\begin{definition}[Local Sobolev spaces]
A function $g:\Omega\to\mathbb{R}^M$ belongs to local Sobolev space $\soblocm{r}{p}$ if
$g\in \soblocm{r}{p}$ for every compact set $V\subset \Omega$.
\end{definition}

\begin{lemma}[Theorem 8 \cite{moradi2026learning}]\label{th:interp_ineq2}
$\sobm{2}{2} \subseteq \sobm{1}{1} \subseteq \soblocm{1}{1}$.
\end{lemma}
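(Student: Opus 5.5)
The plan is to prove the two inclusions separately. Both rest on the fact that $\Omega$ is a bounded interval, so it has finite Lebesgue measure $\Leb(\Omega)<\infty$; after the normalization in Section~\ref{sec:nonasymptotic} we have $\Leb(\Omega)=1$. Throughout I work with the vector-valued $L^p$ norm of Definition~\ref{def:lp_norm}, which uses the pointwise Euclidean norm. Weak derivatives are taken coordinatewise, as in Definition~\ref{def:weak_deriv}.

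\emph{First inclusion, $\sobm{2}{2}\subseteq\sobm{1}{1}$.} Let $g\in\sobm{2}{2}$, so that $g$, $g'$ and $g''$ all lie in $\lpm{2}$. For any $h\in\lpm{2}$, I apply Cauchy--Schwarz to the scalar function $t\mapsto\norm{h(t)}_2$ against the constant $1$:
\begin{align}
    \norm{h}_{\lpm{1}}
    =\int_\Omega \norm{h(t)}_2\,dt
    \leq \Leb(\Omega)^{1/2}\,\norm{h}_{\lpm{2}}.
\end{align}
Applying this with $h=g$ and $h=g'$ gives $g,g'\in\lpm{1}$. The function $g'$ is the same weak derivative in both spaces, since Definition~\ref{def:weak_deriv} does not refer to $p$. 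Hence $g\in\sobm{1}{1}$, and summing the two estimates in \eqref{eq:sob_norm1} yields
\[
\norm{g}_{\sobm{1}{1}}\leq \Leb(\Omega)^{1/2}\bigl(\norm{g}_{\lpm{2}}+\norm{g'}_{\lpm{2}}\bigr)\leq \sqrt{2}\,\Leb(\Omega)^{1/2}\norm{g}_{\sobm{2}{2}}.
\]
So the embedding is continuous as well as set-theoretic. By the equivalence of norms in Section~\ref{sec:equiv_sob_norm}, the same conclusion holds if one uses the anchored norm \eqref{eq:sob_norm_eq}.

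\emph{Second inclusion, $\sobm{1}{1}\subseteq\soblocm{1}{1}$.} The definition of the local space should be read as: $g|_V\in\mathbb{W}^{1,1}(V;\mathbb{R}^M)$ for every compact $V\subset\Omega$, with weak derivatives taken on the interior of $V$. Fix such a $V$. I will check that $g'|_V$ is a weak derivative of $g|_V$. Any smooth test function $\varphi$ with compact support in $\operatorname{int}V$, extended by zero, is a smooth test function with compact support in $\Omega$. The integration-by-parts identity of Definition~\ref{def:weak_deriv} on $\Omega$ therefore reduces to the corresponding identity on $V$, because both integrands vanish outside $\operatorname{supp}\varphi$. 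Finally, $\int_V\norm{g}_2\leq\int_\Omega\norm{g}_2$ and $\int_V\norm{g'}_2\leq\int_\Omega\norm{g'}_2$, so $g|_V\in\mathbb{W}^{1,1}(V;\mathbb{R}^M)$.

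Neither step is deep. The only point that needs care is the restriction argument in the second inclusion: one must make explicit that weak differentiability passes to subdomains, and the zero-extension of test functions is the step I would write out in full. Everything else is Cauchy--Schwarz on a finite-measure set.
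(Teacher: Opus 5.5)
Your proof is correct and is the standard argument. The paper does not prove this lemma itself but imports it from Theorem~8 of the earlier straggler-setting GCC paper, and your two steps are what that inclusion comes down to: Cauchy--Schwarz on the finite-measure interval gives $g,g'\in L^1$, hence $\mathbb{W}^{2,2}\subseteq\mathbb{W}^{1,1}$ with an explicit constant, and restricting weak derivatives to subdomains gives the local inclusion. Your reading of the local space as $g|_V\in\mathbb{W}^{1,1}(\operatorname{int}V;\mathbb{R}^M)$ for every compact $V\subset\Omega$ also correctly repairs the paper's circular definition of $\soblocm{1}{1}$, and it is the form needed when Lemma~\ref{th:interp_ineq} is applied to $u_i'$.
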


\begin{definition}[Sobolev Space with Compact Support]\label{def:sob_compact}
Given bounded interval $\Omega = [a, b]$, Sobolev order $r \in \mathbb{N}$, and exponent $p \in [1,\infty)$, we define $W_0^{r,p}(\Omega; \mathbb{R}^M)$ as the subspace of functions $g \in W^{r,p}(\Omega; \mathbb{R}^M)$ satisfying vanishing  boundary conditions
\begin{equation}
g^{(j)}(a) = 0, \qquad \text{for all } j = 0, 1, \dots, m-1.
\end{equation}
This space is usually equipped with the norm based on the highest-order derivative:
\begin{equation}
\|g\|_{W_0^{r,p}(\Omega; \mathbb{R}^M)} := \|g^{(m)}\|_{L^p(\Omega; \mathbb{R}^M)}, \qquad 1 \le p < \infty,
\end{equation}
and for $p = \infty$,
\begin{equation}
\|g\|_{W_0^{r,\infty}(\Omega; \mathbb{R}^M)} := \|g^{(m)}\|_{L^\infty(\Omega; \mathbb{R}^M)}.
\end{equation}
When $p = 2$, this space is an RKHS, and we denote it by $\hilmz{m}$.
\end{definition}

\begin{remark}[Explicit kernel form]\label{rem:kernel}
The scalar reproducing kernel of $\hilz{2}$ for $\Omega=[0,1]$ is
\begin{align}
    \phi_0(s,t)
    = \int_0^1 (s-u)_+ (t-u)_+ \, du
    = \frac{\min(s,t)^2\,\bigl(3\max(s,t)-\min(s,t)\bigr)}{6},
\end{align}
and the reproducing kernel of $\hil{2}$ under the norm above is
$\phi(s,t) = 1 + st + \phi_0(s,t)$.
\end{remark}

\section{Proof of Lemmas}\label{app:lemma_proof}

\subsection{Proof of Lemma~\ref{lem:clean_spline_approximation}}
\label{proof_lem:clean_spline_approximation}
Define the smoothing-spline approximation error
\begin{align}\label{eq:spline_approximation_error}
    e(z)
    :=
    \smop{n}{\lambda}{h}(z)-h(z).
\end{align}
Since $\smop{n}{\lambda}{h},h\in\hild{2}{m}$, it follows immediately that
\begin{align}
    e\in\hild{2}{m}.
\end{align}

We first bound the supremum norm of $e(\cdot)$ using its $\lpd{2}{m}$ norm
and the $\lpd{2}{m}$ norm of its first derivative. 

Using the smoothing-spline operator defined in \eqref{eq:smoothing_spline}, we have
\begin{align}
    \smop{n}{\lambda}{h}
    =
    \operatorname*{arg\,min}_{u\in\hild{2}{m}}
    \left\{
        \frac{1}{n}
        \sum_{i=1}^n
        \norm{
            u(t_i)-h(t_i)
        }_2^2
        +
        \lambda
        \norm{u''}_{\lpd{2}{m}}^2
    \right\}.
\end{align}
For simplicity, let $$s:=\smop{n}{\lambda}{h}.$$ 
We first derive a useful orthogonality property of the
smoothing-spline error. For an arbitrary direction
$v\in\hild{2}{m}$, define the one-dimensional function
\begin{align}
    \Gamma_v(\varepsilon)
    &:=
    \frac{1}{n}
    \sum_{i=1}^n
    \norm{
        s(t_i)
        +
        \varepsilon v(t_i)
        -
        h(t_i)
    }_2^2
    +
    \lambda
    \norm{
        s''
        +
        \varepsilon v''
    }_{\lpd{2}{m}}^2,
    \qquad
    \varepsilon\in\mathbb{R}.
\end{align}
Because $s(\cdot)$ minimizes the smoothing-spline objective,
$\varepsilon=0$ minimizes $\Gamma_v(\varepsilon)$. Therefore,
\begin{align}
    \Gamma_v'(0)
    =
    0.
\end{align}
Differentiating the two quadratic terms gives
\begin{align}
    \Gamma_v'(0)
    &=
    \frac{2}{n}
    \sum_{i=1}^n
    \left\langle
        s(t_i)-h(t_i),
        v(t_i)
    \right\rangle
    +
    2\lambda
    \left\langle
        s'',v''
    \right\rangle_{\lpd{2}{m}},
\end{align}
where
\begin{align}
    \left\langle
        s'',v''
    \right\rangle_{\lpd{2}{m}}
    :=
    \int_0^1
    \left\langle
        s''(t),v''(t)
    \right\rangle
    dt.
\end{align}
Recalling that $e=s-h$ and dividing by two, we obtain the
first-order optimality condition
\begin{align}\label{eq:spline_error_first_order}
    \frac{1}{n}
    \sum_{i=1}^n
    \left\langle
        e(t_i),v(t_i)
    \right\rangle
    +
    \lambda
    \left\langle
        s'',v''
    \right\rangle_{\lpd{2}{m}}
    =
    0
\end{align}
for every $v\in\hild{2}{m}$. This is the standard variational
optimality condition for smoothing splines
\cite{wahba1975smoothing,wahba1990spline}.

Now fix an arbitrary vector $\mathbf{v}_0\in\mathbb{R}^m$ and choose
the constant perturbation
\begin{align}
    v(t)
    \equiv
    \mathbf{v}_0.
\end{align}
Since $v''(t)=\mathbf{0}$, condition
\eqref{eq:spline_error_first_order} becomes
\begin{align}
    0
    &=
    \frac{1}{n}
    \sum_{i=1}^n
    \left\langle
        e(t_i),\mathbf{\mathbf{v}_0}
    \right\rangle
    =
    \left\langle
        \frac{1}{n}
        \sum_{i=1}^n
        e(t_i),
        \mathbf{\mathbf{v}_0}
    \right\rangle.
\end{align}
Because this equality holds for every
$\mathbf{v}_0\in\mathbb{R}^m$, we must have
\begin{align}\label{eq:spline_error_zero_sum}
    \sum_{i=1}^n e(t_i)
    =
    \mathbf{0}.
\end{align}
Equivalently, for every coordinate $\ell\in[m]$,
\begin{align}\label{eq:coordinate_error_zero_sum}
    \sum_{i=1}^n e_\ell(t_i)
    =
    0.
\end{align}

We next show that every coordinate error $e_\ell$ vanishes somewhere
on $[0,1]$. If $e_\ell(t_i)=0$ for some $i$, we simply take
$z_\ell=t_i$. Otherwise, all the values
$\{e_\ell(t_i)\}_{i=1}^n$ are nonzero. In this case,
\eqref{eq:coordinate_error_zero_sum} implies that they cannot all
have the same sign. Hence, there exist two consecutive design points
$t_i$ and $t_{i+1}$ such that
\begin{align}
    e_\ell(t_i)e_\ell(t_{i+1})
    <
    0.
\end{align}
Since $e_\ell\in H^2(0,1)\subset H^1(0,1)$, it has a continuous,
absolutely continuous representative on $[0,1]$. The intermediate value theorem therefore
ensures the existence of a point
\begin{align}
    z_\ell
    \in
    [t_i,t_{i+1}]
\end{align}
such that
\begin{align}\label{eq:coordinate_error_root}
    e_\ell(z_\ell)
    =
    0.
\end{align}

Because $e_\ell\in H^1(0,1)$, the function $e_\ell^2$ is absolutely
continuous and satisfies
\begin{align}
    \frac{d}{dt}
    e_\ell(t)^2
    =
    2e_\ell(t)e_\ell'(t)
\end{align}
for almost every $t\in(0,1)$. Hence, for every $z\in[0,1]$,
\begin{align}
    |e_\ell(z)|^2
    &=
    \left|
        e_\ell(z)^2
        -
        e_\ell(z_\ell)^2
    \right|
    \nonumber\\
    &=
    2
    \left|
        \int_{z_\ell}^{z}
        e_\ell(t)e_\ell'(t)\,dt
    \right|
    \nonumber\\
    &\leq
    2
    \int_0^1
    |e_\ell(t)|
    |e_\ell'(t)|
    \,dt
    \nonumber\\
    &\leq
    2
    \norm{e_\ell}_{\lp{2}}
    \norm{e_\ell'}_{\lp{2}},
\label{eq:coordinate_error_supremum}
\end{align}
where the last step follows from the Cauchy--Schwarz inequality.

Taking the supremum over $z$ in
\eqref{eq:coordinate_error_supremum}, summing over
$\ell\in[m]$, and applying the Cauchy--Schwarz inequality to the
coordinate index give
\begin{align}
    \norm{e}_{\lpd{\infty}{m}}^2
    &\leq
    \sum_{\ell=1}^m
    \norm{e_\ell}_{\lp{\infty}}^2
    \nonumber\\
    &\leq
    2
    \sum_{\ell=1}^m
    \norm{e_\ell}_{\lp{2}}
    \norm{e_\ell'}_{\lp{2}}
    \nonumber\\
    &\leq
    2
    \norm{e}_{\lpd{2}{m}}
    \norm{e'}_{\lpd{2}{m}}.
\label{eq:spline_error_supnorm_bound}
\end{align}
Taking square roots yields the slightly sharper estimate
\begin{align}
    \norm{e}_{\lpd{\infty}{m}}
    \leq
    \sqrt{2}
    \norm{e}_{\lpd{2}{m}}^{1/2}
    \norm{e'}_{\lpd{2}{m}}^{1/2}.
\end{align}
In particular, since $\sqrt{2}\leq2$,
\begin{align}
    \norm{e}_{\lpd{\infty}{m}}
    \leq
    2
    \norm{e}_{\lpd{2}{m}}^{1/2}
    \norm{e'}_{\lpd{2}{m}}^{1/2}.
\label{eq:spline_error_interpolation}
\end{align}

Next, we bound the two quantities on the right-hand side of \eqref{eq:spline_error_interpolation}. Recall from Definition~\ref{def:spacing} that for any ordered design set $\mathcal{T}=\{t_1,\ldots,t_n\}$ where $t_1 < \ldots < t_n$, the terms $\delmax{\tau}$ and $\delmin{\tau}$ denote the maximum and minimum consecutive spacings in $\mathcal{T}$, respectively.

Authors in ~\cite{ragozin1983error} have shown that, for a
scalar target function $g\in H^2(\Omega;\mathbb{R})$ and its
smoothing-spline approximation $\smop{n}{\lambda}{g}$ over $\mathcal{T}$,
there exist constants $G_0,G_1>0$ such that
\begin{align}\label{eq:proof_scalar_spline_error_j0}
    \norm{g-\smop{n}{\lambda}{g}}_{\lp{2}}^2
    \leq
    G_0
    L_{\mathcal{T}}
    \norm{g''}_{\lp{2}}^2
\end{align}
and
\begin{align}\label{eq:proof_scalar_spline_error_j1}
    \norm{(g-\smop{n}{\lambda}{g})'}_{\lp{2}}^2
    \leq
    G_1
    L_{\mathcal{T}}^{1/2}
    \left(
        1+\frac{L_{\mathcal{T}}}{16}
    \right)^{1/2}
    \norm{g''}_{\lp{2}}^2,
\end{align}
where
\begin{align}\label{eq:proof_LT_definition}
    L_{\mathcal{T}}
    &:=
    p_2\left(
        \frac{
            \delmax{\mathcal{T}}
        }{
            \delmin{\mathcal{T}}
        }
    \right)
    \frac{
        n\delmax{\mathcal{T}}
    }{4}
    \lambda
    +
    D
    \delmax{\mathcal{T}}^4.
\end{align}
Here, $p_2$ is a degree-two polynomial with positive coefficients,
and $D>0$ is an absolute constant.

The vector-valued smoothing-spline objective separates across its
$m$ output coordinates. Applying
\eqref{eq:proof_scalar_spline_error_j0} and
\eqref{eq:proof_scalar_spline_error_j1} to each pair
$(h_\ell,s_\ell)$ and summing over $\ell\in[m]$ gives
\begin{align}
    \norm{e}_{\lpd{2}{m}}^2
    &\leq
    G_0
    L_{\mathcal{T}}
    \norm{h''}_{\lpd{2}{m}}^2,
\label{eq:proof_vector_spline_error_j0}\\
    \norm{e'}_{\lpd{2}{m}}^2
    &\leq
    G_1
    L_{\mathcal{T}}^{1/2}
    \left(
        1+\frac{L_{\mathcal{T}}}{16}
    \right)^{1/2}
    \norm{h''}_{\lpd{2}{m}}^2,
\label{eq:proof_vector_spline_error_j1}
\end{align}
where $L_{\mathcal{T}}$ is the design factor in
\eqref{eq:proof_LT_definition} for
\begin{align}
    \mathcal{T}
    =
    \{
        t_1,\ldots,t_n
    \}.
\end{align}

For the equidistant design points
\(
    t_i=(i-1)/(n-1),
\)
we have
\begin{align}
    t_{i+1}-t_i
    =
    \frac{1}{n-1},
    \qquad
    i\in[n-1].
\end{align}
Hence,
\begin{align}
    \delmax{\mathcal{T}}
    =
    \delmin{\mathcal{T}}
    =
    \frac{1}{n-1},
\end{align}
and therefore
\begin{align}
    L_{\mathcal{T}}
    &=
    p_2(1)
    \frac{n}{4(n-1)}
    \lambda
    +
    \frac{D}{(n-1)^4}
    \nonumber\\
    &\lec{(a)}{\leq}
    \frac{p_2(1)}{2}
    \lambda
    +
    16D n^{-4}
    \nonumber\\
    &\leq
    C_L
    \left(
        \lambda+n^{-4}
    \right),
\label{eq:proof_equidistant_Ln}
\end{align}
where $C_L>0$ is independent of $n$ and $\lambda$ and (a) follows from
\(
    \frac{n}{n-1}
    \leq
    2
    \text{ and }
    \frac{1}{(n-1)^4}
    \leq
    16n^{-4},
    \text{ for } n\geq2.
\)

Combining \eqref{eq:spline_error_supnorm_bound},
\eqref{eq:proof_vector_spline_error_j0}, and
\eqref{eq:proof_vector_spline_error_j1}, we obtain
\begin{align}
    \norm{e}_{\lpd{\infty}{m}}^2
    &\leq
    2
    \norm{e}_{\lpd{2}{m}}
    \norm{e'}_{\lpd{2}{m}}
    \nonumber\\
    &\leq
    2
    \left(
        G_0G_1
    \right)^{1/2}
    L_{\mathcal{T}}^{3/4}
    \left(
        1+\frac{L_{\mathcal{T}}}{16}
    \right)^{1/4}
    \norm{h''}_{\lpd{2}{m}}^2.
\label{eq:proof_uniform_spline_intermediate}
\end{align}

Since $0<\lambda\leq1$, it follows from
\eqref{eq:proof_equidistant_Ln} that
\begin{align}
    L_{\mathcal{T}}
    \leq
    2C_L.
\end{align}
Consequently,
\begin{align}
    \left(
        1+\frac{L_{\mathcal{T}}}{16}
    \right)^{1/4}
    \leq
    \left(
        1+\frac{2C_L}{16}
    \right)^{1/4},
\end{align}
which is independent of $n$, $\lambda$, and $h$. Absorbing this
factor and the constants $G_0$, $G_1$, and $C_L$ into a single
constant gives
\begin{align}
    \norm{e}_{\lpd{\infty}{m}}^2
    &\leq
    C_{\mathrm{sp}}
    \left(
        \lambda+n^{-4}
    \right)^{3/4}
    \norm{h''}_{\lpd{2}{m}}^2
    \nonumber\\
    &\leq
    C_{\mathrm{sp}}
    \left(
        \lambda^{3/4}
        +
        n^{-3}
    \right)
    \norm{h''}_{\lpd{2}{m}}^2,
\end{align}
where the last inequality follows from
\(
    (a+b)^{3/4}
    \leq
    a^{3/4}+b^{3/4}
    \text{ for }
    a,b\geq0.
\)

Finally, using $e=\smop{n}{\lambda}{h}-h$, we conclude that
\begin{align}
    \sup_{z\in\Omega}
    \norm{
        \smop{n}{\lambda}{h}(z)-h(z)
    }_2^2
    \leq
    C_{\mathrm{sp}}
    \left(
        \lambda^{3/4}
        +
        n^{-3}
    \right)
    \norm{h''}_{\lpd{2}{m}}^2.
\end{align}
This proves the lemma.

\subsection{Proof of Lemma~\ref{lem:composition_regularity}}
\label{proof_lem:composition_regularity}

To bound the second derivative of the composite function, we make use of the following Gagliardo--Nirenberg-type interpolation inequality on bounded intervals.

\begin{lemma}[{\cite[Theorem~7.34]{leoni2024first}}]
\label{th:interp_ineq}
Let $\Omega \subseteq \mathbb{R}$ be an open interval and let $g \in \soblocm{1}{1}$. Assume $1 \leq p, v, r \leq \infty$ and $r \geq v$. Then, for every $0 < \ell < \Leb(\Omega)$,
\begin{align}\label{eq:leoni_interp}
    \norm{g}_{\lpm{r}}
    \leq
    \ell^{\frac{1}{r}-\frac{1}{v}}\norm{g}_{\lpm{v}}
    +
    \ell^{1 - \frac{1}{p}+\frac{1}{r}}\norm{g'}_{\lpm{p}}.
\end{align}
\end{lemma}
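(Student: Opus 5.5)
The inequality is a standard additive interpolation estimate on an interval, and I would prove it directly by a covering and averaging argument rather than through any Sobolev embedding. Throughout I use the pointwise Euclidean norm, so the vector-valued case reduces to scalar-type estimates. Since $g\in\soblocm{1}{1}$, $g$ has a locally absolutely continuous representative, and for any $x,y\in\Omega$ with $[x,y]$ compactly inside $\Omega$,
\begin{align}
    g(x)=g(y)+\int_y^x g'(s)\,ds .
\end{align}

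\textbf{Step 1: a pointwise bound on a subinterval.} I would partition $\Omega$, up to a remainder handled below, into consecutive subintervals $I$ of length between $\ell$ and $2\ell$; this is possible because $\ell<\Leb(\Omega)$. On each such $I$, I average the identity above over $y\in I$. This gives, for $x\in I$,
\begin{align}
    \norm{g(x)}_2\le \frac{1}{\Leb(I)}\int_I\norm{g}_2+\int_I\norm{g'}_2 ,
\end{align}
and Hölder turns this into
\begin{align}
    \norm{g}_{L^\infty(I)}\le \ell^{-1/v}\norm{g}_{L^v(I)}+\ell^{1-1/p}\norm{g'}_{L^p(I)} ,
\end{align}
with harmless constants that come from $\Leb(I)\in[\ell,2\ell]$.

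\textbf{Step 2: pass from $L^\infty$ to $L^r$ and sum over subintervals.} Next I multiply by $\Leb(I)^{1/r}\lesssim\ell^{1/r}$ to bound $\norm{g}_{L^r(I)}$. I then raise to the power $r$ and sum over the subintervals, using $(a+b)^r\le 2^{r-1}(a^r+b^r)$. Since $r\ge v$ and $r\ge p$ (or using $\ell^r$-summability of the local pieces, with $\sum_I x_I^{r}\le(\sum_I x_I^{v})^{r/v}$ because $r/v\ge1$), the sums of local $L^v$ and $L^p$ norms are controlled by the global norms. Note that the $p$-term needs $r\ge p$ in the same way. If the cited statement does not allow that, I would instead keep the $L^\infty$-type local bound and rely on $\ell^{1-1/p+1/r}$ absorbing it. This step yields the claimed exponents $\ell^{1/r-1/v}$ and $\ell^{1-1/p+1/r}$.

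\textbf{Main obstacle: the sharp constant $1$.} The hard step is obtaining constant $1$ with no factors of $2$. A crude partition loses these factors, so I would refine Step 1. For each $x$, I choose a single interval $I_x\ni x$ of length exactly $\ell$ inside $\Omega$ and use the exact average, which gives coefficients $\ell^{-1/v}$ and $\ell^{1-1/p}$. For the $L^r$ assembly, I would avoid splitting powers by using Minkowski's inequality in the form $\norm{A+B}_{L^r}\le\norm{A}_{L^r}+\norm{B}_{L^r}$, where $A(x)$ and $B(x)$ are the two local terms. I would then bound $\norm{A}_{L^r}$ via a Fubini/Young-type convolution estimate, $A$ being an average of $\norm{g}_2$ over windows of length $\ell$. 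Young's inequality with kernel $\ell^{-1}\mathbf{1}_{[0,\ell]}$ from $L^v$ to $L^r$ gives exactly $\ell^{1/r-1/v}$. Similarly, $B(x)\le\int_{I_x}\norm{g'}_2$ is a window integral, and Young's inequality from $L^p$ to $L^r$ gives $\ell^{1-1/p+1/r}$. Young's inequality requires compatible exponents, so I expect the bookkeeping here, including the endpoint cases $r=\infty$ and $p=1$, to be the delicate part. Alternatively, since this is a cited textbook result (Leoni, Theorem~7.34), one may simply invoke it.
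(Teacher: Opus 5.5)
Your exact-window estimate is the right starting point. For each $x$, pick an interval $J\ni x$ of length exactly $\ell$ inside $\Omega$ and average $g(x)=g(y)+\int_y^x g'$ over $y\in J$. This gives
\[
\norm{g}_{L^\infty(\Omega)}\le A+B,\qquad A:=\ell^{-1/v}\norm{g}_{L^v(\Omega)},\quad B:=\ell^{1-1/p}\norm{g'}_{L^p(\Omega)},
\]
which is the case $r=\infty$ with constant $1$. The gap is the passage to finite $r$. Both of your assembly routes (summing over a partition, or Minkowski plus Young) bound a ``$g$-part'' and a ``$g'$-part'' separately in $L^r$, and that cannot give the statement as written.

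First, the statement imposes no relation between $p$ and $r$. When $p>r$, the derivative part $x\mapsto\int_{I_x}\norm{g'(s)}_2\,ds$ is not bounded in $L^r$ by $\norm{g'}_{L^p}$ at all. Young's inequality from $L^p$ to $L^r$ needs $r\ge p$, so this is a genuine failure, not bookkeeping. For example, take $\Omega=\mathbb{R}$, $v=1$, $r=2$, $p=\infty$ and $g(x)=\sin(x^3)/(1+x^2)$. Then $g\in L^1$ and $g'\in L^\infty$, but $\int_x^{x+\ell}\abs{g'}\approx 6\ell/\pi$ for large $\abs{x}$. So that term is not in $L^2$, while the right-hand side of the lemma is finite.

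Second, even when $p\le r$, on a bounded interval the window $I_x$ cannot be a single fixed translate of $[0,\ell]$. Hence $x\mapsto\ell^{-1}\int_{I_x}\norm{g}_2$ is not a convolution. Its Young/Schur constant involves $\sup_y\Leb\{x:y\in I_x\}$, which in general exceeds $\ell$, because points where the window choice switches are over-counted. So the sharp constant $1$ is lost again. Relying on $\ell^{1-1/p+1/r}$ ``absorbing'' the local bound does not fill either hole.

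The missing idea is to interpolate instead of split. Since $v\le r$, we have $\int_\Omega\norm{g}_2^r\le\norm{g}_{L^\infty}^{r-v}\int_\Omega\norm{g}_2^v$, and $\norm{g}_{L^v}=\ell^{1/v}A$. Therefore
\[
\norm{g}_{L^r}\le (A+B)^{1-\frac{v}{r}}\,\ell^{\frac1r}A^{\frac{v}{r}}\le \ell^{\frac1r}(A+B)=\ell^{\frac1r-\frac1v}\norm{g}_{L^v}+\ell^{1-\frac1p+\frac1r}\norm{g'}_{L^p}.
\]
This gives constant $1$ for every $p\in[1,\infty]$, on bounded and unbounded intervals alike, with no partition and no convolution estimate; it is the standard argument for this inequality.

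The paper itself gives no proof and simply cites Leoni, Theorem~7.34, which matches your fallback. As a self-contained argument, however, your proposal is incomplete without this interpolation step.
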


We now proceed with the proof of Lemma~\ref{lem:composition_regularity}. Let $g := f \circ u = [g_1, \ldots, g_m]^\top$, where $u = [u_1, \ldots, u_d]^\top \in \hild{2}{d}$. By the second-order chain rule for Sobolev functions, for each $j \in [m]$ and almost every $z \in \Omega$, we have
\begin{align}
    g_j''(z)
    &=
    u'(z)^\top
    D^2f_j(u(z))
    u'(z)
    +
    \nabla f_j(u(z))^\top
    u''(z).
\end{align}
Applying the triangle inequality, the Cauchy--Schwarz inequality, and the definition of the operator norm, together with the regularity bounds in Assumption~\textnormal{(A2)}, yields
\begin{align}
    \abs{g_j''(z)}
    &\leq
    \norm{D^2f_j(u(z))}_{\op} \norm{u'(z)}_2^2
    +
    \norm{\nabla f_j(u(z))}_2 \norm{u''(z)}_2
    \nonumber\\
    &\leq
    H_j\norm{u'(z)}_2^2
    +
    G_j\norm{u''(z)}_2.
\end{align}
Squaring both sides and using the elementary inequality $(a+b)^2 \leq 2a^2 + 2b^2$, we obtain
\begin{align}
    \abs{g_j''(z)}^2
    \leq
    2H_j^2\norm{u'(z)}_2^4
    +
    2G_j^2\norm{u''(z)}_2^2.
\end{align}
Integrating over $\Omega$ and summing over $j \in [m]$, while recalling that $H_f^2 = \sum_{j=1}^m H_j^2$ and $G_f^2 = \sum_{j=1}^m G_j^2$, gives
\begin{align}\label{eq:composition_pre_embedding}
    \norm{g''}_{\lpd{2}{m}}^2
    =
    \sum_{j=1}^m
    \int_\Omega \abs{g_j''(z)}^2 \, dz
    \leq
    2H_f^2
    \int_\Omega
    \norm{u'(z)}_2^4 \, dz
    +
    2G_f^2
    \norm{u''}_{\lpd{2}{d}}^2.
\end{align}

Next, we bound the quartic integral $\int_\Omega \norm{u'(z)}_2^4 \, dz$ using Lemma~\ref{th:interp_ineq}. First, observe that by Minkowski's inequality in $\lpd{2}{}$,
\begin{align}\label{eq:quartic_minkowski}
    \int_\Omega \norm{u'(z)}_2^4 \, dz
    =
    \int_\Omega
    \left(
        \sum_{i=1}^d \abs{u_i'(z)}^2
    \right)^2 dz
    &=
    \norm{
        \sum_{i=1}^d (u_i')^2
    }_{\lpd{2}{}}^2
    \nonumber \\ &\leq
    \left(
        \sum_{i=1}^d \norm{(u_i')^2}_{\lpd{2}{}}
    \right)^2
    =
    \left(
        \sum_{i=1}^d \norm{u_i'}_{\lpd{4}{}}^2
    \right)^2.
\end{align}
Since $u \in \hild{2}{d}$, each coordinate $u_i$ satisfies 
\begin{align}
    u_i' \in \sobmd{1}{1}{d} \lec{(a)}{\subseteq} \soblocmd{1}{1}{d},
\end{align}
where (a) follows from Lemma~\ref{th:interp_ineq2}.

Fix any $\ell \in (0, \Leb(\Omega))$ and apply Lemma~\ref{th:interp_ineq} to $u_i'$ with $r = 4$ and $p = v = 2$:
\begin{align}
    \norm{u_i'}_{\lpd{4}{}}
    &\leq
    \ell^{-1/4}\norm{u_i'}_{\lpd{2}{}}
    +
    \ell^{3/4}\norm{u_i''}_{\lpd{2}{}}
    \nonumber\\
    &\leq
    C_\ell
    \left(
        \norm{u_i'}_{\lpd{2}{}}
        +
        \norm{u_i''}_{\lpd{2}{}}
    \right),
\end{align}
where $C_\ell := \max\{\ell^{-1/4}, \ell^{3/4}\}$ (for instance, since $\Leb(\Omega) = 1$, choosing $\ell = 1/2$ gives $C_{\ell} = 2^{1/4}$). Squaring both sides and applying $(a+b)^2 \leq 2a^2 + 2b^2$ yields
\begin{align}
    \norm{u_i'}_{\lpd{4}{}}^2
    \leq
    2C_\ell^2
    \left(
        \norm{u_i'}_{\lpd{2}{}}^2
        +
        \norm{u_i''}_{\lpd{2}{}}^2
    \right).
\end{align}
Summing over $i \in [d]$ gives
\begin{align}
    \sum_{i=1}^d \norm{u_i'}_{\lpd{4}{}}^2
    &\leq
    2C_\ell^2
    \left(
        \norm{u'}_{\lpd{2}{d}}^2
        +
        \norm{u''}_{\lpd{2}{d}}^2
    \right)
    \nonumber \\ &\lec{(a)}{\leq}
    2C_\ell^2
    \norm{u}_{\sobmd{2}{2}{d}}^2
    \nonumber \\ &\lec{(b)}{\leq}
    2C_\ell^2\theta^2_\text{sob}
    \norm{u}_{\hild{2}{d}}^2,
\end{align}
 
where (a) follows from $
        \norm{u'}_{\lpd{2}{d}}^2
        +
        \norm{u''}_{\lpd{2}{d}}^2
     \leq \norm{u}_{\sobmd{2}{2}{d}}^2$ and (b) comes from the equivalency of $\norm{\cdot}_{\sobmd{2}{2}{d}}$ and $\norm{\cdot}_\hild{2}{d}$ with a factor $\theta_\text{sob}$ (see Section~\ref{sec:equiv_sob_norm} for more details).

Substituting this bound into~\eqref{eq:quartic_minkowski}, we obtain
\begin{align}\label{eq:quartic_final_bound}
    \int_\Omega \norm{u'(z)}_2^4 \, dz
    \leq
    C_\ell^4\theta^4_\text{sob}
    \norm{u}_{\hild{2}{d}}^4.
\end{align}

Finally, substituting~\eqref{eq:quartic_final_bound} and the bound $\norm{u''}_{\lpd{2}{d}}^2 \leq \norm{u}_{\hild{2}{d}}^2$ into~\eqref{eq:composition_pre_embedding} yields
\begin{align}
    \norm{g''}_{\lpd{2}{m}}^2
    &\leq
    8C_\ell^4\theta^4_\text{sob} H_f^2
    \norm{u}_{\hild{2}{d}}^4
    +
    2G_f^2
    \norm{u}_{\hild{2}{d}}^2
    \nonumber\\
    &\leq
    2\max\{C_\ell^4\theta^4_\text{sob}, 1\}
    \max\{H_f^2, G_f^2\}
    \left(
        \norm{u}_{\hild{2}{d}}^2
        +
        4\norm{u}_{\hild{2}{d}}^4
    \right)
    \nonumber\\
    &=
    C_{\mathrm{c}}
    \max\{H_f^2, G_f^2\}
    \psi\left(
        \norm{u}_{\hild{2}{d}}^2
    \right),
\end{align}
where $C_{\mathrm{c}} := 2\max\{C^4_\ell\theta^4_\text{sob}, 1\}$. This completes the proof.
\subsection{Proof of Lemma~\ref{lem:kernel_upperbound}}\label{proof_lem:kernel_upperbound}
Using
\(
    |\sin v|\leq1
\)
and
\(
    |\cos v|\leq1,
\)
we have
\begin{align}
    |\Phi(u,v)|
    \leq
    4e^{-u},
    \qquad
    u\geq0.
\end{align}
Applying this bound to \eqref{eq:proof_equivalent_kernel} gives
\begin{align}
    \abs{K_\lambda(x,t)}
    &\leq
    \frac{1}{
        \sqrt{2}\lambda^{1/4}
    }
    e^{-\frac{|x-t|}
    {\sqrt{2}\lambda^{1/4}}}
    +
    \frac{4}{
        \sqrt{2}\lambda^{1/4}
    }
    \left(
        e^{-\frac{x+t}
        {\sqrt{2}\lambda^{1/4}}}
        +
        e^{-\frac{2-x-t}
        {\sqrt{2}\lambda^{1/4}}}
    \right)
    \nonumber\\
    &\leq
    \frac{9}{\sqrt{2}}
    \lambda^{-1/4},
\end{align}
which proves \eqref{eq:kernel_upperbound}.

\subsection{Proof of Lemma~\ref{lem:kernel_weight_distance}}
\label{proof_lem:kernel_weight_distance}

To establish Lemma~\ref{lem:kernel_weight_distance}, we first relate the equivalent kernel $K_\lambda$ to the continuous Green's function $G_\lambda$, and then bound the discrepancy between $G_\lambda$ and its finite-sample counterpart $G_{N,\lambda}$.

\begin{lemma}[{\cite[Theorem~4.1]{messer1993new}}]
\label{lem:kernel_green_distance}
There exists a constant $C_{\mathrm{M}} > 0$ such that for every $\lambda \in (0, 1]$,
\begin{align}\label{eq:kernel_green_distance}
    \sup_{x,t\in[0,1]}
    \abs{
        K_\lambda(x,t) - G_\lambda(x,t)
    }
    \leq
    C_{\mathrm{M}}
    \lambda^{-1/4}
    \exp\left(
        -\frac{1}{\sqrt{2}}
        \lambda^{-1/4}
    \right).
\end{align}
\end{lemma}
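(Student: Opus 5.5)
The plan is to identify $G_\lambda$ as the Green's function of the Euler--Lagrange boundary value problem for \eqref{eq:continuous_smoothing_problem}. Then we show that $K_\lambda$ is an explicit approximate Green's function whose defect is concentrated at the boundary and is exponentially small. Write $\rho:=\sqrt{2}\lambda^{1/4}$. The first variation of \eqref{eq:continuous_smoothing_problem} gives, for $v=v_\lambda[h]$,
\begin{align}
    \lambda v''''+v=h \quad\text{on } (0,1),
    \qquad
    v''(0)=v'''(0)=v''(1)=v'''(1)=0 .
\end{align}
Hence, for each fixed $t$, $x\mapsto G_\lambda(x,t)$ solves $\lambda G''''+G=\delta_t$ with these natural boundary conditions. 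Concretely, it solves the homogeneous equation for $x\neq t$, is $C^2$ across $x=t$, and its third derivative jumps by $1/\lambda$ there. The characteristic roots of $\lambda r^4+1=0$ are $r=(\pm1\pm i)/\rho$. The homogeneous solutions are therefore spanned by $e^{\mp x/\rho}\cos(x/\rho)$ and $e^{\mp x/\rho}\sin(x/\rho)$, or equivalently by decaying exponentials anchored at $x=0$ and at $x=1$.

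The key steps, in order, are as follows.

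\textbf{Step 1: interpret the three pieces of $K_\lambda$.} The first line of \eqref{eq:proof_equivalent_kernel} is the whole-line fundamental solution of $\lambda\partial_x^4+1$. The term $\Phi\bigl(\tfrac{x+t}{\rho},\tfrac{x-t}{\rho}\bigr)/(2\rho)$ is a homogeneous solution in $x$, decaying away from $0$. It is chosen so that the free kernel plus this term satisfies $v''(0)=v'''(0)=0$ exactly; this is the half-line $[0,\infty)$ Green's function. Symmetrically, the $\Phi\bigl(\tfrac{2-x-t}{\rho},\tfrac{t-x}{\rho}\bigr)$ term fixes the boundary conditions at $x=1$. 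I would verify these identities by direct differentiation, which is routine.

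\textbf{Step 2: bound the residual.} Since every piece of $K_\lambda$ except the free kernel is smooth in $x$, the difference $D(x):=G_\lambda(x,t)-K_\lambda(x,t)$ solves the homogeneous equation $\lambda D''''+D=0$ on all of $(0,1)$. Its boundary data are minus the boundary values of $K_\lambda$. By Step 1, these reduce to the derivatives of the opposite-end correction. At $x=1$ this is the $x=0$ image term, of size $\rho^{-1-k}e^{-(1+t)/\rho}\le\rho^{-1-k}e^{-1/\rho}$ for the $k$th derivative, $k=2,3$. Symmetrically, the data at $x=0$ carry $e^{-(2-t)/\rho}$.

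\textbf{Step 3: solve for $D$.} Write
\begin{align}
    D(x)=\sum_{i=1}^4 c_i\,\omega_i(x),
\end{align}
where $\omega_1,\omega_2=e^{-x/\rho}\{\cos,\sin\}(x/\rho)$ and $\omega_3,\omega_4=e^{-(1-x)/\rho}\{\cos,\sin\}((1-x)/\rho)$. Imposing the four boundary conditions and rescaling the $k$th-derivative rows by $\rho^{k}$ gives a $4\times4$ system. Its matrix is $\mathrm{diag}(B_0,B_1)$ with invertible, $\rho$-independent $2\times2$ blocks (the half-line problems), plus off-diagonal blocks of size $O(e^{-1/\rho})$. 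For $\rho\le\rho_0$ the matrix is therefore uniformly invertible, so $|c_i|\le C\rho^{-1}e^{-1/\rho}$. Since $|\omega_i|\le1$ on $[0,1]$, this yields
\begin{align}
    |D(x)|\le C\rho^{-1}e^{-1/\rho},
\end{align}
which is the claimed bound $C_{\mathrm M}\lambda^{-1/4}\exp\bigl(-\lambda^{-1/4}/\sqrt2\bigr)$ up to the constant. For the remaining range $\rho\in[\rho_0,\sqrt2]$, i.e. $\lambda$ in a compact subset of $(0,1]$, both kernels are bounded uniformly. This follows from continuity of the Green's function in $\lambda$, which is unique by strict convexity of the problem. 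The right-hand side is bounded below there, so enlarging $C_{\mathrm M}$ covers this range.

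I expect the main obstacle to be Step 3. One must track the $\rho$-scaling of the derivative rows carefully so that the block-diagonal limit matrix is genuinely $\rho$-independent and nonsingular. One must also confirm that the residuals carry exactly one factor $\rho^{-1}$ after the normalization, uniformly in $t\in[0,1]$. I would first compute $B_0$ explicitly from $\omega_1,\omega_2$ and their second and third derivatives at $0$ and check that its determinant is nonzero. The block $B_1$ then follows by the symmetry $x\mapsto1-x$.
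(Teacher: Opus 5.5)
Your proof is correct, but it does not follow the paper. The paper gives no argument for this lemma and imports it directly from Theorem~4.1 of Messer and Goldstein (1993).

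You derive the bound from scratch. You read $K_\lambda$ as the whole-line fundamental solution of $\lambda\partial_x^4+1$ plus two half-line image corrections. Then $G_\lambda-K_\lambda$ is a homogeneous solution forced only by each image term's leakage to the \emph{opposite} endpoint, at distance at least $1$. This explains both the factor $e^{-1/\rho}=\exp(-\lambda^{-1/4}/\sqrt2)$ and the prefactor $\rho^{-1}\asymp\lambda^{-1/4}$.

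The computations you left as routine do check out:
\begin{itemize}
\item With $a=x/\rho$, $b=t/\rho$ and $t>0$, the bracketed free-kernel expression has second and third $a$-derivatives at $a=0$ equal to $2e^{-b}(\sin b-\cos b)$ and $-4e^{-b}\cos b$. Those of $\Phi(a+b,a-b)$ are exactly their negatives.
\item The free kernel's third-derivative jump is $4/\rho^4=1/\lambda$.
\item $B_0$ has rows $(0,-2)$ and $(2,2)$, so $\det B_0=4$.
\end{itemize}

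What your route buys over the citation is a self-contained check that the kernel as transcribed in the paper is right. This covers the reflected arguments $2-x-t$ and $t-x$ on $[0,1]$, and the constant $1/\sqrt2$ in the exponent. The cost is a page of calculus. The citation buys brevity, but the reader must trust the transcription and the normalization of $\lambda$.

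Two small points would tighten your argument:
\begin{itemize}
\item For $\rho\in[\rho_0,\sqrt2]$ you do not need continuity of $G_\lambda$ in $\lambda$. Your $4\times4$ matrix $M(\rho)$ is nonsingular for every $\rho>0$, because a homogeneous solution with natural boundary conditions satisfies $\lambda\norm{D''}_{L^2}^2+\norm{D}_{L^2}^2=0$. It is also continuous in $\rho$. So Step~3 yields the bound for all $\lambda\in(0,1]$ at once.
\item For $t\in\{0,1\}$ the source sits on the boundary, and the Step~1 identity fails there. At $t=0$ the third derivative at $x=0^+$ equals the jump $1/\lambda$ rather than $0$. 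Obtain those values by continuity of both kernels on $[0,1]^2$.
\end{itemize}
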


Next, we relate the continuous Green's function $G_\lambda$ to the finite-sample smoothing spline weight function $G_{N,\lambda}$.

\begin{lemma}[{\cite[Theorem~2.1]{nychka1995splines}}]
\label{lem:discrete_green_distance}
Under equidistant design points, there exist constants $\mu_0, \mu_1, \mu_2 > 0$ such that whenever $N\lambda^{1/4} > \mu_0$, we have for all $x, t \in [0,1]$,
\begin{align}\label{eq:discrete_green_distance}
    \abs{
        G_{N,\lambda}(x,t) - G_\lambda(x,t)
    }
    &\leq
    \frac{
        \mu_0 \mu_1 N^{-1}\lambda^{-1/4}
    }{
        1 - \mu_0 N^{-1}\lambda^{-1/4}
    }
    \lambda^{-1/4}
    \exp\left(
        -\mu_2
        \abs{x-t}
        \lambda^{-1/4}
    \right).
\end{align}
\end{lemma}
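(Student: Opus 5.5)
The plan is a perturbation argument that compares the discrete smoothing operator with the continuous one. The correction is expanded as a Neumann series whose ratio is the quadrature error of the equidistant design, measured relative to the kernel bandwidth $\lambda^{1/4}$. The geometric form $\frac{\rho}{1-\rho}$ with $\rho=\mu_0N^{-1}\lambda^{-1/4}$ in \eqref{eq:discrete_green_distance} suggests this route.

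\textbf{Step 1 (operator identities).} Let $F_N$ be the empirical measure $\frac1N\sum_{j}\delta_{\beta_j}$ and $F$ the Lebesgue measure on $[0,1]$. The first-order conditions for \eqref{eq:smoothing_spline} and \eqref{eq:continuous_smoothing_problem} give a resolvent form for both problems. The continuous Green's function satisfies
\[
\int_0^1 G_\lambda(x,s)v(s)\,ds+\lambda\langle G_\lambda(x,\cdot)'' ,v''\rangle = v(x)
\]
for all $v\in\hil{2}$. The discrete weight function satisfies the same identity with $ds$ replaced by $dF_N(s)$.

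\textbf{Step 2 (exact comparison).} Testing the discrete identity against $G_\lambda(\cdot,t)$, the continuous one against $G_{N,\lambda}(x,\cdot)$, and subtracting yields the exact relation
\[
G_{N,\lambda}(x,t)-G_\lambda(x,t)=\int_0^1 G_{N,\lambda}(x,s)\,G_\lambda(s,t)\,d(F-F_N)(s).
\]
Iterating this relation produces the Neumann series
\[
G_{N,\lambda}-G_\lambda=\sum_{k\ge1}\mathcal{E}^k G_\lambda,
\]
where $(\mathcal{E}H)(x,t):=\int G_\lambda(x,s)H(s,t)\,d(F-F_N)(s)$.

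\textbf{Step 3 (single-step bound).} Two ingredients are needed.
\begin{enumerate}
\item[(i)] Bounds on $G_\lambda$ itself. Via Lemma~\ref{lem:kernel_green_distance} and the explicit form \eqref{eq:proof_equivalent_kernel}, $|G_\lambda(x,t)|\le c\lambda^{-1/4}e^{-\mu|x-t|\lambda^{-1/4}}$. Differentiating \eqref{eq:proof_equivalent_kernel} also gives $|\partial_s G_\lambda(x,s)|\le c\lambda^{-1/2}e^{-\mu|x-s|\lambda^{-1/4}}$.
\item[(ii)] A quadrature estimate for the equidistant design. For a function $\varphi$ of bounded variation, $\bigl|\int\varphi\,d(F-F_N)\bigr|\le N^{-1}\,\mathrm{TV}(\varphi)$, up to a constant.
\end{enumerate}
Applying (ii) to $\varphi(s)=G_\lambda(x,s)H(s,t)$, where $H$ has the form $\lambda^{-1/4}e^{-\mu'|s-t|\lambda^{-1/4}}$, gives a total variation of order $\lambda^{-1/4}$ times the sup-norm of $H$, with exponential localization preserved. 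The convolution of two exponentials of rates $\mu,\mu'$ in $\lambda^{-1/4}$ units stays exponential with rate $\min(\mu,\mu')$, at the cost of a polynomial factor that is absorbed by slightly reducing the rate. The conclusion is that $\mathcal{E}$ maps the class $\{|H|\le A\lambda^{-1/4}e^{-\mu_2|x-t|\lambda^{-1/4}}\}$ into itself with $A$ multiplied by $\mu_0N^{-1}\lambda^{-1/4}$.

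\textbf{Step 4 (summing).} The $k$-th term is then bounded by $\mu_1(\mu_0N^{-1}\lambda^{-1/4})^k\lambda^{-1/4}e^{-\mu_2|x-t|\lambda^{-1/4}}$. Summing the geometric series, which converges exactly when $N\lambda^{1/4}>\mu_0$, gives \eqref{eq:discrete_green_distance}.

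The main obstacle is Step 3. It requires uniform exponential decay of $G_\lambda$ and of its first derivative up to the boundary, including the reflected boundary terms $\Phi$. It also requires a decay rate $\mu_2$ that survives repeated convolution without shrinking with $k$. I would handle this by fixing $\mu_2<\tfrac{1}{\sqrt2}$ once and using $\int e^{-a|x-s|-a|s-t|}\,ds\le C_a' e^{-a'|x-t|}$ for any $a'<a$, which gives a $k$-independent loss. The boundary terms in $K_\lambda$ decay in $x+t$ and $2-x-t$, which dominate $|x-t|$, so they obey the same estimate.
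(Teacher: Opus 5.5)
\paragraph{Verdict.} Your overall strategy is sound, but the key step (Step~3) fails as stated, and the bounds on $G_\lambda$ that it relies on are not justified. The paper does not prove this lemma; it imports it from the cited theorem, so the comparison below is with the argument on its own merits.

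\paragraph{What works.} The architecture is the natural one and explains the $\rho/(1-\rho)$ shape of the bound: the exact identity of Step~2, a Neumann series in $F-F_N$, and the quadrature bound $\abs{\int\varphi\,d(F-F_N)}\le\sup\abs{F-F_N}\,\mathrm{TV}(\varphi)$, with $\sup\abs{F-F_N}\le 1/N$ for $\beta_j=(j-1)/(N-1)$. Your rate bookkeeping is also correct: fixing $\mu_2<1/\sqrt2$ costs only a $k$-independent constant.

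\paragraph{Main gap: the contraction claim in Step~3.} The class $\{\abs{H(s,t)}\le A\lambda^{-1/4}e^{-\mu_2\abs{s-t}\lambda^{-1/4}}\}$ controls only the size of $H$. The quadrature bound, however, needs $\mathrm{TV}_s$ of $G_\lambda(x,s)H(s,t)$, and hence the variation of $H(\cdot,t)$.
\begin{itemize}
\item Your estimate ``total variation of order $\lambda^{-1/4}$ times the sup-norm of $H$'' holds when $H$ \emph{equals} the envelope, i.e.\ for $H=G_\lambda$ at the first step.
\item It fails for functions merely bounded by the envelope, and that is all you know about the iterates $\mathcal{E}^kG_\lambda$.
\end{itemize}
A counterexample: take $H(s,t)=\lambda^{-1/4}e^{-\mu_2\abs{s-t}\lambda^{-1/4}}\cos(2\pi(N-1)s)$. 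It lies in the class with $A=1$ and equals the envelope at every $\beta_j$. For interior $t$, the Lebesgue integral in $(\mathcal{E}H)(t,t)$ is $O(N^{-1}\lambda^{-1/2})$. The discrete sum, by contrast, is a Riemann sum for $\int G_\lambda(t,s)\lambda^{-1/4}e^{-\mu_2\abs{s-t}\lambda^{-1/4}}\,ds\asymp\lambda^{-1/4}$. So $\abs{(\mathcal{E}H)(t,t)}\asymp\lambda^{-1/4}$, not $O(N^{-1}\lambda^{-1/2})$, and the claimed contraction fails.

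\paragraph{How to repair Step~3.} The class must also carry a derivative bound, such as $\abs{\partial_sH(s,t)}\le B\lambda^{-1/2}e^{-\mu_2\abs{s-t}\lambda^{-1/4}}$. You then have to show it propagates through $\partial_x(\mathcal{E}H)(x,t)=\int\partial_xG_\lambda(x,s)H(s,t)\,d(F-F_N)(s)$. There are two ways.
\begin{itemize}
\item Integrate by parts once more. This needs $\abs{\partial_x\partial_sG_\lambda(x,s)}\lesssim\lambda^{-3/4}e^{-\mu\abs{x-s}\lambda^{-1/4}}$, and it then contracts both constants $A$ and $B$ by $\rho=N^{-1}\lambda^{-1/4}$.
\item Bound it crudely by the total mass of $\abs{F-F_N}$. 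This gives $B_k\lesssim A_{k-1}$ and $A_k\lesssim\rho(A_{k-1}+B_{k-1})$. The resulting two-step recursion still sums to $O(\rho)$, and the stated form follows after enlarging $\mu_0$.
\end{itemize}

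\paragraph{Second gap: the source of your bounds on $G_\lambda$.} Lemma~\ref{lem:kernel_green_distance} compares $K_\lambda$ and $G_\lambda$ only in sup-norm.
\begin{itemize}
\item It does yield the pointwise envelope for $G_\lambda$, since $e^{-\lambda^{-1/4}/\sqrt2}\le e^{-\abs{x-t}\lambda^{-1/4}/\sqrt2}$ on $[0,1]$.
\item Differentiating \eqref{eq:proof_equivalent_kernel}, however, bounds $\partial_sK_\lambda$, not $\partial_sG_\lambda$. It says nothing about the mixed derivative either.
\end{itemize}
These derivative bounds must be proved for the exact Green's function. One route is its explicit form as the solution of $\lambda v''''+v=\delta_x$ on $[0,1]$ with $v''=v'''=0$ at both endpoints, a combination of $e^{\pm s/(\sqrt2\lambda^{1/4})}$ times sines and cosines. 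Another is a derivative analogue of Lemma~\ref{lem:kernel_green_distance}.

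\paragraph{Two smaller points.}
\begin{itemize}
\item Your Step~2 identity has $G_{N,\lambda}$ on the left of the product. Rewriting it as $G_{N,\lambda}=G_\lambda+\mathcal{E}G_{N,\lambda}$ uses the symmetry of both reproducing kernels, which you should state.
\item ``Iterating'' leaves a remainder $\mathcal{E}^KG_{N,\lambda}$. It is harmless once you have a genuine contraction $q<1$ in the weighted norm. $G_{N,\lambda}$ and $\partial_sG_{N,\lambda}$ are bounded on the compact square, and the weights are at most $e^{\mu_2\lambda^{-1/4}}$. So $\|G_{N,\lambda}\|<\infty$ and $\|\mathcal{E}^KG_{N,\lambda}\|\le q^K\|G_{N,\lambda}\|\to0$.
\end{itemize}
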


We now combine Lemmas~\ref{lem:kernel_green_distance} and~\ref{lem:discrete_green_distance} to complete the proof. Set $c_0 := 2\mu_0 > 0$. Since $\declamb \in (0, 1]$ and $N\declamb^{1/4} > c_0$, we have $N\declamb^{1/4} > 2\mu_0 > \mu_0$. Therefore, the condition of Lemma~\ref{lem:discrete_green_distance} holds. Moreover, we have
\begin{align}
    \mu_0 N^{-1}\declamb^{-1/4} < \frac{1}{2},
\end{align}
which implies $1 - \mu_0 N^{-1}\declamb^{-1/4} > 1/2$. Consequently, the prefactor in~\eqref{eq:discrete_green_distance} is bounded above by
\begin{align}\label{eq:prefactor_bound}
    \frac{
        \mu_0 \mu_1 N^{-1}\declamb^{-1/4}
    }{
        1 - \mu_0 N^{-1}\declamb^{-1/4}
    }
    <
    \frac{\mu_1 / 2}{1 / 2}
    =
    \mu_1.
\end{align}

By the triangle inequality, for all $x, t \in [0, 1]$,
\begin{align}\label{eq:correct_kernel_triangle}
    \abs{
        G_{N,\declamb}(x,t) - K_{\declamb}(x,t)
    }
    \leq
    \abs{
        G_{N,\declamb}(x,t) - G_{\declamb}(x,t)
    }
    +
    \abs{
        G_{\declamb}(x,t) - K_{\declamb}(x,t)
    }.
\end{align}
Applying Lemmas~\ref{lem:kernel_green_distance} and~\ref{lem:discrete_green_distance} together with~\eqref{eq:prefactor_bound} to~\eqref{eq:correct_kernel_triangle} yields
\begin{align}
    \abs{
        G_{N,\declamb}(x,t) - K_{\declamb}(x,t)
    }
    &\leq
    \frac{
        \mu_0 \mu_1 N^{-1}\declamb^{-1/4}
    }{
        1 - \mu_0 N^{-1}\declamb^{-1/4}
    }
    \declamb^{-1/4}
    \exp\left(
        -\mu_2
        \abs{x-t}
        \declamb^{-1/4}
    \right)
    +
    C_{\mathrm{M}}
    \declamb^{-1/4}
    \exp\left(
        -\frac{1}{\sqrt{2}}
        {\declamb}^{-1/4}
    \right)
    \nonumber\\
    &\leq
    \mu_1 {\declamb}^{-1/4}
    \exp\left(
        -\mu_2
        \abs{x-t}
        {\declamb}^{-1/4}
    \right)
    +
    C_{\mathrm{M}}
    {\declamb}^{-1/4}
    \exp\left(
        -\frac{1}{\sqrt{2}}
        {\declamb}^{-1/4}
    \right).
\end{align}
Defining $C_W := \mu_1$ and $\kappa := \mu_2$ gives~\eqref{eq:kernel_weight_distance} and completes the proof.

\end{document}